\definecolor{light-gray}{gray}{0.95}
\newcommand{\code}[1]{\colorbox{light-gray}{\texttt{#1}}}
\newif\ifconsiderlater
\newif\ifSupp
\newif\ifRebuttal
    \newcommand{\Rebuttal}[1]{{\color{red}{#1}}}
    \newcommand{\Rebuttal}[1]{{\color{black}{#1}}}
    \newcommand{\AMFComment}[1]{{\color{red}{[AMF: #1]}}}
    \newcommand{\AMComment}[1]{{\color{purple}{[AM: #1]}}}
    \newcommand{\todo}[1]{{\color{cyan} \textbf{XXX [#1] XXX}}}
    \newcommand{\AMFComment}[1]{}
    \newcommand{\YPComment}[1]{}
    \newcommand{\AMComment}[1]{}
    \newcommand{\todo}[1]{}
\def\eqref#1{equation~\ref{#1}}
\def\Eqref#1{Equation~\ref{#1}}
\def\eps{{\epsilon}}
\DeclareMathAlphabet{\mathsfit}{\encodingdefault}{\sfdefault}{m}{sl}
\SetMathAlphabet{\mathsfit}{bold}{\encodingdefault}{\sfdefault}{bx}{n}
\DeclareMathOperator*{\argmax}{arg\,max}
\DeclareMathOperator{\sign}{sign}
\newcommand{\eqdef}{\triangleq}
\newcommand{\diag}{\mathop{\textrm{diag}}}
\newcommand{\ra}{\rightarrow}
\newcommand{\Real}{\mathbb R}
\newcommand{\Integer}{\mathbb Z}
\newcommand{\cset}[2]{\left\{\,#1\,:\,#2\,\right\}}
\newcommand{\set}[1]{\left\{#1\right\}}
\newcommand{\ip}[2]{\left \langle\,#1\,,\,#2\, \right \rangle}
\newcommand{\norm}[1]{\left\Vert#1\right\Vert}
\newcommand{\abs}[1]{\left\vert#1\right\vert}
\newcommand{\tx}{\tilde{x}}
\newcommand{\tw}{\tilde{w}}
\newcommand{\tSigma}{\tilde{\Sigma}}
\newcommand{\tsigma}{\tilde{\sigma}}
\newcommand{\tmu}{\tilde{\mu}}
\newcommand{\te}{\tilde{e}}
\newcommand{\EE}[1]{{\mathbb E}\left[#1\right]}
\newtheorem{thm}{Theorem}[section]
\newtheorem{lem}[thm]{Lemma}
\newtheorem{prop}[thm]{Proposition}
\newtheorem{cor}[thm]{Corollary}
\newtheorem{rem}[thm]{Remark}
\newtheorem{claim}[thm]{Claim}
\newlength{\Oldarrayrulewidth}
\title{Understanding the robustness difference between stochastic gradient descent and adaptive gradient methods}
\author{\name Avery Ma \email ama@cs.toronto.edu \\
      \addr University of Toronto \\
      Vector Institute
      \AND
      \name Yangchen Pan \email yangchen.pan@eng.ox.ac.uk \\
      \addr University of Oxford
      \AND
      \name Amir-massoud Farahmand \email farahmand@cs.toronto.edu\\
      \addr University of Toronto \\
      Vector Institute
      }
\begin{document}

\maketitle

\begin{abstract}
Stochastic gradient descent (SGD) and adaptive gradient methods, such as Adam and \mbox{RMSProp}, have been widely used in training deep neural networks.
We empirically show that while the difference between the standard generalization performance of models trained using these methods is small, those trained using SGD exhibit far greater robustness under input perturbations.
Notably, our investigation demonstrates the presence of irrelevant frequencies in natural datasets, where alterations do not affect models' generalization performance.
However, models trained with adaptive methods show sensitivity to these changes, suggesting that their use of irrelevant frequencies can lead to solutions sensitive to perturbations.
To better understand this difference, we study the learning dynamics of gradient descent (GD) and sign gradient descent (signGD) on a synthetic dataset that mirrors natural signals. 
With a three-dimensional input space, the models optimized with GD and signGD have standard risks close to zero but vary in their adversarial risks.
Our result shows that linear models' robustness to $\ell_2$-norm bounded changes is inversely proportional to the model parameters' weight norm: a smaller weight norm implies better robustness.
In the context of deep learning, our experiments show that SGD-trained neural networks have smaller Lipschitz constants, explaining the better robustness to input perturbations than those trained with adaptive gradient methods.
Our source code is available at \mbox{\url{https://github.com/averyma/opt-robust}}.

\end{abstract}
\section{Introduction}
Adaptive gradient methods, such as Adam \citep{kingma2014adam} and RMSProp \citep{hinton2012neural}, are a family of popular techniques to optimize machine learning (ML) algorithms. 
They are an extension of the traditional gradient descent method, which uses the gradient of a differentiable objective function to update the model's parameters in the direction that improves the objective. 
To speed up the optimization procedure, the adaptive gradient methods 
introduce a coordinate-wise learning rate to
adjust the update for each parameter based on its individual gradient.
%
%
%
%
Previous empirical work investigates the difference in the standard generalization between models trained using SGD and adaptive gradient methods \citep{wilson2017marginal, agarwal2020disentangling},
while recent efforts have focused on understanding the implicit bias of SGD \citep{gunasekar2017implicit, soudry2018implicit, lyu2019gradient} and adaptive gradient algorithms \citep{qian2019implicit, wang2021implicit}.

Nevertheless, our result shows that in practice such a gap in the standard generalization is relatively small, in contrast to the difference between
the robustness of models trained using those algorithms.
While more ML-based systems are deployed in the real world, the models' robustness, their ability to maintain their performance when faced with noisy or corrupted inputs, has become an important criterion.
%
There is a large volume of literature on developing specialized methods to improve the robustness of neural networks \citep{silva2020opportunities}, yet practitioners still simply use standard methods to train their models.
In fact, a recent survey shows that only 3 of the 28 organizations have developed their ML-based systems with the improvement in robustness in mind \citep{kumar2020adversarial}.
%
%
Therefore, this motivates us to understand the effect of optimizers on the robustness of models obtained in the standard training regime.
In particular, we focus on models trained using SGD and adaptive gradient methods.
Note that our primary focus lies in understanding the robustness difference, and robustification falls outside the scope of our work.

\begin{figure}[t]
  \centering
  \includegraphics[width=\linewidth]{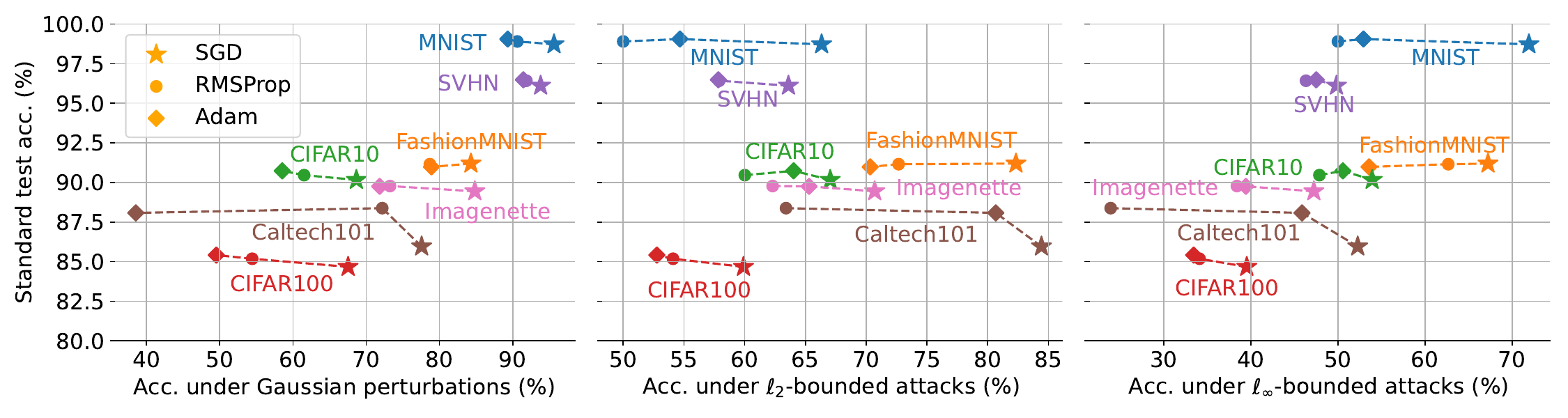}
  \caption{\textbf{Comparison between models trained using SGD, Adam, and RMSProp across seven benchmark datasets.} 
  Each colored triplet denotes models on the same dataset.
  Models trained by different algorithms have similar standard generalization performance, but there is a distinct robustness difference as measured by the test data accuracy under Gaussian noise, $\ell_2$ and $\ell_\infty$ bounded adversarial perturbations \citep{croce2020reliable}. 
  Results are averaged over three independent model initializations and trainings.}
  \label{fig:teaser_comparison}
\end{figure}

\subsection{The Robustness Difference between Models Trained by Different Algorithms}
As a first step, we compare how models, trained with SGD, Adam, and RMSProp, differ in their standard generalization and robustness on seven benchmark datasets \citep{lecun1998mnist, xiao2017fashion, krizhevsky2009learning, netzer2011reading, imagenette, FeiFei2004LearningGV}.
In our experiments, 
we evaluate standard generalization using the accuracy of the trained classifier on the original test dataset.
%
%
To measure robustness, 
%
we consider the classification accuracy on the test dataset perturbed by Gaussian noise, $\ell_2$ and $\ell_\infty$ bounded adversarial perturbations \citep{croce2020reliable}.
%
%
%
We follow the default Pytorch configuration to train all the models and sweep through a wide range of learning rates.
The final model is selected with the highest validation accuracy.
Models are trained in a vanilla setting in which 
data augmentations are limited to random flipping.
Additional discussions on batch normalization~\citep{ioffe2015batch}, data augmentation, optimization schedules, and network designs are detailed in Appendix~\ref{sec:appendix_training_details}.
Appendix~\ref{sec:appendix_complete_results} presents comprehensive results from the experiment depicted in Figure~\ref{fig:teaser_comparison}, including the approach for selecting perturbations for each dataset.  
\Rebuttal{While our primary experiments are centered around models based on convolutional neural networks, within the computer vision domain, we also extend our analysis to include results from experiments on Vision Transformers~\citep{dosovitskiy2020image} and an audio dataset~\citep{warden2018speech}. 
The results of these additional experiments are consistent with the findings presented in this section and are detailed in Appendix~\ref{sec:appendix_complete_results}.
Additionally, visualizations of the perturbations can be found in Appendix~\ref{sec:appendix_additional_figure}.}

Figure~\ref{fig:teaser_comparison} compares the models trained with SGD and the adaptive gradient methods, pointing to two important observations. 
First, the relatively small vertical differences among the three models, on a given dataset, show that the models have similar standard generalization performance despite being trained by different algorithms. 
On the other hand, we observe, under all three types of perturbations, a large horizontal span with SGD always positioned on the far right side among the three.
This indicates that models trained by SGD significantly outperform models trained by 
the other two
in terms of their robustness against perturbations.

\subsection{Contributions}
Previous optimization work often studies how the structure of the dataset affects the dynamics of learning.
For example, some focus on a dataset with different feature strengths \citep{amari2020does, pezeshki2021gradient}, while others 
 assume a linearly separable dataset \citep{wilson2017marginal, gunasekar2017implicit, soudry2018implicit}.
In our work, we investigate how the frequency characteristics of the dataset impact the robustness of models trained by SGD and adaptive gradient methods.
%
We make the following contributions:
\begin{itemize}[itemsep=0mm, , topsep=0mm, leftmargin=10mm]
    \item We demonstrate that natural datasets contain irrelevant frequencies, which, when removed, have negligible effects on standard generalization performance (Sec.~\ref{sec:observation_on_irrelevant_frequency}).
    \item We also observe that neural networks trained by different algorithms can have very different robustness against perturbations in the direction of the irrelevant frequencies (Sec.~\ref{sec:observation_on_model_robustness}).
    \item Those observations lead to our claim that models only need to learn how to correctly use relevant information in the data to optimize the training objective, and because their use of the irrelevant information is under-constrained, it can lead to solutions sensitive to perturbations (Sec.~\ref{sec:claim}).
    \item Our analysis of linear models on least square regression shows that linear models' robustness to $\ell_2$-norm bounded changes is inversely proportional to the model parameters' weight norm: a smaller weight norm implies better robustness (Sec.~\ref{sec:linear_problem_setup}).
    \item We study the learning dynamics of GD and signGD, a memory-free version of Adam and RMSProp, with linear models. With a three-dimensional input space, the analysis shows that models optimized with GD exhibit a smaller weight norm compared to their signGD counterparts (Sec.~\ref{sec:linear_analysis_gd_signgd}).
    \item To generalize this result in the deep learning setting, we demonstrate that neural networks trained by Adam and RMSProp often have a larger Lispchitz constant and, consequently, are more prone to perturbations (Sec.~\ref{sec:lipschitzness_of_neural_networks}).
\end{itemize}

Specifically, in the analysis of linear models, we design a least square regression task using a synthetic dataset whose frequency representation mimics the natural datasets.
This setting allows us to 
i) mathematically define the standard and adversarial population risks, 
ii) design a learning task that has multiple optima for the standard population risk, each with a different adversarial risk, and 
iii) theoretically analyze the learning dynamics of various algorithms.
\section{Background}
In this section, we briefly review the essential background to help understand our work. 
Specifically, we discuss formulations of adaptive gradient methods, 
previous work on the adversarial robustness of the model,
and methods of representing signals in the frequency domain.

\subsection{Optimizations with Adaptive Gradient Algorithms}\label{sec:related_optimization}
Consider the empirical risk minimization problem with an objective of the form 
$\mathcal{L}(w) = \frac{1}{N}\sum_{n=1}^N \ell(X_n, Y_n; w)$,
where $w \in \Real^d$ is a vector of weights of a model, 
$\{ (X_n, Y_n) \}_{n=1}^N$
is the training dataset and $\ell(x,y;w)$ is the point-wise loss quantifying the performance of the model on data point $(x,y)$.
A common approach in training machine learning models is to reduce the loss via 
SGD which iteratively updates the model based on a mini-batch of data points drawn uniformly and independently from the training set: 
\begin{linenomath*}
\begin{equation}
    \label{eq:sgd_update}
    g(w) = \frac{1}{|\mathcal{B}|}\sum_{n\in\mathcal{B}} \nabla_w\ell(X_n,Y_n;w),
\end{equation}
\end{linenomath*}
where $\mathcal{B} \subset \set{1, ..., N}$ denotes the minibatch and has a size of $|\mathcal{B}| \ll N$. The update rule of SGD is $w(t+1) = w(t) - \eta(t) g(w(t))$, where $\eta(t) \in \Real^{+}$ denotes the learning rate.\footnote{From this point forward, subscripts denote vector/matrix coordinates, and numbers in parentheses denote update iterations unless otherwise specified.}

A family of adaptive gradient methods has been used to accelerate training by updating the model parameters based on a coordinate-wise scaling of the original gradients.
Methods such as Adam and RMSprop have demonstrated significant acceleration in training deep neural networks \citep{wilson2017marginal}.
Many adaptive gradient methods can be written as
%
\begin{linenomath*}
\begin{align}\label{eq:adaptive_update}
    m(t+1) &= \beta_1 g(w(t)) + (1-\beta_1) m(t) \nonumber \\
    v(t+1) &= \beta_2 g(w(t))^2 + (1-\beta_2) v(t) \nonumber \\
    w(t+1) &= w(t) - \eta(t) \frac{m(t+1)}{\sqrt{v(t+1)}+\eps},
\end{align}
\end{linenomath*}
where $g(w(t))$ is the stochastic estimate of gradient used by SGD (\ref{eq:sgd_update}),
$m$ and $v$ are the first and second-order memory terms with their strength specified by $\beta_1$ and $\beta_2$, and $\eps$ is a small constant used to avoid division-by-zero.
Such a general form has been widely used to study the dynamics of adaptive gradient algorithm \citep{wilson2017marginal, da2020general, ma2021qualitative}.
For example, Adam corresponds to $\beta_1, \beta_2 \in (0,1)$, and RMSProp is recovered when $\beta_1 = 1$ and $\beta_2 \in (0,1)$.
Notice that such updates rely on the history of past gradients, and this makes the precise understanding and analysis of adaptive gradient methods more challenging \citep{duchi2011adaptive}.
Recent work analyzes the learning dynamics of adaptive gradient methods by separately considering the direction and the magnitude of the update \citep{kingma2014adam,balles2018dissecting,ma2021qualitative}. 
As a simple example, to demonstrate how adaptive gradient methods can potentially accelerate learning compared to the vanilla SGD,
consider a memory-free version of (\ref{eq:adaptive_update})
with $\beta_1 = \beta_2 = 1$ and $\eps = 0$.
It is easy to see that the update rule in (\ref{eq:adaptive_update}) becomes sign gradient descent:
\begin{linenomath*}
\begin{align}\label{eq:signGD_update}
    w(t+1) &= w(t) - \eta(t) \sign(g(w(t))) \nonumber \\
            &= w(t) - \vec{\eta}(t) \odot g(w(t)),
\end{align}
\end{linenomath*}
where $\odot$ denotes Hadamard product, $\vec{\eta}(t) \in \Real^{d}$ is a coordinate-wise learning rate based on the absolute value of the weight, i.e., $\vec{\eta}(t) = \frac{\eta(t)}{|g(w(t))|}$.
Therefore, $\vec{\eta}(t)$ accounts for the magnitude of the weight and a larger learning rate is used for parameters with smaller gradients.

In general, gradient-sign-based optimization methods are not successful in training deep learning models \citep{riedmiller1993direct, ma2021qualitative}, nevertheless, methods such as signGD can shed light on the learning dynamics of adaptive gradient methods \citep{karimi2016linear, balles2018dissecting,moulay2019properties}. 
For example, recent work by \cite{ma2021qualitative} studies the behavior of adaptive gradient algorithms in the continuous-time limit. They demonstrate that under a fixed $\beta_1$ and $\beta_2$, the memory effect for both Adam and RMSprop diminishes and the continuous-time limit of the two algorithms follows the dynamics of signGD flow.
In this work, the deep learning models on which we observe the robustness difference are trained using Adam and RMSProp, with the exception of Sec.~\ref{sec:linear_analysis}, where we focus on signGD, a memory-free version of Adam and RMSProp, and gradient descent to help us understand the robustness gap between models trained using SGD and adaptive gradient methods in a simple setting.


\subsection{Robustness of ML Models}\label{sec:related_robustness}
An important assumption of most modern ML models is that samples from the training and testing dataset are independent and identically distributed (i.i.d.); 
however, samples collected in the real world rarely come from an identical distribution as the training data, as they are often subject to noise.
%
It is known that ML models can achieve impressive success on the original testing dataset, but exhibit a sharp drop in performance under perturbations \citep{szegedy2014intriguing}.
Such an observation has posed concerns about the potential vulnerabilities for real-world ML applications such as healthcare \citep{qayyum2020secure}, autonomous driving \citep{deng2020analysis} and audio systems \citep{li2020practical}.
Models' robustness performance has become an important secondary metric in the empirical evaluation of new training methods, 
such as data augmentations \citep{zhang2018mixup, hendrycksaugmix, verma2019manifold, ma2022sage} and robust optimization techniques \citep{zhai2021doro}.
Generally, the robustness property of models is assessed by examining the model performance under multiple perturbations \citep{ding2018mma, shen2021towards, kuang2018stable}.
A wide variety of approaches have been proposed to improve the robustness of the model through regularizations \citep{goodfellow2014explaining, simon2019first,wen2020towards,ma2020soar, foret2021sharpness, wei2023sharpness},
data augmentation \citep{madry2017towards, rebuffi2021fixing, gowal2021improving, ma2022sage},
and novel network architectures \citep{wu2021wider, ma2021improving, huang2021exploring}.
However, most industry practitioners are yet to come to terms with improving security in developing ML systems \citep{kumar2020adversarial}.
Since SGD, Adam, and RMSProp have been the go-to optimizer in both academic and industrial settings, this motivates us to understand the robustness of models trained by them and built on the standard training pipelines, i.e., minimizing some losses on the original training set.

\subsection{Frequency Representation of Signals}\label{sec:related_DCT}
Natural signals are highly structured \citep{schwartz2001natural}. They often consist of statistically significant (or insignificant) patterns with a large amount of predictability (or redundancy). Such a phenomenon has been observed in both natural images \citep{ruderman1994statistics, simoncelli1997statistical, huang1999statistics} and natural audio signals \citep{mcaulay1986speech, attias1996temporal, turner2010statistical}.
To understand the structure of signals and identify patterns from them, one technique is to decompose the signal into multiples of ``harmonics'' or ``overtones'': a superposition of periodic waves with varying amplitudes and in varying phases.
For example, \cite{fourier1822analytical} first proposed to analyze complicated heat equations using well-understood trigonometric functions, a method now called the Fourier transformation.
This new representation allows us to precisely study the structure and the magnitude of any repeating patterns presented in the original waveform.
For the understanding of digital signals, such a process is called discrete-time signal processing \citep{oppenheim2001discrete}. 

Many discrete harmonic transformations exist, such as the discrete Fourier transform, the discrete cosine transform (DCT) \citep{ahmed1974discrete} and the wavelet transform \citep{mallat1999wavelet}. 
The analysis in this work utilizes the type-II DCT, but other techniques can be applied as well and we expect similar results.
Concretely, consider a $d$-dimensional signal $x \in \Real^d$ in the spatial domain. 
%
%
The same signal can be alternatively represented as a discrete sum of amplitudes multiplied by its cosine harmonics:
$
    \tx_k = \sum_{j=0}^{d-1}x_j \cos\left [ \frac{\pi}{d}\left ( j+\frac{1}{2} \right ) k \right ] 
$
for
$
k=0, ..., d-1
$, where the transformed signal $\tx$ has a frequency-domain representation.\footnote{Indices range from $0$ to $d-1$, as zero-frequency is commonly used to refer to a signal with a constant everywhere.}
%
%
Because DCT is linear, it can be carried out using a matrix operation, i.e., $\tx = C x$, where $C$ is a $d\times d$ DCT transformation matrix with values specified by
\begin{linenomath*}
\begin{equation}\label{eq:dct_transformation_matrix}
    C_{kj}^{(d)} = \sqrt{\frac{\alpha_k}{d}} \cos \left [ \frac{\pi}{d}\left ( j+\frac{1}{2} \right ) k \right ], 
\end{equation}
\end{linenomath*}
where $\alpha_0 = 1$ and $\alpha_k = 2$ for $k>0$.
In particular, $\tx$ can be written as a matrix-vector product between the transformation matrix $C$ and the column vector $x$:
\begin{linenomath*}
\begin{equation}
\begin{bmatrix}
\tx_0
\\
\tx_1  
\\\vdots 
\\
\tx_{d-1}
\end{bmatrix} = 
\begin{bmatrix}
 \sqrt{\frac{1}{d}} & \sqrt{\frac{1}{d}} & \cdots & \sqrt{\frac{1}{d}} \\ 
 \sqrt{\frac{2}{d}} \cos \frac{\pi(2(0)+1)(1)}{2 d} & \sqrt{\frac{2}{d}} \cos \frac{\pi(2(1)+1)(1)}{2 d} & \cdots & \sqrt{\frac{2}{d}} \cos \frac{\pi(2(d-1)+1)(1)}{2 d}\\ 
 \vdots             & \vdots &\vdots    &\vdots  \\ 
 \sqrt{\frac{2}{d}} \cos \frac{\pi(2(0)+1)(d-1)}{2 d} & \sqrt{\frac{2}{d}} \cos \frac{\pi(2(1)+1)(d-1)}{2 d} & \cdots  & \sqrt{\frac{2}{d}} \cos \frac{\pi(2(d-1)+1)(d-1)}{2 d}
\end{bmatrix}
\begin{bmatrix}
x_0
\\
x_1  
\\\vdots 
\\
x_{d-1}
\end{bmatrix}.
\end{equation}
\end{linenomath*}
Notice that $C$ is a real orthogonal matrix whose 
rows
consists of periodic cosine bases with increasing frequencies.
Therefore, the absolute value of $\tx$ at a particular dimension indicates the magnitudes of the corresponding basis function, and a higher dimension in $\tx$ means the basis function is of higher frequency.
Another important property of DCT is its invertibility. That is, signals in the frequency domain can be converted back to the spatial-temporal domain via the inverse DCT (iDCT): $x = C^{-1} \tx = C^{\top} \tx$.
In the example above, we discussed one-dimensional DCT which is applied to vectors and is used in the linear analysis in Sec.~\ref{sec:linear_analysis}. 
Transformations on images require two-dimensional DCT and can be done using \mbox{$\tx = CxC^{\top}$}, where $x, \tx \in \Real^{d \times d}$, and $C$ is defined in (\ref{eq:dct_transformation_matrix}); and the inverse two-dimensional DCT is \mbox{$x = C^{\top}\tx C$}. For more details on two-dimensional DCT, we refer the reader to \cite{pennebaker1992jpeg}. 

Previous work analyzes the sensitivity of neural network classifiers by examining the frequency characteristics of various types of perturbations, with an emphasis on understanding how data augmentation affects the robustness of the model \citep{yin2019fourier}. 
In our work, the frequency interpretation of signals is an integral part of understanding the robustness difference between models trained by SGD and adaptive gradient methods.
This perspective allows us to study the structure of complex signals using well-understood periodic basis functions such as cosines and understand the energy distribution of signals by examining the amplitude of the basis function.
In particular, the energy of a discrete signal $x$ is defined as $E(x) = \sum_{i=0}^{d-1} |x_i|^2$, and by Parseval's theorem, is equivalent to the sum of squared amplitudes across all the bases, i.e., $E(x) = E(\tx) = \sum_{i=0}^{d-1} |\tx_i|^2$.
%
Natural images are primarily made of low-frequency signals\footnote{We will always use the term ``high'' or ``low'' frequency on a relative scale.}: a high concentration of energy in the low-frequency harmonics renders the amplitude of the higher-frequency harmonics almost negligible~\citep{tolhurst1992amplitude,van1996modelling}, as shown in Figure~\ref{fig:spectral_energy_distribution} in Appendix~\ref{sec:appendix_additional_figure}.
Moreover, we show in Sec.~\ref{sec:observation_on_irrelevant_frequency} that
there exist frequencies in natural datasets, which if removed from the training data, do not affect the standard generalization performance of the model.
Based on this observation, in Sec.~\ref{sec:linear_analysis}, we construct a synthetic dataset that mimics the characteristics of natural signals, and it allows us to study the learning dynamics of various optimization algorithms in a controlled setting.





\section{A Claim on How Models Use Irrelevant Frequencies}\label{sec:claim}
Why do models trained by different optimization algorithms behave similarly in the standard setting where the training and the test inputs are i.i.d., while they perform drastically differently when faced with noisy or corrupted data? 
To answer this question, we first observe that there is irrelevant information in the natural dataset (Observation I), 
%
and attenuating them from the training input has negligible effects on the standard generalization of the model.
This leads to our claim: 
\begin{claim}\label{claim:main_claim}
To optimize the standard training objective, models only need to learn how to correctly use relevant information in the data. Their use of irrelevant information in the data, however, is under-constrained and can lead to solutions sensitive to perturbations.
\end{claim}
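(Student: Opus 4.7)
The plan is to substantiate Claim~\ref{claim:main_claim} in two stages, matching the decomposition into its two assertions: (a) relevant information suffices for standard generalization, and (b) the way irrelevant information is used is under-determined by the training objective, so different optimizers can ``fill in'' that freedom differently, with consequences for robustness. Since the claim is a qualitative principle rather than a closed-form statement, the argument must combine an empirical identification of the ``irrelevant'' subspace with a controlled analytic example that exhibits the under-constraint phenomenon explicitly.

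First, I would make the notion of ``irrelevant frequency'' operational. Using the DCT representation introduced in Sec.~\ref{sec:related_DCT}, I partition the frequency coordinates $\{0,\dots,d-1\}$ into a set $\mathcal{R}$ of relevant and $\mathcal{I}$ of irrelevant coordinates by the following test: attenuate the coefficients indexed by a candidate $\mathcal{I}$ on the training inputs and verify that a model retrained on the attenuated data attains the same standard test accuracy as on the original data. This is the content I would defer to Sec.~\ref{sec:observation_on_irrelevant_frequency} and it directly supports assertion (a): the population risk is (approximately) invariant to the behavior of the predictor on the $\mathcal{I}$-coordinates, hence the training objective places no pressure on the model's response in those directions.

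Second, to establish (b), I would exhibit an instance in which two optimizers drive the standard training risk to the same (near-zero) value but converge to predictors whose restrictions to the irrelevant subspace differ, and show that this difference governs adversarial risk. The cleanest vehicle is the synthetic least-squares setup of Sec.~\ref{sec:linear_analysis}: construct data whose frequency spectrum mimics the natural-image decay, so that the true regression function depends only on the low-frequency coordinates in $\mathcal{R}$, while the training distribution places vanishing mass on $\mathcal{I}$. For such a problem, any linear predictor $w$ whose restriction to $\mathcal{R}$ matches the Bayes-optimal weights attains (essentially) minimal standard risk regardless of its $\mathcal{I}$-components; the standard objective has a continuum of near-optima. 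The adversarial risk under an $\ell_2$ perturbation, however, scales with $\|w\|$, so the solution actually selected by the optimizer matters. Combined with the dynamical analysis in Sec.~\ref{sec:linear_analysis_gd_signgd}, which I would invoke to show GD drives $w|_{\mathcal{I}}$ toward $0$ while signGD grows it on the same time scale as $w|_{\mathcal{R}}$, this yields two standard-equivalent solutions with provably different adversarial risks, exactly instantiating the claim.

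Finally, to bridge to nonlinear models, I would argue that the same mechanism propagates to deep networks via their Lipschitz constant: if the training loss only constrains behavior on a low-dimensional manifold concentrated on $\mathcal{R}$, the Lipschitz factor in directions transverse to that manifold is under-constrained, and the experiments of Sec.~\ref{sec:lipschitzness_of_neural_networks} provide the empirical bridge. The main obstacle I anticipate is the first step: ``irrelevant'' is defined operationally by what the model is observed to ignore, so the reasoning risks circularity unless $\mathcal{I}$ is identified by a property of the data distribution (e.g., negligible spectral energy combined with statistical independence from the label) rather than by the behavior of a trained model. I would therefore be careful in Sec.~\ref{sec:observation_on_irrelevant_frequency} to select $\mathcal{I}$ from the data's spectral profile and only then verify the invariance of generalization, so that assertions (a) and (b) are consequences of a distributional property rather than tautologies about one particular optimizer.
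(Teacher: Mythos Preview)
Your plan mirrors the paper's structure almost exactly: operationalize ``irrelevant'' via the spectral-energy/high-frequency filtering of Sec.~\ref{sec:observation_on_irrelevant_frequency} to support part (a), then instantiate the under-constraint in the synthetic least-squares problem of Sec.~\ref{sec:linear_analysis} to support part (b), and finally bridge to deep networks via the Lipschitz argument of Sec.~\ref{sec:lipschitzness_of_neural_networks}. Your circularity concern and its resolution (define $\mathcal{I}$ from the data's spectral profile, then verify invariance) is exactly how the paper proceeds.

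One factual correction to your sketch of the linear analysis: GD does \emph{not} drive $w|_{\mathcal{I}}$ toward $0$. Because $\tsigma_i^2=0$ for $i\in\mathcal{I}$, the GD update in those coordinates is identically zero, so the asymptotic GD solution retains the \emph{initial} weights there (cf.\ (\ref{eq:linear_GD_asymptotic_solution})). The paper's comparison is therefore not ``GD shrinks, signGD grows'' but rather ``GD freezes the irrelevant weights at their (small) initialization, while signGD accumulates an additional $\Delta\tw_2$ term proportional to $|\tw_0^*|$ or $|\tw_1^*|$.'' This distinction matters because it is precisely why the paper emphasizes that \emph{neither} optimizer reaches the robust minimizer $\tw^{\mathrm{R}}$; the robustness gap arises from the size of the accumulated drift, not from any implicit regularization of GD toward zero.
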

Because of this, by targeting the perturbations toward the subset of the signal that contains irrelevant information, we notice that models trained by different algorithms exhibit very different performance changes (Observation II).
%

\subsection{Observation I: Irrelevant Frequencies in Natural Signals}\label{sec:observation_on_irrelevant_frequency}
Previous work demonstrated that the magnitude of the frequency components in natural images decreases as the frequency increases, and this decrease follows a $\frac{1}{f^2}$ relationship \citep{ruderman1994statistics, wainwright1999scale}.
In Figure~\ref{fig:spectral_energy_distribution} of Appendix~\ref{sec:appendix_additional_figure}, we make the observation on several common vision datasets that 
the distribution of spectral energy heavily concentrates at the low end of the frequency spectrum and decays quickly towards higher frequencies.
%
The spectral sensitivity of the human eyes is limited \citep{gross2005handbook}, so patterns with low magnitudes and high frequencies are not important from the perspective of human observers, as they appear to us as nearly invisible and unintuitive information in the scene \citep{schwartz2001natural,schwartz2004visual}.
For machines, image-processing methods have long exploited the fact that most of the content-defining information in natural images is represented in low frequencies, and the high-frequency signal is redundant, irrelevant, and is often associated with noise \citep{wallace1991jpeg, guo2018low, sharma2019effectiveness}.

Similarly, the notion of irrelevant frequencies also exists when training a neural network classifier.
%
One way to illustrate this is by taking a supervised learning task, removing the irrelevant information from the training input, and then assessing the model's performance using the original test data.
We observe that when modifying the training dataset by removing subsets of the signal with low spectral energy (Figure \ref{fig:model_accuracy_amp_filter}) or high frequencies (Figure \ref{fig:model_accuracy_lp_filter}), there is a negligible effect on models' classification accuracy on the original test data.
%
%
In Appendix~\ref{sec:appendix_modifying_training_inputs}, we explain how images are modified in detail, and visualizations of the modified images are included in Appendix~\ref{sec:appendix_additional_figure}.
In both settings after reducing more than half of the DCT basis vectors  to zeroes in the training data, the model's generalization ability remains strong.
%
This observation suggests there is a considerable amount of irrelevant information in naturally occurring data from the perspective of a neural network classifier, and such information is often featured with low spectrum energy or lives at the high end of the frequency spectrum.
%
%
%

\begin{figure}[t]
    \captionsetup[subfigure]{labelformat=simple, labelsep=period}
    \centering 
    \subfloat[Parts of the signal with low spectral energy is irrelevant.]{\label{fig:model_accuracy_amp_filter}\includegraphics[width=.5\linewidth]{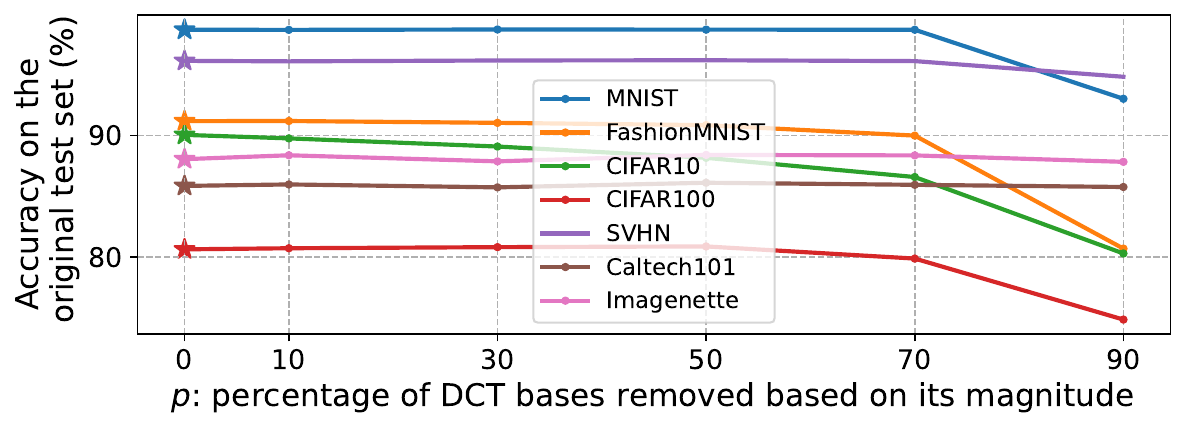}}
    \hfill
    \subfloat[Parts of the signal with high-frequency basis is irrelevant.]{\label{fig:model_accuracy_lp_filter}\includegraphics[width=.5\linewidth]{{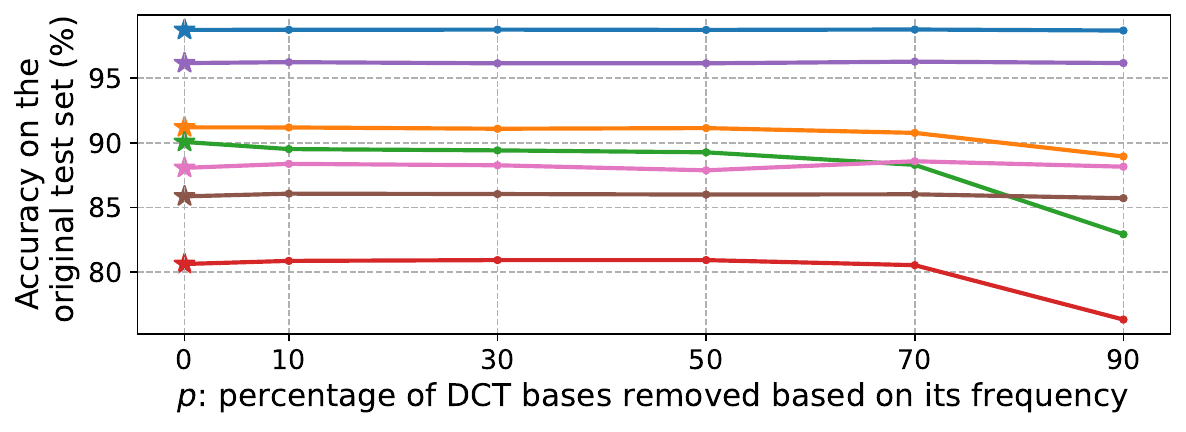}}}
    \caption{\textbf{Irrelevant frequencies exist in the natural data.} Accuracy on the original test set remains high when the training inputs are modified by removing parts of the signal with a) low spectrum energy and b) high frequencies. 
    Stars represent test accuracy on models trained using the original training input.
    In setting a), training images are filtered based on the magnitude of the DCT basis. Specifically, parts of the image with DCT bases that have a magnitude in the bottom $\frac{p}{100}$-th percentile are removed, so a large $p$ means more information is discarded.
    In setting b), training images are low-pass filtered, and $p$ denotes the percentage of the high-frequency components that are discarded in the training data.
    We explain the formulation of the two settings in Appendix~\ref{sec:appendix_modifying_training_inputs}.
    Examples of the modified inputs are included in Appendix~\ref{sec:appendix_additional_figure}.
    \label{fig:model_accuracy_unchanged}}
\end{figure}

This observation leads to the first part of Claim~\ref{claim:main_claim}.
That is, models only need to learn how to correctly use the crucial class-defining information from the training data to optimize the training objective.
On the other hand, the extent to which they utilize irrelevant information in the data is not well-regulated.
%
This can be problematic and lead to solutions sensitive to perturbations.
In Sec.~\ref{sec:linear_analysis}, we validate Claim~\ref{claim:main_claim} using a linear regression analysis with a synthetic dataset that contains irrelevant information.
We demonstrate there exist multiple optima of the training objective and those solutions can all correctly use the relevant information in the data, but the way they exclude irrelevant information from computing the output is different.
Specifically, a robust model disregards irrelevant information by assigning a weight of zero to it, but a non-robust model has certain non-zero weights which, when combined with the irrelevant information in the input, yield a net-zero effect in the output.
In this case, although the two models are indistinguishable under the original training objective, the non-robust model will experience a reduction in model performance should this irrelevant information become corrupted at test time.

\subsection{Observation II: Model Robustness along Irrelevant Frequencies}\label{sec:observation_on_model_robustness}
Let us now focus on the second part of the claim. If models' responses to perturbations along the irrelevant frequencies explain their robustness difference,
%
then we should expect a similar accuracy drop between models when perturbations are along relevant frequencies, but a much larger accuracy drop on less robust models when test inputs are perturbed along irrelevant frequencies.
Consider the robustness of the models when the test data are corrupted with Gaussian noise: the perturbation along each spatial dimension is i.i.d and drawn from a zero-mean Gaussian distribution with finite variance.
This type of noise is commonly referred to as the additive white Gaussian noise, where white refers to the property that the noise has uniform power across the frequency spectrum \citep{diebold1998elements}.
%
%
Nevertheless, the previous discussion suggests that noise along different frequencies does not have an equal impact on the models' output.
To verify this, we assess the impact on model accuracy by perturbing only specific frequency ranges of the test inputs with band-limited Gaussian noise.

To construct the band-limited Gaussian noise,
we first follow the previous work \citep{wang2020high} to group DCT basis vectors based on their distance to the 0-frequency DC term and divide the entire DCT spectrum into ten bands where each band occupies the same number of DCT bases.
This is to ensure an identical $\ell_2$ norm among all the perturbations.
%
%
%
%
%
%
Denote the binary mask of the $i$-th band by using $M^{(i)} \in \left\{0,1\right\}^{d \times d}$, its corresponding band-limited Gaussian noise is
$
    \Delta x^{(i)} = C^{\top}(M^{(i)}\odot\delta) C
$,
where $\delta \sim \mathcal{N}(0, \sigma^2 I_{d\times d})$ and $C$ is the DCT transformation matrix defined in (\ref{eq:dct_transformation_matrix}).
Figure \ref{fig:divide_DCT_space} illustrates how the frequency bases are grouped into ten equally sized bands and examples of the band-limited Gaussian noise.
Denote the perturbations by using $\Delta x^{(i)}$, with $\Delta x^{(0)}$ and $\Delta x^{(9)}$ representing the lowest and the highest band, respectively.
To investigate the effect of the perturbation $\Delta x^{(i)}$ on the models, we measure the change in classification accuracy when the test inputs are perturbed by $\Delta x^{(i)}$:
\begin{equation}\label{eq:accuracy_change_under_band_limited_gaussian_noise}
\frac{1}{N}\sum_{n=1}^N \mathbb{I}\left\{F(X_n) = Y_n \right\} - \frac{1}{NK}\sum_{n=1}^N \sum_{k=1}^K\mathbb{I}\left\{F(X_n+\Delta x^{(i)}_k) = Y_n \right\},
\end{equation}
where $F$ is a neural network classifier, $\{ (X_n, Y_n) \}_{n=1}^N$ represents the test dataset, each test input is perturbed by i.i.d sampled $\Delta x^{(i)}_k$ and the subscript $k$ is used to differentiate between $K$ instances of the randomly sampled noise; 
and we use $K=10$ in our experiments.
It is important to realize in (\ref{eq:accuracy_change_under_band_limited_gaussian_noise}) that the additive noise $\Delta x$ is applied to the spatial signal $X$, although we are limiting the frequency band of the noise.
%
%
%
%
%
%
\begin{figure}[t]
\captionsetup[subfigure]{labelformat=empty}
\centering 
\includegraphics[width=\linewidth]{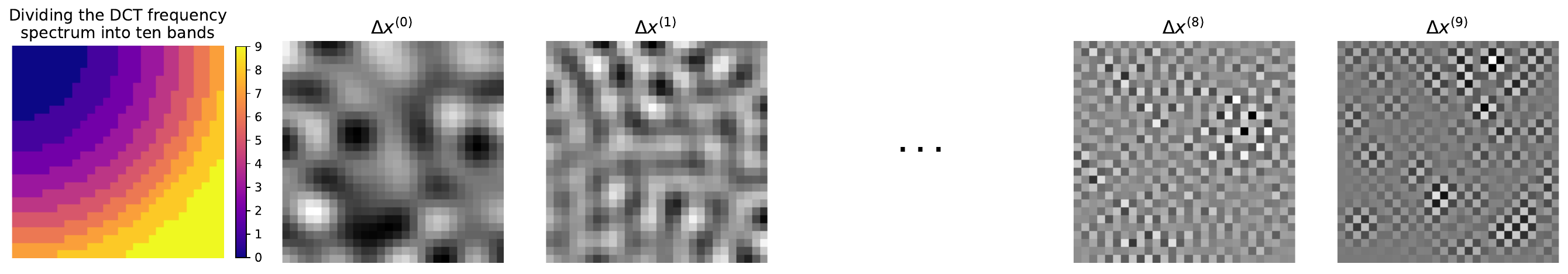}
\caption{\textbf{Visualization of the band-limited Gaussian perturbations.}
The DCT spectrum is divided into ten equally sized bands to generate band-limited Gaussian perturbations. Denote them by using $\Delta x^{(i)}$, where $i\in\set{0,1,...,9}$. The frequency represented in the spectrum plot increases from the top-left (lowest frequency) to the bottom-right corner (highest frequency). Therefore, as the band moves towards higher frequencies, perturbations exhibit more high-frequency checkerboard patterns.}
\label{fig:divide_DCT_space}
\end{figure}

\begin{figure}[t]
\captionsetup[subfigure]{labelformat=empty}
\centering 
\subfloat[]{\includegraphics[trim=0 0 19.1cm 0,clip, height=29mm]{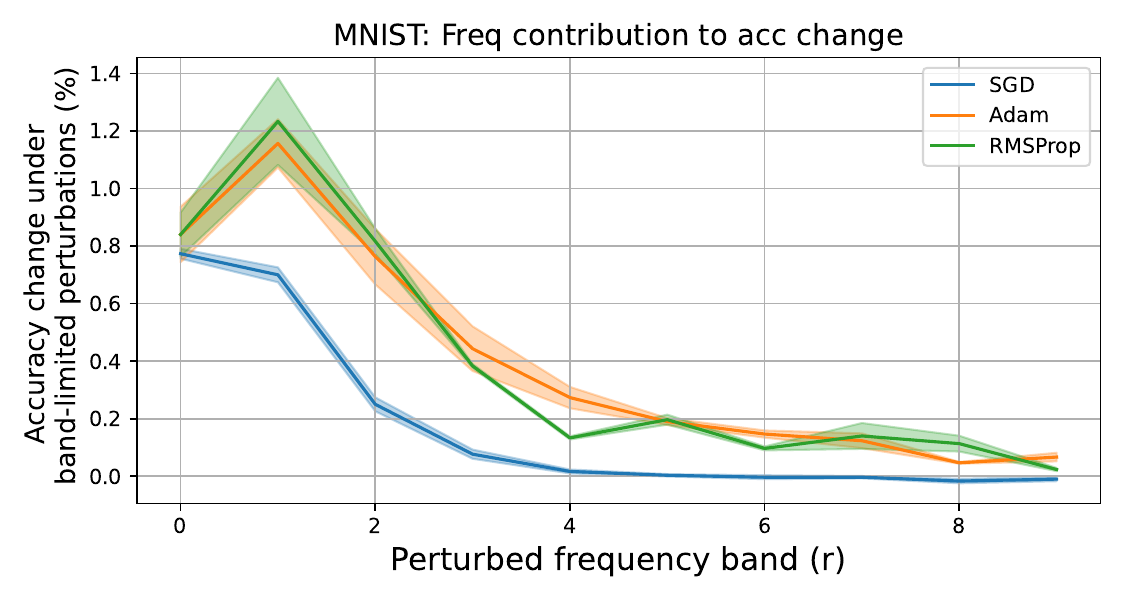}}
\hfill
\subfloat[]{\includegraphics[trim=0 0 19cm 0,clip, height=29mm]{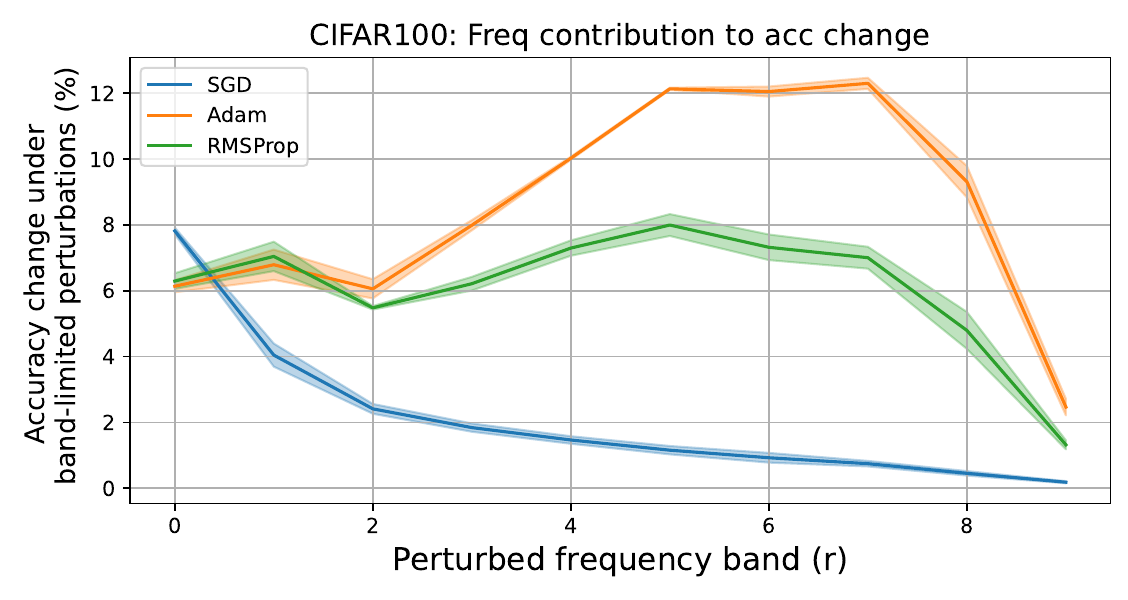}}
\hfill
\subfloat[]{\includegraphics[trim=0 0 19cm 0,clip, height=29mm]{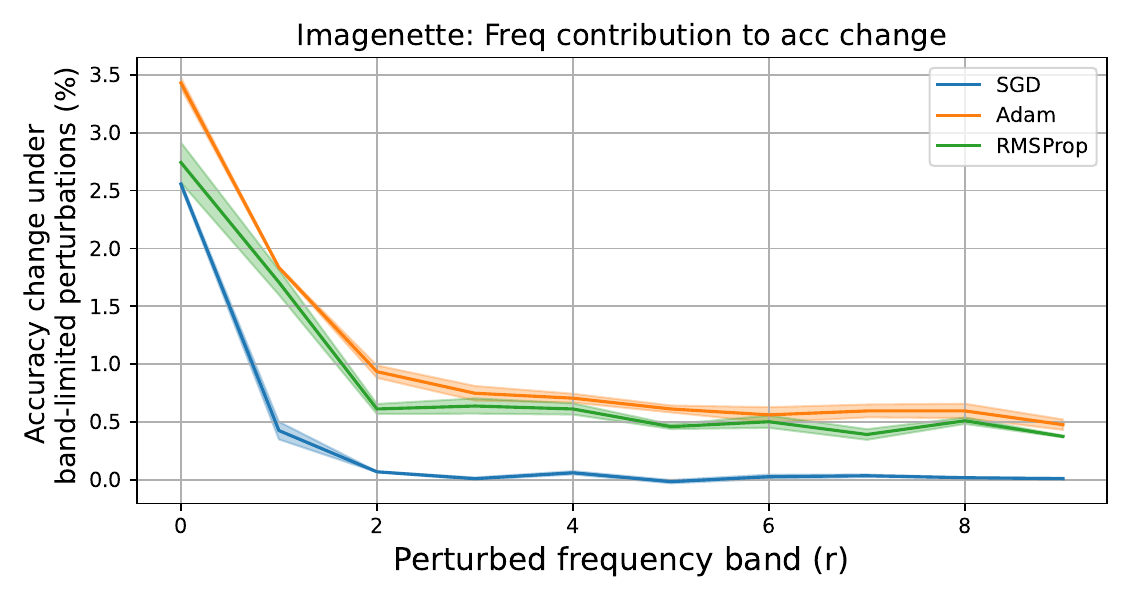}}

\caption{\textbf{The effect of band-limited Gaussian perturbations on the model.}
Perturbations from the lowest band, i.e., $\Delta x^{(0)}$,  have a similar effect on all the models, despite being trained by different algorithms and exhibiting different robustness properties.
On the other hand, models' responses vary significantly when the perturbation focuses on higher frequency bands.
The results are averaged over three independently initialized and trained models, and the shaded area indicates the standard error among the three models.
}
\label{fig:exp_bandlimited_gaussian}
\end{figure}

Figure \ref{fig:exp_bandlimited_gaussian} demonstrates how the classification accuracy degrades under different band-limited Gaussian noises on MNIST, CIFAR100, and Imagenette; and results on the other datasets are included in Appendix~\ref{sec:appendix_additional_figure}.
First, notice that the perturbation from the lowest band $\Delta x^{(0)}$ has a similar impact on all the models regardless of the algorithm they are trained by.
There is however a noticeable difference in how models trained by SGD and adaptive gradient methods respond to perturbations from higher frequency bands.
On models trained by SGD, the flattened curve implies that the effect of high-frequency perturbations on the generalization performance quickly diminishes to zero, suggesting that models are not sensitive to changes along the dimensions of irrelevant frequencies.
Contrarily on models trained by the two adaptive gradient methods, we observe a difference in the way models respond to perturbations of higher frequency bands.
On CIFAR100, for example, the two models are highly vulnerable to Gaussian perturbations from bands 5 to 7.
This observation shows that when models, during their training phase, do not have mechanisms in place to limit their use of irrelevant frequencies, their performance can be compromised if data along irrelevant frequencies become corrupted at test time.

One can also observe that models' responses to high-frequency Gaussian perturbations varies among datasets.
This can be attributed to the fact that \mbox{(ir-)relevant} frequencies are most likely going to be a unique characteristic for a particular dataset.
We do not expect a dataset that solely consists of hand-written digits to share the same \mbox{(ir-)relevant} frequencies as one that consists of real-world objects.
Moreover, the dimension (image resolution) of inputs for a given dataset matters, as a higher dimension potentially can allow more irrelevant frequencies.
Therefore, we emphasize that the goal of our work is not to identify the exact \mbox{(ir-)relevant} frequencies among datasets.
Rather, the analysis is built on the \textbf{presence} of irrelevant frequencies in the dataset, especially towards the higher end of the frequency spectrum, and how models differ in their robustness when trained by different algorithms. 
In the next section, we investigate the reason for such a robustness difference by studying how the irrelevant frequencies affect the learning dynamics of GD and signGD under a synthetic linear regression task. 

%
%
%
%

%
%
%
%

\if0 
\subsection{Summary}
In this section, we make a claim on how models learn in the presence of irrelevant information in the dataset.
We first verify there are irrelevant frequencies when training neural network classifiers on natural datasets (Observation I).
While we validate our claim by analyzing a linear regression model in the next section, we empirically demonstrate on neural networks the degree to which perturbing irrelevant frequencies affects the robustness of a model is directly proportional to the model's overall robustness (Observation II).
This shows that the difference in the use of irrelevant information can lead to a difference in model robustness.
%
The analysis based on neural networks and natural datasets is a first step towards understanding the differences in the robustness between models trained by SGD and adaptive gradient methods.
%
In the next section, we verify our claim and study the learning dynamics of GD and signGD under a synthetic linear regression task and show how irrelevant frequencies in the dataset can indeed yield different robustness under different optimizers, even if their standard generalization ability is identical.
\fi 
\section{Linear Regression Analysis with an Over-parameterized Model}\label{sec:linear_analysis}
In this section, we study the learning dynamics of GD and signGD on least squares regression with linear models to understand why models trained using the two algorithms have the same standard generalization performance but exhibit different robustness against perturbations.
On a synthetic dataset that emulates the energy distribution of natural datasets in the frequency domain, we design a learning task that has multiple optima for the standard population risk, each with a different adversarial risk.
We analyze the weight adaptation under GD and signGD in both spatial and frequency domains and show that training with signGD can result in larger weights associated with irrelevant frequencies, resulting in models with a higher adversarial risk.
%
%
Our result verifies claim~\ref{claim:main_claim}.
We report the main results here and defer the full derivations to Appendix~\ref{sec:appendix_linear_analysis}.

%
%
%
%

\subsection{Problem Setup}\label{sec:linear_problem_setup}
Consider a linear model $f(x,w) = \ip{w}{x}$ with $x, w \in \Real^d$, where $w$ and $x$ are the weight and the signal represented in the spatial domain, respectively.
Since the DCT transformation matrix $C$ is an orthogonal matrix whose rows and columns are unit vectors,
an alternative way to represent this model is: 
\begin{linenomath*}\begin{equation}
    f(x,w) = \ip{w}{x} = w^\top x = w^\top C^\top C x = \ip{\tw}{\tx} = f(\tx,\tw), \nonumber
\end{equation}\end{linenomath*}
where $\tw$ and $\tx$ are the exact same weight and signal but are now represented in the frequency domain.
This means that for linear models, computing the output of the model can be carried out in either domain as long as we use the matching representation of the signal and the weight.
The goal of the linear analysis is to study the learning dynamics of different algorithms in a synthetic and controlled environment 
where one can clearly define the frequency-domain signal-target (ir)relevance to help understand the behavior of models in more complex settings.
For this reason, let $\tw^*$ denote the frequency-domain representation of the true model that is used to interact with the input $\tx$ and generate the target: $y = \tx^\top \tw^*$, where $\tw^* = (\tw_0^*, \tw_2^*, \dotsc, \tw_{d-1}^*)^\top$.
We consider the squared error pointwise loss, which can be equally formulated in both domains:
\begin{linenomath*}
\begin{equation*}
\begin{aligned}[c]
\ell(x,y; w) &= \frac{1}{2} \left| f(x,w) - y \right|^2\\
&= \frac{1}{2} \left| \ip{x}{w} - \ip{x}{w^*} \right|^2
\end{aligned}
\qquad\qquad\text{and}\qquad\qquad
\begin{aligned}[c]
\ell(\tx, y; \tw) &= \frac{1}{2} \left| f(\tx,\tw) - y \right|^2\\
&= \frac{1}{2} \left| \ip{\tx}{\tw} - \ip{\tx}{\tw^*} \right|^2.
\end{aligned}
\end{equation*}
\end{linenomath*}
Denote the error between the learned weight and the true weight at iteration $t$ by $e(t) = w(t) - w^*$, and the standard risk by $\mathcal{R}_\text{s}(w) = \EE{\ell(X,Y;w)}$.
In a similar way, those terms can be represented in the frequency domain as $\te(t) = \tw(t) - \tw^*$ and $\mathcal{R}_\text{s}(\tw) = \EE{\ell(\tilde{X},Y;\tw)}$.
Now we are ready to explain the design philosophy behind the synthetic dataset, 
the structure of the true model $\tw^*$, 
and particularly, with regard to robustness,
the ideal model that minimizes the effect of perturbations.

Suppose that $\tilde{X}$ follows a Gaussian distribution $\mathcal{N}(\tmu, \tSigma)$. 
For analytical tractability, we consider $\tmu = 0$ and a diagonal structure of $\tSigma$, i.e., $\tSigma= \diag(\tsigma^2_0,...,\tsigma^2_{d-1})$. 
This implies that in the spatial domain, $X$ follows a Gaussian distribution $\mathcal{N}(0, \Sigma)$ where $\Sigma = C^\top\tSigma C$. 
In Appendix~\ref{sec:appendix_synthetic_data}, we provide examples of the spatial-domain structure of the data, when we define the distribution directly in the frequency domain.
In Sec.~\ref{sec:claim}, we demonstrate that natural datasets exhibit a particular energy profile where signals contain irrelevant information represented by high-frequency and low-amplitude waves. 
%
To emulate this setting with a synthetic dataset, we define frequencies that are (ir)relevant in generating the target.
%
Let $\mathbb{I}_\text{irrel}\subseteq \{1,2,...,d-1\}$ and $\mathbb{I}_\text{rel} = \{0,1,2,...,d-1\} - \mathbb{I}_\text{irrel}$ denote the set of irrelevant and relevant frequencies, respectively.
Recall that the goal is to make high-frequency components of the dataset irrelevant, so we exclude the DC term ($0\notin \mathbb{I}_\text{irrel}$) when considering irrelevant frequencies, as it is the lowest frequency possible.
%
%
Next, we specify the energy distribution of the synthetic dataset.
The expected energy of a random signal following such a distribution is
\begin{linenomath*}\begin{equation}
    \EE{E(\tilde{X})} = \EE{\sum_{i=0}^{d-1} |\tilde{X}_i|^2} = \sum_{i=0}^{d-1} \EE{\tilde{X}_i^2} = \sum_{i=0}^{d-1}\tsigma_{i}^2.
\end{equation}\end{linenomath*}
We assume that $\tsigma^2_{i} = 0$ if $i\in \mathbb{I}_\text{irrel}$, 
meaning the irrelevant frequencies of the data from the synthetic dataset have zero energy contributions. 
%
The purpose of this is to imitate the behavior of real-world datasets, where the high-frequency components have a negligible impact on the overall energy of the signal.

To see how having irrelevant frequencies affect the structure of the true model,
notice that the definition of the synthetic dataset implies $\tilde{X}_{i} = 0$ for all $i \in \mathbb{I}_\text{irrel}$.
This means that the true target value does not depend on those irrelevant frequencies.
Clearly, this linear model is over-paramaterized because one only needs to specify $\tw^*_i$ for all $i \in \mathbb{I}_\text{rel}$ to establish the signal-target relationship.

%
%
%
%
%

The objective of the standard risk with such a synthetic dataset is not strictly convex, i.e., there are multiple minimizers with zero standard risk, as the value of $\tw_i^*$ for all $i \in \mathbb{I}_\text{irrel}$ has no impact on the model output.
For clarity, let us define $\tilde{\mathcal{W}}^* = \cset{\tw^*}{\mathcal{R}_s(\tw^*)=0}$ as the set that includes all standard risk minimizers.
Having multiple standard risk minimizers is the result of over-parametrization; however, there is a unique solution that achieves zero standard risk and makes the model immune to any perturbations parallel to the directions of the irrelevant frequencies, and it corresponds to having zero weight at irrelevant frequencies: $\tw_i^* = 0$ for all $i \in \mathbb{I}_\text{irrel}$:
Define such a robust standard risk minimizer as $\tw^\text{R}\in\tilde{\mathcal{W}}^*$, we have
\begin{linenomath*}\begin{equation}\label{eq:linear_robust_standard_risk_minimizer}
    \tw_i^{\text{R}} \eqdef  
    \left\{
    \begin{matrix*}[l]
    \tw_i^* &\text{for all}\quad i\in \mathbb{I}_\text{rel}
    \\
    0 &\text{otherwise}.
    \end{matrix*}
    \right.
\end{equation}\end{linenomath*}
Note that we use $\tw^*$ to denote any arbitrary standard minimizers in $\tilde{\mathcal{W}}^*$. To see why $\tw^{\text{R}}$ is the most robust standard minimizer, we introduce the adversarial risk to capture the worst-case performance of the model under an $\ell_2$-constrained perturbation. Similar to the squared error loss, the adversarial risk can also be equally formulated in both domains:
\begin{linenomath*}
\begin{equation*}
\begin{aligned}[c]
\mathcal{R}_\text{a}(w) \eqdef \mathbb{E}_{(X, Y)} \biggl[ \max _{||\Delta x||_2 \leq \eps} \ell(X+\Delta x, Y; w) \biggr]
\end{aligned}
\qquad\text{and}\qquad
\begin{aligned}[c]
\mathcal{R}_\text{a}(\tw) \eqdef \mathbb{E}_{(\tilde{X}, Y)} \biggl[ \max _{||\Delta \tx||_2 \leq \eps} \ell(\tilde{X}+\Delta \tx, Y; \tw) \biggr],
\end{aligned}
\end{equation*}
\end{linenomath*}
where the $\ell_2$-constraint with a size of $\eps$ has an equivalent effect in both domains. To understand the adversarial risk from a frequency-domain perspective, let us focus on $\mathcal{R}_\text{a}(\tw)$:
\begin{linenomath*}\begin{equation}
\label{eq:linear_adv_risk}
	\mathcal{R}_\text{a}(\tw) 
    =  \mathbb{E}_{\tilde{X}} \biggl[\underset{||\Delta \tx||_2 \leq \eps}{\max}  \frac{1}{2}\left|\ip{\tilde{X}}{\tw - \tw^*}  + \ip{\Delta \tx}{\tw} \right|^2 \biggr],
\end{equation}\end{linenomath*}
where we focus on the expectation over $\tilde{X}$, as $Y$ is replaced with $\ip{\tilde{X}}{\tw^*}$.
%
Notice that the maximization is inside the expectation. 
This means that we are finding a separate perturbation for each input. 
Therefore, the maximizer, $\Delta \tx^*$, of a given $\tilde{X}$ within the expectation in (\ref{eq:linear_adv_risk}) is
\begin{linenomath*}\begin{equation}\label{eq:linear_adv_risk_maximizer}
    \Delta \tx^* \eqdef \underset{||\Delta \tx||_2 \leq \eps}{\argmax} \frac{1}{2}\left|\ip{\tilde{X}}{\tw - \tw^*}  + \ip{\Delta \tx}{\tw} \right|^2 = \eps \sign[\ip{\tilde{X}}{\tw-\tw^*}] \frac{\tw}{||\tw||_2}.
\end{equation}\end{linenomath*}
%
Now knowing the worst-case perturbation to any $\tilde{X}$, we can continue the derivation in (\ref{eq:linear_adv_risk}) with
\begin{linenomath*}\begin{align}\label{eq:linear_max_adv_risk}
    \mathcal{R}_\text{a}(\tw)  &=  \frac{1}{2} \mathbb{E}_{\tilde{X}} \biggl[  \abs{\ip{\tilde{X}}{\tw - \tw^*}  + \eps\sign[\ip{\tilde{X}}{\tw - \tw^*}]\norm{\tw}_2 }^2 \biggr]  \nonumber \\
    &=  \frac{1}{2} \sum_{i\in\mathbb{I}_\text{rel}}\tsigma_{i}^2 (\tw_i - \tw_i^*)^2+ \eps \sqrt{\frac{2}{\pi}\sum_{i\in\mathbb{I}_\text{rel}}\tsigma_{i}^2 (\tw_i - \tw_i^*)^2}\norm{\tw}_2 +\frac{\eps^2}{2} \norm{\tw}_2^2.
\end{align}\end{linenomath*}
%
%
Finding the exact minimizer to (\ref{eq:linear_max_adv_risk}) is more involved.
Without doing that however, it is obvious that 
%
%
for an arbitrary standard risk minimizer $\tw^*$ from $\tilde{\mathcal{W}}^*$, we can evaluate (\ref{eq:linear_max_adv_risk}) at $\tw^*$ and obtain its the adversarial risk as
\begin{linenomath*}\begin{equation}\label{eq:linear_adv_risk_at_std_minimizer}
\mathcal{R}_\text{a}(\tw^*)
= \frac{\eps^2}{2}  ||\tw^*||_2^2,
\end{equation}\end{linenomath*}
where the first two terms in (\ref{eq:linear_max_adv_risk}) become zero at any fixed standard risk minimizer. 
This shows that the robustness of the standard risk minimizers against $\ell_2$-bounded perturbations is inversely proportional to the norm of the linear model.
That is, a smaller norm implies better robustness. 
Recall that when evaluating the standard risk of the model, the weights associated with irrelevant frequencies do not matter, since they are never used in computing the output of the model.
On the contrary, the $||\tw^*||_2^2$ term in (\ref{eq:linear_adv_risk_at_std_minimizer}) implies that those weights matter when considering the robustness of the model under perturbations.
It is not difficult to see that the minimum adversarial risk can be achieved on a unique standard risk minimizer $\tw^\text{R}$ (\ref{eq:linear_robust_standard_risk_minimizer}).
%
%

Therefore, in the over-parameterized linear regression setting, a standard risk minimizer with a minimum norm is preferred when considering the robustness of the model, and a model with zero weight at irrelevant frequencies is the most robust solution among the standard risk minimizers.
With this example, we verify Claim \ref{claim:main_claim}.
While standard risk minimizers can correctly use the relevant information of the data, their use of irrelevant information is under-constrained. This can result in significant weight assigned to irrelevant frequencies, making models more susceptible to perturbations.
Next, we study the learning dynamics of GD and signGD and demonstrate that 
the solutions found by GD and signGD differ in the weight of the irrelevant frequencies.
This causes the solutions found by the two algorithms to have a similar standard population risk, but behave very differently under perturbations.
%

\subsection{Analysis on the Learning Dynamics of GD and signGD}\label{sec:linear_analysis_gd_signgd}

%
We now analyze the weight adaptation of a linear model under GD and signGD, and experimentally verify our results.
Our analysis shows that for the over-parameterized linear model, GD finds solutions with a standard risk of exactly zero, and signGD finds solutions with a standard risk close to zero. However, they have different robustness properties. 
%
In the presence of irrelevant frequencies, GD is more likely to converge to a solution that is less sensitive to 
perturbations along the direction of irrelevant frequencies, whereas signGD is more likely to converge to solutions that are more prone to such perturbations.

\subsubsection{GD Dynamics}\label{sec:gd_dynamics}
Let us start with GD in the spatial domain.
Suppose that we initialize the weights in the spatial domain as $w(0) = W \sim N(0, \Sigma_{W})$ where $\Sigma_{W} \in \Real^{d \times d}$.
Similar to how both $\tilde{X}$ and $X$ follow a Gaussian distribution,
the frequency representation of the initialized weight also follows a Gaussian distribution: $\tw(0) = \Tilde{W} \sim \mathcal{N}(0, \tSigma_W)$ where $\tSigma_W = C\Sigma_{W}C^\top$.
To train the model, we use GD on the population risk:
\begin{linenomath*}\begin{equation}\label{eq:linear_spatial_GD_update}
    w(t+1) \leftarrow w(t) - \eta \nabla_w \mathcal{R}_{\text{s}}(w(t)).
\end{equation}\end{linenomath*}
%
%
%
The gradient computed using the population risk is $\nabla_w \mathcal{R}_{\text{s}}(w(t)) = \EE{XX^\top}e(t) = \Sigma e(t)$, 
and the learning dynamics of GD in the spatial domain can be captured using: 
\begin{linenomath*}\begin{equation}\label{eq:linear_spatial_GD_dynamics}
    e(t+1) = w(t+1) - w^* 
        = w(t) - w^* - \eta \Sigma e(t) 
        = (I-\eta \Sigma)^{t+1} e(0).
\end{equation}\end{linenomath*}
This shows that the learned weight converges to the optimal weight $w^*$ at a rate depending on $\Sigma$.
To see the GD dynamics in the frequency domain, we can simply perform DCT on both sides of (\ref{eq:linear_spatial_GD_dynamics}):
\begin{linenomath*}\begin{equation}\label{eq:linear_freq_GD_dynamics}
    \te(t+1) = C(I-\eta \Sigma)^{t+1} e(0) 
    = (I-\eta \tSigma)^{t+1} \te(0),
\end{equation}\end{linenomath*}
where $\tSigma$ is the covariance of $\tx$. 
%
%
It is easy to see that no weight adaptation happens for the irrelevant frequencies because $\tsigma_i^2 =0$ for all $i\in \mathbb{I}_\text{irrel}$.
%
%
As $\tSigma$ is diagonal, choosing the learning rate $\eta$ such that $\eta \max_{i \in \left\{0,\dotsc,d-1\right\}} \tSigma_{ii}< 1$, we get that the asymptotic solution is
\begin{linenomath*}\begin{equation}\label{eq:linear_GD_asymptotic_solution}
    \boldsymbol{\tw}_i^\text{GD} \eqdef \lim_{t \ra \infty} \tw_i(t) = 
    \left\{\begin{matrix*}[l]
    \tw_i^* & \text{for all}\quad i\in \mathbb{I}_\text{rel}
    \\
    \tw_i(0) &\text{otherwise}.
    \end{matrix*}\right.
\end{equation}\end{linenomath*}
That is, the initial random weights at the irrelevant frequencies do not change.
Using (\ref{eq:linear_adv_risk_at_std_minimizer}), we have
\begin{linenomath*}\begin{equation}\label{eq:linear_GD_adv_risk}
\mathcal{R}_\text{a}(\boldsymbol{\tw}^\text{GD}) 
= \frac{\eps^2}{2}  ||\boldsymbol{\tw}^\text{GD}||_2^2 
= \frac{\eps^2}{2}  \left\{\sum_{i\in\mathbb{I}_\text{rel}}\tw^{*2}_{i} + \sum_{j\in\mathbb{I}_\text{irrel}}\tw_{j}(0)^{2}\right\}.
\end{equation}\end{linenomath*}
Comparing the standard risk minimizer found by GD with the robust standard risk minimizer in (\ref{eq:linear_robust_standard_risk_minimizer}), we notice that the GD solution is not the most robust among all standard risk minimizers, as it is sensitive to perturbations along irrelevant frequencies.
Suppose that the initialized weight in the frequency domain is randomly sampled from $\mathcal{N}(0, \sigma^2 I_{d\times d})$,
and the signal-target relationship is determined by a handful of relevant frequencies.
%
Taking the expectation of (\ref{eq:linear_GD_adv_risk}) over the randomly initialized weight, we have $\mathbb{E}_{\tw(0)} \left[ \mathcal{R}_\text{a}(\boldsymbol{\tw}^\text{GD}) \right] \approx O ( \eps^2 d \sigma^2)$,
so the adversarial risk can be quite significant if there is a large number of irrelevant frequencies, i.e., $\abs{\mathbb{I}_\text{rel}} \ll d$ and $\abs{\mathbb{I}_\text{irrel}}\approx d$.

This example shows that the GD solution is sensitive to initialization.
Because there is no mechanism in place to actively ensure that the weights associated with these irrelevant frequencies become zero,
GD is not forcing the initial weights to go to zero at those frequencies.
%
One solution is to include the weight norm as a penalty term along with the original optimization objective, but this can result in learning a biased solution.
Another simple fix is to initialize the weight at exactly 0.
This robustifies the GD solution by initializing those irrelevant weights at the most robust state.
\subsubsection{SignGD Dynamics}\label{sec:signGD_dynamics}
Adaptive gradient algorithms like Adam and RMSProp utilize historical gradient information as a momentum mechanism for updating model parameters, thereby expediting the learning process.
%
However, it is important to note that their acceleration is not solely attributable to this feature, nor is their adaptiveness limited to it.
In (\ref{eq:signGD_update}), we have demonstrated how signGD, a memory-free version of Adam and RMSProp, can adaptively adjust the update using a coordinate-wise learning rate. 
Although signGD is not a suitable choice for training deep neural networks \citep{riedmiller1993direct, ma2021qualitative}, examining its behavior can provide insights into the learning dynamics of other adaptive gradient methods \citep{karimi2016linear, balles2018dissecting,moulay2019properties}.
Additionally, in Sec.~\ref{sec:linear_empirical_validation}, we empirically justify the use of signGD as a suitable alternative in understanding the learning dynamics of Adam and RMSProp.

Again, let us start with signGD in the spatial domain. The update rule using the population risk takes the sign of the gradient computed using the population risk
\begin{linenomath*}\begin{equation}\label{eq:linear_spatial_signGD_update}
    w(t+1) \leftarrow w(t) - \eta \sign[\nabla_w \mathcal{R}_{\text{s}}(w)],
\end{equation}\end{linenomath*}
and its learning dynamics in the spatial domain is
\begin{linenomath*}\begin{equation}\label{eq:linear_spatial_signGD_dynamics}
    e(t+1) = w(t+1) - w^* 
    = e(t)  - \eta \sign[\Sigma e(t)] .
\end{equation}\end{linenomath*}
Unlike the GD dynamics in (\ref{eq:linear_spatial_GD_dynamics}), (\ref{eq:linear_spatial_signGD_dynamics}) shows that the behavior of signGD depends on the sign of $\Sigma e(t)$, and this means that when $\abs{[\Sigma e(t)]_i} \ll 1$, training using signGD can accelerate the learning along the $i$-th dimension.
Although we can obtain $\Sigma$ from $\Sigma = C^{\top} \tSigma C$, the structure of $\Sigma$ is subject to variation based on $\tSigma$, so it is difficult to find an analytical solution for the dynamics of the model trained under signGD, such as (\ref{eq:linear_spatial_GD_dynamics}) where we have a closed form for $e(t)$ as a function of $e(0)$ for models trained under GD.
This means that analyzing the signGD dynamics is limited to studying its step-by-step update based on the sign of the entries in $\Sigma e(t)$.

The signGD learning dynamics in the frequency domain can be obtained by taking the DCT transformation on both sides of (\ref{eq:linear_spatial_signGD_dynamics}): 
\begin{linenomath*}\begin{equation}\label{eq:linear_freq_signGD_dynamics_short}
    \te(t+1) = \te(t)  - \eta C \sign[\Sigma e(t)] 
             = \te(t)  - \eta C \sign[C^{\top} \tSigma \te(t)],
\end{equation}\end{linenomath*}
where the error and the covariance terms inside of the sign are also transformed into their frequency-domain representations.
\Eqref{eq:linear_freq_signGD_dynamics_short} shows that analyzing the behavior of signGD in the frequency domain requires knowing the sign of the entries in $C^{\top} \tSigma \te(t)$.
This term can be understood as an inverse DCT transformation of $\tSigma \te(t)$, 
and with a diagonal structure of $\tSigma$, we know that 
$\tSigma \te(t) = \left[\tsigma^2_0,...,\tsigma^2_{d-1}\right]^{\top} \odot \te(t)$.
However, similar to the situation in (\ref{eq:linear_spatial_signGD_dynamics}), the sign of the entries in $C^{\top} \tSigma \te(t)$ is dependent on $\te(t)$ at different steps, so obtaining an analytical solution for the frequency-domain dynamics is also challenging.

In both (\ref{eq:linear_spatial_signGD_dynamics}) and (\ref{eq:linear_freq_signGD_dynamics_short}), we see that understanding the signGD dynamics for any general $\tSigma$ can be complicated.
Thus, we focus on a structure of $\tSigma$ that simplifies the analysis but still allows us to understand why training with signGD results in vulnerable models.
In particular, we consider a data distribution where $\tilde{X} \sim \mathcal{N}(\tmu=0, \tSigma = \diag \left\{\tsigma_0^2, \tsigma_1^2, 0\right\})$.
This definition implies that the data distribution contains irrelevant information at the highest frequency basis and we have $\tilde{X}_2 = 0$ for all datapoints.

Now, we continue with signGD learning dynamics in the frequency domain from  (\ref{eq:linear_freq_signGD_dynamics_short}).
Let us denote $A(t) = \frac{\sqrt3}{3}\tsigma_0^2\te_0(t)$ and $B(t) = \frac{\sqrt2}{2}\tsigma_1^2\te_1(t)$, and $C=C^{(3)}$ follows the DCT transformation matrix defined in (\ref{eq:dct_transformation_matrix}).
With some algebraic manipulation, we have
\begin{linenomath*}\begin{equation}\label{eq:linear_freq_signGD_dynamics}
    \te(t+1) = \te(t) - 
    \eta\begin{bmatrix}
        \frac{\sqrt3}{3} (\sign [A(t)+B(t)] + \sign [A(t)] + \sign [A(t)-B(t)] )
        \\
        \frac{\sqrt2}{2} (\sign [A(t)+B(t)] -\sign [A(t)-B(t)])
        \\
        \frac{\sqrt6}{6} \sign [A(t)+B(t)] - \frac{\sqrt6}{3}\sign [A(t)] + \frac{\sqrt6}{6}\sign [A(t)-B(t)]
    \end{bmatrix},
\end{equation}\end{linenomath*}
and we include its complete derivation in Appendix~\ref{sec:appendix_linear_signGD_dynamics}. 
With this particular choice of $\tSigma$, (\ref{eq:linear_freq_signGD_dynamics}) shows that weight adaptation depends on the sign of three terms:
$A(t)$, $A(t)+B(t)$ and $A(t)-B(t)$.
In Table~\ref{table:signGD_update_synthetic} of Appendix~\ref{sec:appendix_linear_signGD_analysis_of_error_terms}, we study the learning dynamics of signGD by analyzing all 27 sign combinations and their corresponding updates.
We report the main results here and defer the detailed analysis to Appendix~\ref{sec:appendix_linear_signGD_analysis_of_error_terms}.

With a constant learning rate of $\eta$, the asymptotic signGD solution converges to an $O(\eta)$ neighborhood of the standard risk minimizer:
\begin{linenomath*}
\begin{equation}\label{eq:linear_signGD_asymptotic_solution_dim01}
    \limsup_{t \ra \infty} \abs{\tw_i(t) - \tw_i^*} = O(\eta),
\end{equation}
\end{linenomath*}
where $i \in \set{0, 1}$. 
In particular, we demonstrate that $\tw_0$ oscillates in an $O(\eta)$ neighborhood of $\tw_0^*$.
Consider $T$ as the first iteration after which $\tw_0$ starts oscillating, and define $\Delta \tw_2$ as the sum of all the updates in $\tw_2$ up to the $T$-th iteration. The limiting behavior of $\tw_2$ under signGD update is
\begin{linenomath*}\begin{equation}\label{eq:linear_signGD_asymptotic_solution_dim2}
    \limsup_{t \ra \infty} \abs{\tw_2(t)} = \abs{\tw_2(T) + O(\eta)} = \abs{\tw_2(0) + \Delta \tw_2  + O(\eta)},
\end{equation}\end{linenomath*}
where $\tw_2(0)$ is the weight at initialization. This means that after $T$ iterations, for all $t'>T$, $\tw_2(t')$ stays in an $O(\eta)$ neighborhood of $\tw_2(T)$. As such, we have the asymptotic solution found by signGD: 
\begin{linenomath*}\begin{equation}
    \boldsymbol{\tw}^\text{signGD} = [\tw_0^*, \; \tw_1^*, \; \tw_2(0) + \Delta \tw_2]^\top + O(\eta).
\end{equation}\end{linenomath*}

From the perspective of training under the standard risk, the signGD solution is close to the optimum.
Specifically, its standard risk is
\begin{linenomath*}\begin{equation}\label{eq:linear_signGD_standard_risk}
\mathcal{R}_\text{s}(\boldsymbol{\tw}^\text{signGD}) = \EE{\ell(\tilde{X},Y;\boldsymbol{\tw}^\text{signGD})} = \frac{1}{2}\EE{\ip{\tilde{X}}{\boldsymbol{\tw}^\text{signGD}-\tw^*}^2} = O((\tsigma_0^2 +\tsigma_1^2)\eta^2).
\end{equation}
\end{linenomath*}
Note that the standard risk of the GD solution is exactly zero; and by choosing a sufficiently small learning rate $\eta$, the standard risk of the signGD solution can also be close to zero as well. 
However, their adversarial risks are very different. Specifically, the adversarial risk of the asymptotic signGD solution is 
\begin{linenomath*}\begin{equation}\label{eq:linear_signGD_adv_risk}
\mathcal{R}_\text{a}(\boldsymbol{\tw}^\text{signGD}) 
= \frac{\eps^2}{2} ||\boldsymbol{\tw}^\text{signGD}||_2^2 
= \frac{\eps^2}{2}  \left\{\tw^{*2}_{0} + \tw^{*2}_{1} + \left(\tw_2(0) +\Delta \tw_2 \right)^2 \right\}.
\end{equation}\end{linenomath*}
We can compare it with the adversarial risk of the asymptotic solution found by GD under the same setup:
\begin{linenomath*}\begin{equation}\label{eq:linear_GD_adv_risk_3dimensional}
\mathcal{R}_\text{a}(\boldsymbol{\tw}^\text{GD}) 
= \frac{\eps^2}{2}  \left\{\tw^{*2}_{0} + \tw^{*2}_{1} + \tw_2(0)^{2}\right\}.
\end{equation}\end{linenomath*}
It can be observed that the main difference between the two adversarial risks in (\ref{eq:linear_signGD_adv_risk}) and (\ref{eq:linear_GD_adv_risk_3dimensional}) arises from the difference in weights learned at the irrelevant frequency.
Since their use of irrelevant frequency in the data is under-constrained, neither algorithm can reduce $\tw_2$ to zero, thereby neither solution is the most robust standard risk minimizer.
As discussed in Sec.~\ref{sec:gd_dynamics}, the GD solution is sensitive to weight initialization.
Before understanding the $\Delta \tw_2$ term in the signGD solution, we first introduce two assumptions on the synthetic dataset that serve to better emulate the distribution found in the natural dataset.
Consider a dataset with a strong task-relevant correlation between the relevant frequency component of the data and the target, a realistic scenario as we discussed in Sec.~\ref{sec:observation_on_model_robustness}.
In this case, $\abs{\tw_0^*}$ and $\abs{\tw_1^*}$ can be large.
Additionally, with a weight initialization around zero, such as in methods by ~\cite{he2015delving} and~\cite{glorot2010understanding}, the initial error $\abs{\te_0(0)}$ and $\abs{\te_1(0)}$ can be large and close to $\abs{\tw_0^*}$ and $\abs{\tw_1^*}$ when $\abs{\tw_0^*} \gg \abs{\tw_0(0)}$ and $\abs{\tw_1^*} \gg \abs{\tw_1(0)}$. 
Moreover, it is discussed in Sec.~\ref{sec:observation_on_irrelevant_frequency} and later supported empirically in Figure~\ref{fig:spectral_energy_distribution} of Appendix~\ref{sec:appendix_additional_figure} that the distribution of spectral energy heavily concentrates at the low end of the frequency spectrum and decays quickly towards higher frequencies. 
Since $\tsigma^2_i$ is interpreted as the expected energy of a random variable at the $i$-th frequency, it is reasonable to expect that $\frac{\tsigma^2_1}{\tsigma^2_0} < \frac{1}{3}$.

With the two assumptions, we demonstrate that $\Delta \tw_2$ is proportional to $\abs{\tw_0^*}$ or $\abs{\tw_1^*}$ depending on the initialization of $\abs{A(0)}$ and $\abs{B(0)}$. 
In particular, we have 
\begin{linenomath*}\begin{equation}
    \abs{\Delta \tw_2} \approx 
    \left\{\begin{matrix*}[l]
    \sqrt3 C \abs{\tw_0^*} & \text{if}\quad \abs{A(0)} < \abs{B(0)}
    \\
    \frac{3\sqrt2 \tsigma^2_1}{2\tsigma^2_0} C \abs{\tw_1^*} & \text{if} \quad \abs{A(0)} > \abs{B(0)} ,
    \end{matrix*}\right.
\end{equation}\end{linenomath*}
where $C \in [\frac{\sqrt6}{6}, \frac{\sqrt6}{3}]$.
%
To quantitatively understand the robustness difference between solutions found by the two algorithms, we consider the ratio between the adversarial risk of the standard risk minimizers found by GD (\ref{eq:linear_GD_adv_risk_3dimensional}) and signGD (\ref{eq:linear_signGD_adv_risk}) with a three-dimensional input space.
We observe that the solution found by signGD is more sensitive to perturbations compared to the GD solution: 
\begin{linenomath*}\begin{align}\label{eq:linear_signGD_GD_adv_risk_ratio}
\frac{\mathcal{R}_\text{a}(\boldsymbol{\tw}^\text{signGD})}{\mathcal{R}_\text{a}(\boldsymbol{\tw}^\text{GD})} 
    &\approx \left\{\begin{matrix*}[l]
    1 + C_3 \frac{\tw^{*2}_{0}}{\tw^{*2}_{0} + \tw^{*2}_{1}} & \text{if}\quad \abs{A(0)} < \abs{B(0)}
    \\
    1 + C_4 \frac{\tw^{*2}_{1}}{\tw^{*2}_{0} + \tw^{*2}_{1}} & \text{if}\quad \abs{A(0)} > \abs{B(0)},
    \end{matrix*}\right.
\end{align}\end{linenomath*}
where $\frac{1}{2} \leq C_3 \leq 2$ and $\frac{3}{4} \frac{\tsigma^4_1}{\tsigma^4_0} \leq C_4 \leq 3 \frac{\tsigma^4_1}{\tsigma^4_0}$. 
Given that this ratio is always greater than 1, the linear model obtained through GD is always more robust against $\ell_2$-bounded perturbations in comparison to the model obtained from signGD.

\subsubsection{Empirical Validation}\label{sec:linear_empirical_validation}
\begin{figure}[t]
  \captionsetup[subfigure]{labelformat=simple, labelsep=period}
    \centering 
    \subfloat[Dynamics of the error term]{\label{fig:synthetic_dynamics}\includegraphics[width=\linewidth]{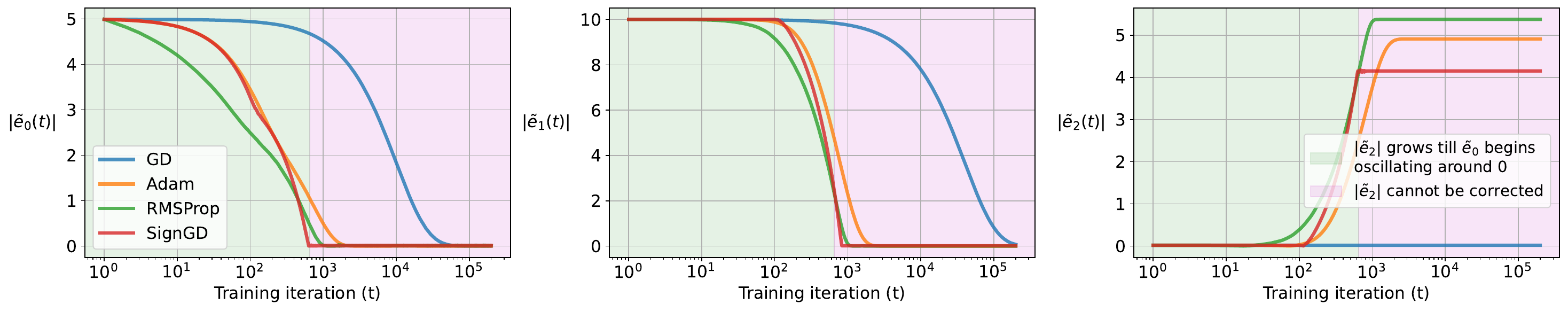}}
    \vspace{2mm}
    \hfill
    \subfloat[The standard population risk $\mathcal{R}_s$ and the adversarial population risk $\mathcal{R}_a$]{\label{fig:synthetic_risk}\includegraphics[width=\linewidth]{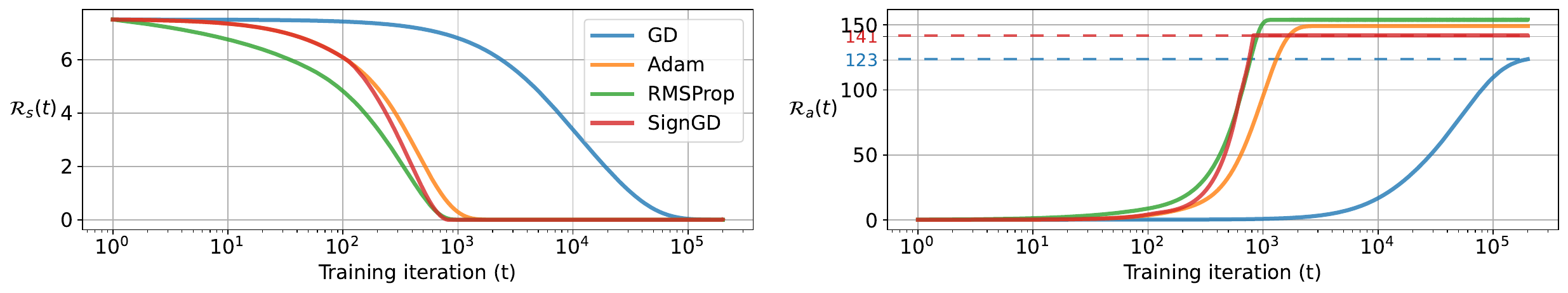}}

  \caption{\textbf{Comparing (a) the learning dynamics, (b) the standard and adversarial population risk of linear models trained by GD, Adam, RMSProp, and signGD.}
  We create a three-dimensional dataset created using $(\tsigma_0^2, \tsigma_1^2,\tsigma_2^2) = (0.01,0.0025,0)$, and $(\tw_0^*,\tw_1^*,\tw_2^*) = (5,10,0)$. 
  All models are initialized with the same weight $(\tw_0(0),\tw_1(0),\tw_2(0)) = (0.01,-0.01,0.02)$ and trained using a fixed learning rate of $0.01$.
  \textbf{(a) Dynamics of the error term. }
  During the signGD training process, the error along the irrelevant frequency grows until $\te_0$ starts to oscillate around 0. 
  In our example, the green highlighted areas in the figure correspond to the iterations before $\te_0$ starts to oscillate, and the red areas show that the error along the irrelevant frequency cannot be corrected.
  \textbf{(b) The standard population risk and the adversarial population risk ($\eps = \sqrt{2}$).}
  We notice that despite all models can reach zero standard population risk, their adversarial population risks are different.
  The adversarial population risk of models trained by adaptive gradient methods is higher than the one from the model trained by GD, indicating lower robustness. }
  \label{fig:synthetic_complete}
\end{figure}

To validate our analysis, in Figure~\ref{fig:synthetic_complete} we create a three-dimensional dataset using $(\tsigma_0^2, \tsigma_1^2,\tsigma_2^2) = (0.01,0.0025,0)$, and $(\tw_0^*,\tw_1^*, \tw_0^*) = (5,10,0)$, and compare the dynamics of the frequency-domain weight error on models trained by GD, Adam, RMSProp, and signGD.
All models are initialized with the same weight and are trained using a fixed learning rate of $0.01$.
At each training iteration, we sample 1000 data points and compute the gradient based on the sampled data.
We want to clarify that even though we analyze the weight update dynamics in both frequency and spatial domains, the actual training still takes place in the spatial domain.

In (\ref{eq:linear_freq_GD_dynamics}), we show that the GD solution $\boldsymbol{\tw}^\text{GD}_i(t)$ converges to $\tw_i^*$ with a rate of $1-\eta\tsigma_i^2$.
Therefore, when $\tsigma_i^2$ is small, learning can be particularly slow for weights associated with the $i$-th frequency, as shown in Figure~\ref{fig:synthetic_dynamics}.
On the other hand, notice in Table~\ref{table:signGD_update_synthetic} that $\abs{\te_0}$ gets reduced by at least $\frac{\sqrt3}{3}$ regardless of the magnitude of $\tsigma_0^2$ for signGD.
This means that the magnitude of $\tsigma_i^2$ does not directly affect the convergence speed.
Instead, the relative magnitude between $A(t)$ and $B(t)$ determines the frequency which receives priority during the learning process.
As a result, we observe an acceleration for models trained by signGD.

Next, we observe that the error trajectory for the model trained by signGD closely resembles the one from the model trained by Adam for $\te_0$ and $\te_1$.
In the analysis of signGD, we show that $\abs{\te_2}$ increases till $\abs{\te_0}$ starts oscillating in $O(\eta)$.
Figure~\ref{fig:synthetic_dynamics} shows that this pattern can be observed in models trained by Adam as well.
This shows that signGD is a suitable alternative to understanding the learning dynamics of models under the proposed linear regression task.
For models trained by GD, since there is no update on the weight associated with the irrelevant frequency, $\te_2$ remains at the initialized value throughout training.
To demonstrate the weight adaptation under signGD, we divide the training into two phases, as highlighted by two background colors.
The green area indicates that $\abs{\te_0}$ decreases and $\abs{\te_2}$ increases in the meanwhile.
Once oscillation begins for $\abs{\te_0}$, $\abs{\te_2}$ can no longer be corrected. This behavior corresponds to the purple area in Figure~\ref{fig:synthetic_dynamics}.

In Figure~\ref{fig:synthetic_risk}, we compare the standard population risk and the adversarial population risk of different models.
We notice that despite all models reaching near zero standard population risk, their adversarial population risk is different.
In particular, the adversarial population risk of models trained by adaptive gradient methods is higher than the one from the model trained by GD, indicating lower robustness.
Choosing $\eps = \sqrt{2}$ in (\ref{eq:linear_adv_risk_at_std_minimizer}), the adversarial risk of those standard risk minimizers is exactly the squared $\ell_2$ norm of the weight.
With our choice of initialization, the resulting $\abs{A(0)}$ and $\abs{B(0)}$ are $0.0289$ and $0.0177$ respectively.
This means that the ratio between the two adversarial risks is $\frac{\mathcal{R}_\text{a}(\boldsymbol{\tw}^\text{signGD})}{\mathcal{R}_\text{a}(\boldsymbol{\tw}^\text{GD})} \in [1.04, 1.15]$ according to (\ref{eq:linear_signGD_GD_adv_risk_ratio}), and this aligns with the ratio of $1.146$ obtained empirically from the experiments.
This simple problem illustrates how the optimization algorithms and an over-parameterized model might interact, and learning with signGD can lead to a solution that is more prone to perturbations.
In this section, we focus on analyzing the robustness of the solution from a frequency domain perspective, that is, the behavior of $\tw$ with an input perturbation of $\Delta \tx$.
In Appendix~\ref{sec:appendix_spatial_redundat_dimensions}, we present a spatial interpretation of the result and demonstrate how signals with irrelevant frequencies contain spatially redundant dimensions.

\section{Connecting the Norm of Linear Models to the Lipschitzness of Neural Networks}\label{sec:lipschitzness_of_neural_networks}
The takeaway from the over-parameterized linear regression analysis is that
among all standard risk minimizers, the minimum norm solution is the most robust one.
%
That is, a smaller weight norm implies better robustness.
This suggests a connection between the weight norm and model robustness.
Nonetheless, the major limitation of the analysis is that it is designed for a linear model.
In this section, we generalize such a connection to the deep learning setting and verify it using the robustness of neural networks trained by different algorithms.

One major obstacle is that the notion of weight norm as defined for linear models is not generally applicable to neural networks.
%
However, we can still relate the weight of a network to its sensitivity with respect to changes in the input space. 
Consider a single-layer ReLU-activated feedforward network with $x \in \Real^d$ and $W \in \Real^{D\times d}$.
With a perturbation of $\Delta x \in \Real^d$ constrained by the vector $\ell_p$-norm, the maximum change in the model output as measured by the same norm can be bounded using
\begin{linenomath*}
\begin{equation}
\left\|\operatorname{ReLU}\left(W (x+\Delta x)\right)-\operatorname{ReLU}\left(W x\right)\right\|_p  \leq\left\|W\Delta x\right\|_p \leq\|W\|_p\left\|\Delta x\right\|_p,
\end{equation}
\end{linenomath*}
where $\|W\|_p$ denotes the vector $\ell_p$-norm induced matrix norm of the weight $W$ and is referred to as the Lipschitz constant of this single-layer model.

\begin{table}[t]
\caption{\textbf{Comparing the upper bound on the Lipschitz constant and the averaged robust accuracy of neural networks trained by SGD, Adam, and RMSProp.}
We follow \citep{gouk2021regularisation} to compute the Lipschitz constants of each layer in isolation and multiply them together to establish an upper bound on the constant of the entire network. 
Notice that across all selected datasets, models trained by SGD have a considerably smaller upper bound compared to models trained by Adam and RMSProp.
In Figure~\ref{fig:teaser_comparison}, we demonstrate the robustness of the neural networks under Gaussian noise, $\ell_2$ and $\ell_\infty$ bounded adversarial perturbations~\citep{croce2020reliable}. 
Here, we average the accuracy across the perturbations and get a single score quantifying the model's robustness.
All results are averaged over three independently initialized and trained models.
}
\begin{center}
\renewcommand{\arraystretch}{1.3}
\renewcommand{\tabcolsep}{1.5mm}
\small
\begin{tabular}{ cllllllll } 
& Dataset 									& MNIST         & Fashion   & CIFAR10   & CIFAR100   & SVHN      & Caltech101          & Imagenette\\
\Xhline{2\arrayrulewidth}
\multirow{3}{*}{$\prod_{i=1}^l L(\phi_i)$}  
& SGD 										& \textbf{3.80}    & \textbf{3.83}   & \textbf{26.81}    & \textbf{40.41}    & \textbf{22.65}     & \textbf{18.53}   & \textbf{23.99} \\ 
& Adam    									& 5.75    & 8.12   & 28.70    & 41.87    & 30.45     & 26.20   & 28.55 \\
& RMSProp    								& 6.21    & 5.11   & 37.75    & 41.71    & 28.31     & 45.84   & 27.11 \\ 
\hline
\multirow{3}{*}{\shortstack{Averaged\\ Robust Acc.}}  
& SGD 										& \textbf{77.97\%} &	\textbf{77.95\%} &	\textbf{63.21\%} &	\textbf{55.65\%} &	\textbf{69.08\%} &	\textbf{71.42\%} &	\textbf{67.59\%} \\ 
& Adam    									& 65.64\% &	67.60\% &	57.71\% &	45.25\% &	65.60\% &	55.03\% &	58.86\% \\
& RMSProp    								& 63.54\% &	71.34\% &	56.47\% &	47.55\% &	65.37\% &	53.16\% &	57.98\% \\ 
\Xhline{2\arrayrulewidth}
\end{tabular}
\end{center}
\label{table:lipschitz_comparison}
\end{table}

Consider the $\ell_p$ vector norm, for all $x_1, x_2 \in \Real$,
a function $f$ is said to be Lipschitz continuous if \mbox{$||f(x_1) - f(x_2)||_p \leq L ||x_1 - x_2||_p$}, for some real-valued Lipschitz constant $L\geq0$.\footnote{Any value of $L$ satisfying the Lipschitz condition is considered a valid Lipschitz constant. For the sake of clarity, we will refer to the smallest (optimal) Lipschitz constant as $L$.}
Indeed, the Lipschitz constant of a function with respect to inputs captures how sensitive the model is in relation to changes in the input space.
%

In the single-layer model example, its Lipschitz constant is exactly the matrix norm of the weight.
%
%
%
%
More generally, consider the feed-forward neural network as a series of function compositions:
\begin{linenomath*}\begin{equation}
    f(x) = (\phi_l \circ \phi_{l-1} \circ \ldots \circ \phi_1)(x), \nonumber
\end{equation}\end{linenomath*}
where each $\phi_i$ is a linear operation, an activation function, or a pooling operation.
A particularly useful property of the Lipschitz function is that the composition of Lipschitz functions with Lipschitz constant $L_1$, $L_2$, $...$, $L_N$ w.r.t. the same norm is also Lipschitz with an upper-bound on the Lipschitz constant $L \leq L_1 L_2...L_N$.
Denoting the Lipschitz constant of function $f$ as $L(f)$, we can establish an upper bound on the Lipschitz constant for the entire feed-forward neural network using
%
\begin{linenomath*}
\begin{equation}\label{eq:Lipschitz_upper_bound}
    L(f) \leq \prod_{i=1}^l L(\phi_i).
\end{equation}
\end{linenomath*}

As such, for a multi-layer neural network that comprises repeated layers of linear operation followed by non-linear activation, we can upper bound the change in model output with respect to the change in the input space by multiplying the operator norms of the weights.
It is important to realize that (\ref{eq:Lipschitz_upper_bound}) is not a tight upper bound,
and in fact, computing the exact Lipschitz constant of the neural network is NP-hard \citep{virmaux2018lipschitz}.
Nonetheless, this approach allows us to draw connections between the weight and the robustness of the model in the context of neural networks.

Results in Sec.~\ref{sec:linear_analysis} indicate that linear models trained by signGD have larger weight norms, indicating less robustness. 
Therefore, we expect in the deep learning setting that neural networks trained by SGD are more robust, as they have a smaller Lipschitz upper bound, as shown in Figure~\ref{fig:teaser_comparison}.
To verify this, we follow the techniques in \cite{gouk2021regularisation} and compute an upper bound on the Lipschitz constant of the same neural networks trained by SGD, Adam, and RMSProp in Figure~\ref{fig:teaser_comparison}.
Results are shown in Table~\ref{table:lipschitz_comparison}.
The result shows that across all datasets and architectures, models trained by SGD have a smaller upper bound on the Lipschitz constant compared to models trained by the two adaptive gradient methods.
In Figure~\ref{fig:teaser_comparison}, we demonstrate the robustness of the neural networks under Gaussian noise, $\ell_2$ and $\ell_\infty$ bounded adversarial perturbations~\citep{croce2020reliable}. 
In Table~\ref{table:lipschitz_comparison}, we average the accuracy across the perturbations and get a single score quantifying the model's robustness.
We observe that a smaller upper bound on the Lipschitz constant of a neural network implies better robustness against perturbations.
\section{Conclusions}
%
%
In this paper, we highlighted the robustness difference between models trained by SGD and adaptive gradient methods, particularly Adam and RMSProp.
To understand this phenomenon, we leveraged a frequency-domain analysis, and
%
demonstrated that natural datasets contain frequencies that are irrelevant to minimizing the standard training loss.
Empirically, through a band-limited perturbation analysis on neural networks trained on common vision datasets, we showed that models trained by the adaptive gradient method utilize the statistics in the irrelevant frequency, and thus they experience a huge drop in performance when the same statistics become corrupted.
Analytically, on a synthetic linear regression task where the dataset was designed to contain target-irrelevant frequencies, we showed that while both GD and signGD can find the solution with standard risks close to zero, 
the adversarial risk of the asymptotic solution found by signGD can be larger than that of GD.
Such results from the linear analysis explained the observation in Figure~\ref{fig:teaser_comparison} and suggested that a smaller model parameters' weight norms may indicate a larger model robustness.
Finally, in the deep learning setting, we showed that models trained by SGD have a noticeably \Rebuttal{smaller} Lipschitz constant than those trained by Adam and RMSProp.

Our work has some limitations. 
First, when conducting a theoretical analysis of various optimizers, we opted for signGD as a simpler alternative to Adam and RMSProp. 
Second, our focus was primarily on linear models. 
However, it is crucial to acknowledge that deep neural networks inherently possess non-linear characteristics, which limit the depth of insights derived from linear models. 
%
Therefore, one promising future direction is to incorporate tools such as neural tangent kernels~\citep{jacot2018neural}, which provide a deeper understanding of network dynamics.
\Rebuttal{
Third, our analysis focuses on optimization algorithms along with the standard objective function. 
We can also study the effect of optimizer with alternative objectives that are designed to improve the robustness of the model~\citep{simon2019first,wen2020towards,ma2020soar, foret2021sharpness}.
For instance, the effect of adversarial training using perturbations similar to the Fast Gradient Sign Method (FGSM) under the linear regression setup has been studied by \citet{ma2020soar}. 
In linear classification, \citet{wei2023sharpness} showed that minimizing the sharpness-aware loss (SAM)~\citep{foret2021sharpness} can lead to robust models. 
Further discussions on this study can be found in Appendix~\ref{sec:appendix_SAM}.
Another promising direction for future research is to analyze model robustness by coupling various optimization algorithms with different optimization objectives.
}

\subsubsection*{Acknowledgments}
Avery Ma acknowledges the funding from the Natural Sciences and Engineering Research Council (NSERC) through the Canada Graduate Scholarships – Doctoral (CGS D) program. 
Amir-massoud Farahmand acknowledges the funding from the CIFAR AI Chairs program, as well as the support of the NSERC through the Discovery Grant program (2021-03701). 
Yangchen Pan acknowledges the support from the Turing AI World Leading Fellow.
Resources used in preparing this research were provided, in part, by the Province of Ontario, the Government of Canada through CIFAR, and companies sponsoring the \href{http://www.vectorinstitute.ai/partners}{Vector Institute}. We would like to also thank the members of the Adaptive Agents Lab who provided feedback on a draft of this paper. 

\bibliography{reference}
\bibliographystyle{tmlr}

\appendix
\newpage
\section{Summary of the Supplementary Material}
The supplementary material is organized as follows. 
In Appendix~\ref{sec:appendix_training_details}, we first describe the data augmentation, the exact optimization schedule, and the model architectures used to train the models. 
In Appendix~\ref{sec:appendix_complete_results}, we describe the complete generalization and robustness results in Table~\ref{table:complete_result_standard_generalization_and_robustness} and how they are used to generate Figure~\ref{fig:teaser_comparison}.
In Appendix~\ref{sec:appendix_modifying_training_inputs}, we discuss how the training inputs are modified when making the observations in Sec.~\ref{sec:claim}. 
In Appendix~\ref{sec:appendix_linear_analysis}, we provide additional detail on the linear analysis in Sec.~\ref{sec:linear_analysis}.
In Appendix~\ref{sec:appendix_SAM}, we discuss how our analysis relates to the sharpness-aware minimization (SAM).
Finally, in Appendix~\ref{sec:appendix_additional_figure}, we provide additional figures including visualization of the perturbed images, the modified images used in Sec.~\ref{sec:observation_on_irrelevant_frequency}, and the frequency sensitivity comparison in Sec.~\ref{sec:observation_on_model_robustness}.

\section{Training Details}\label{sec:appendix_training_details}
\textbf{Data augmentation:} In our paper, we study how the presence of irrelevant information in the dataset affects the robustness of the model when trained by different algorithms. We approach this problem from a frequency-domain perspective. Specifically, we leverage the structure and energy profile of the dataset in the frequency domain. While data augmentation methods are widely used in training machine learning models to improve generalization and reduce overfitting, understanding how those methods affect the datasets in the frequency domain requires additional analysis tailored for each augmentation method. Therefore, on FashionMNIST, CIFAR10, CIFAR100, Caltech101, and Imagenette, training inputs are augmented with random horizontal flipping, a method that does not change the frequency profile of the image. 
%

\textbf{Optimization schedule:} For all models, we use the following default PyTorch (v1.12.1) optimization settings. For SGD, we disable all of the following mechanism: dampening, weight decay, and Nesterov. For Adam, we use the default values of $\beta_1 = 0.9$ $\beta_2= 0.999$, $ \eps=10^{-8}$ and disable weight decay and disable AMSgrad~\citep{reddi2018convergence}. For RMSProp, we use default values of $\alpha=0.99$, $\eps=10^{-8}$, and disable momentum and disable centered RMSProp which normalizes the gradient by an estimation of its variance.
All models are trained for 200 epochs. In Table~\ref{table:exp_setup}, we list the initial learning rate. The learning rate decreases by a factor of 0.1 at epoch 100 and 150. 

\begin{table}[h]
\caption{\textbf{Experiment setup:} the initial learning rate and the definition of neural networks in this paper.}
\scriptsize
\centerline{
    \begin{minipage}{.5\linewidth}
       \hfill
       \begin{tabular}{ c  c | c} 
            Dataset& Optimization & Initial Learning Rate \\  \Xhline{2\arrayrulewidth}
            \multirow{3}{*}{MNIST}
            & SGD 										& 0.1 \\ 
            & Adam    									& 0.0005  \\
            & RMSProp    								& 0.0005  \\ \Xhline{2\arrayrulewidth}
            \multirow{3}{*}{FashionMNIST}
            & SGD 										& 0.1 \\ 
            & Adam    									& 0.0005  \\
            & RMSProp    								& 0.0005  \\ \Xhline{2\arrayrulewidth}
            \multirow{3}{*}{CIFAR10}
            & SGD 										& 0.2 \\ 
            & Adam    									& 0.0002  \\
            & RMSProp    								& 0.0005  \\ \Xhline{2\arrayrulewidth}
            \multirow{3}{*}{CIFAR100}
            & SGD 										& 0.3 \\ 
            & Adam    									& 0.0005  \\
            & RMSProp    								& 0.0005  \\ \Xhline{2\arrayrulewidth}
            \multirow{3}{*}{SVHN}
            & SGD 										& 0.2 \\ 
            & Adam    									& 0.0002  \\
            & RMSProp    								& 0.0002  \\ \Xhline{2\arrayrulewidth}
            \multirow{3}{*}{Caltech101}
            & SGD 										& 0.05 \\ 
            & Adam    									& 0.0002  \\
            & RMSProp    								& 0.001  \\ \Xhline{2\arrayrulewidth}
            \multirow{3}{*}{Imagenette}
            & SGD 										& 0.1 \\ 
            & Adam    									& 0.0002  \\
            & RMSProp    								& 0.0002  \\ \Xhline{2\arrayrulewidth}
            \multirow{3}{*}{Speech Commmands}
            & SGD 										& 0.1 \\ 
            & Adam    									& 0.1  \\
            & RMSProp    								& 0.1  \\ \Xhline{2\arrayrulewidth}
        \end{tabular}
    \end{minipage}
    \hspace{4mm}
    \begin{minipage}{.5\linewidth}
       \begin{tabular}{ c| c} 
        Dataset											     & Structure  \\ \Xhline{2\arrayrulewidth}
        \makecell{MNIST \\FashionMnist} & \makecell{Conv(1, 16, 4) - ReLU -\\Conv(16, 32, 4) - ReLU -\\FN(21632, 100) - FN(100, 10) - SM(10)} \\ \hline
        \makecell{CIFAR10 \\CIFAR100\\SVHN\\Caltech101\\Imagenette} & \makecell{PreActResNet18~\citep{he2016identity} \\ ViT-B/16~\citep{dosovitskiy2020image}} \\\hline
        \makecell{Speech Commands} & \makecell{M5~\citep{dai2017very}} \\
        \Xhline{2\arrayrulewidth}
        \end{tabular}
    \end{minipage}}
\label{table:exp_setup}
\end{table}

\textbf{Model architecture:} For MNIST and FashionMNIST, we use a ReLU-activated, two-layer convolutional neural network ending with two fully-connected layers. For CIFAR10, CIFAR100, SVHN, Caltech101, and Imagenette, we use PreActResNet18~\citep{he2016identity} and Vision Transformers (ViT-B/16)~\citep{dosovitskiy2020image}. For the Speech Commands dataset~\citep{warden2018speech}, we use the M5 network architecture defined by~\citet{dai2017very}.
See Table \ref{table:exp_setup} for details of all architectures used in this paper. We denote Conv(\textit{i}, \textit{o}, \textit{k}) as a convolution layer having \textit{i} input channels, \textit{o} output channels with \textit{k} by \textit{k} filters, 
FN(\textit{i}, \textit{o}) as a fully-connect layer with \textit{i} input channels and \textit{o} output channels, and SM(\textit{o}) as the soft-max layer with \textit{o} output. The stride for all convolution layers is 1. 
\Rebuttal{The main experiments in our work are centered around models based on convolutional neural networks, within the computer vision domain. Additional results from using ViT-B/16 and on the Speech Commands dataset can be found in Table~\ref{table:vit_imagenette} and Table~\ref{table:audio_data}, respectively.}

\Rebuttal{\textbf{Batch normalization:} We concentrate on a particular aspect of the training process: the selection of optimizers. Our aim is to shed light on how this critical component influences the robustness of trained models. It has been recently shown that the use of batch normalization (BN) can also affect the robustness of the model~\citep{benz2021batch, benz2021revisiting, wang2022removing}. Consequently, to maintain focus on the impact of optimizers, we have omitted BN in the training phase for the experiments leading to the results in Figure~\ref{fig:teaser_comparison} and the analysis in Sec.~\ref{sec:claim}. However, to show that our conclusions remain valid for models with BN, we have included additional results that incorporate BN in Table~\ref{table:complete_result_standard_generalization_and_robustness_BN_enabled}.}

\begin{table}[t]
\caption{\textbf{Results on standard generalization and robustness of models trained by SGD, Adam, and RMSProp.}
We evaluate the model robustness on the testing data perturbed using Gaussian perturbations, $\ell_2$ and $\ell_\infty$-bounded perturbations \citep{croce2020reliable}. 
We include various severity of perturbations to better capture the model robustness.
Models trained by SGD are the most robust against the three types of perturbations across all datasets.
The highlighted results are used in Figure \ref{fig:teaser_comparison}, as they are in relatively similar ranges. 
Results are averaged over three independently initialized and trained models.
}
\begin{center}
\renewcommand{\arraystretch}{1.3}
\renewcommand{\tabcolsep}{1.1mm}
\scriptsize
\begin{tabular}{ c  c | c| ccc| ccc| ccc } 
\multirow{2}{*}{Dataset} &  \multirow{2}{*}{Optimization}									& \multirow{2}{*}{Test}         & \multicolumn{3}{c}{\underline{Gaussian perturbations}}   & \multicolumn{3}{c}{\underline{$\ell_2$-bounded attack}}   & \multicolumn{3}{c}{\underline{$\ell_\infty$-bounded attack}}  \\ 
& & & $\sigma^2 = 0.01$ &$\sigma^2 = 0.05$&$\sigma^2 = 0.1$&$\eps=0.5$&$\eps=0.7$&$\eps=1.0$ &$\eps=0.05$&$\eps=0.07$&$\eps=0.1$ \\  \Xhline{2\arrayrulewidth}
\multirow{3}{*}{MNIST}
& SGD 										& 98.72    & 98.59   & 97.94    & \textbf{95.64}    & 93.00    &87.33	&\textbf{66.33}	&87.53	&\textbf{71.93}	&31.50 \\ 
& Adam    									& 99.05    & 98.86   & 96.76    & \textbf{89.33}    & 92.33    &86.00	&\textbf{54.67}	&85.40	&\textbf{52.93}	&8.77  \\
& RMSProp    								& 98.90    & 98.70   & 97.02    & \textbf{90.63}    & 91.67    &83.00	&\textbf{50.00}	&82.73	&\textbf{50.00}	&7.33  \\ \Xhline{2\arrayrulewidth}
& & & $\sigma^2 = 0.001$ &$\sigma^2 = 0.005$&$\sigma^2 = 0.01$&$\eps=0.1$&$\eps=0.5$&$\eps=0.7$ &$\eps=0.01$&$\eps=0.03$&$\eps=0.05$ \\  \Xhline{2\arrayrulewidth}
\multirow{3}{*}{FashionMNIST}
& SGD 										& 91.20	&90.49	&87.78	&\textbf{84.30}	&\textbf{82.33}	&24.33	&12.33	&\textbf{67.23}	&22.30	&4.17\\ 
& Adam    									& 90.98	&89.91	&85.03	&\textbf{78.91}	&\textbf{70.33}	&6.00	&0.33	&\textbf{53.57}	&5.33	&0.00 \\
& RMSProp    								& 91.15	&90.09	&85.23	&\textbf{78.69}	&\textbf{72.67}	&15.93	&4.67	&\textbf{62.67}	&16.33	&1.67 \\ \Xhline{2\arrayrulewidth}
& & & $\sigma^2 = 0.001$ &$\sigma^2 = 0.005$&$\sigma^2 = 0.007$&$\eps=0.1$&$\eps=0.2$&$\eps=0.3$ &$\eps=\frac{1}{255}$&$\eps=\frac{2}{255}$&$\eps=\frac{4}{255}$ \\  \Xhline{2\arrayrulewidth}
\multirow{3}{*}{CIFAR10}
& SGD 										& 90.16	&87.35	&74.13	&\textbf{68.66}	&\textbf{67.40}	&37.57	&16.30	&\textbf{53.93}	&21.27	&0.93 \\ 
& Adam    									& 90.73	&86.93	&67.50	&\textbf{58.54}	&\textbf{64.03}	&29.60	&11.10	&\textbf{50.57}	&13.93	&0.20 \\
& RMSProp    								& 90.46	&86.25	&70.03	&\textbf{61.52}	&\textbf{60.10}	&25.47	&8.87	&\textbf{47.87}	&14.03	&0.33 \\ \Xhline{2\arrayrulewidth}
& & & $\sigma^2 = 0.001$ &$\sigma^2 = 0.005$&$\sigma^2 = 0.007$&$\eps=0.1$&$\eps=0.2$&$\eps=0.3$ &$\eps=\frac{1}{255}$&$\eps=\frac{2}{255}$&$\eps=\frac{4}{255}$ \\  \Xhline{2\arrayrulewidth}
\multirow{3}{*}{CIFAR100(top1)}
& SGD 										& 59.76	&56.88	&46.26	&41.28	&28.47	&12.93	&5.90	&18.80	&5.57	&1.17\\ 
& Adam    									& 61.10	&55.30	&31.54	&24.92	&24.30	&7.13	&1.87	&14.43	&2.47	&0.33\\
& RMSProp    								& 60.36	&56.46	&36.42	&29.50	&28.90	&10.47	&2.83	&17.90	&3.47	&0.20\\ \hline
\multirow{3}{*}{CIFAR100(top5)}
& SGD 										& 84.67	&81.77	&72.84	&\textbf{67.53}	&80.30	&70.20	&\textbf{59.90}	&75.30	&61.70	&\textbf{39.53}\\ 
& Adam    									& 85.41	&81.37	&58.11	&\textbf{49.54}	&80.53	&66.73	&\textbf{52.77}	&74.70	&53.70	&\textbf{33.43}\\
& RMSProp    								& 85.18	&81.66	&62.71	&\textbf{54.44}	&80.93	&68.57	&\textbf{54.10}	&74.60	&59.10	&\textbf{34.10}\\ \Xhline{2\arrayrulewidth}
& & & $\sigma^2 = 0.001$ &$\sigma^2 = 0.003$&$\sigma^2 = 0.005$&$\eps=0.1$&$\eps=0.2$&$\eps=0.3$ &$\eps=\frac{1}{255}$&$\eps=\frac{2}{255}$&$\eps=\frac{4}{255}$ \\  \Xhline{2\arrayrulewidth}
\multirow{3}{*}{SVHN}
& SGD 										& 96.11	&95.68	&94.47	&\textbf{93.81}	&85.53	&\textbf{63.60}	&39.63	&80.67	&\textbf{49.83}	&17.03\\ 
& Adam    									& 96.48	&96.03	&94.04	&\textbf{91.46}	&80.77	&\textbf{57.83}	&35.93	&78.93	&\textbf{47.50}	&12.43\\
& RMSProp    								& 96.42	&95.91	&94.07	&\textbf{91.87}	&81.13	&\textbf{57.90}	&34.10	&76.93	&\textbf{46.33}	&11.30\\ \Xhline{2\arrayrulewidth}
& & & $\sigma^2 = 0.01$ &$\sigma^2 = 0.05$&$\sigma^2 = 0.1$&$\eps=0.5$&$\eps=1.0$&$\eps=1.5$ &$\eps=\frac{1}{255}$&$\eps=\frac{2}{255}$&$\eps=\frac{4}{255}$ \\  \Xhline{2\arrayrulewidth}
\multirow{3}{*}{Caltech101(top1)}
& SGD 										& 70.80	&68.13	&57.67	&43.46	&58.77	&47.03	&35.17	&48.27	&27.87	&4.70\\ 
& Adam    									& 72.32	&58.34	&19.88	&8.55	&57.70	&44.47	&29.60	&45.47	&22.03	&2.63\\
& RMSProp    								& 73.82	&69.38	&51.34	&33.17	&37.80	&11.30	&2.80	&14.77	&1.93	&0.03\\ \hline
\multirow{3}{*}{Caltech101(top5)}
& SGD 										& 85.96	&84.84	&\textbf{77.57}	&67.16	&85.30	&\textbf{84.43}	&79.00	&83.97	&75.47	&\textbf{52.27}\\ 
& Adam    									& 88.08	&79.48	&\textbf{38.55}	&19.67	&82.03	&\textbf{80.67}	&77.63	&83.97	&72.07	&\textbf{45.87}\\
& RMSProp    								& 88.37	&85.90	&\textbf{72.19}	&52.74	&80.20	&\textbf{63.40}	&45.93	&63.97	&41.33	&\textbf{23.90}\\ \Xhline{2\arrayrulewidth}
& & & $\sigma^2 = 0.01$ &$\sigma^2 = 0.05$&$\sigma^2 = 0.1$&$\eps=0.5$&$\eps=1.0$&$\eps=1.5$ &$\eps=\frac{1}{255}$&$\eps=\frac{2}{255}$&$\eps=\frac{4}{255}$ \\  \Xhline{2\arrayrulewidth}
\multirow{3}{*}{Imagenette}
& SGD 										& 89.44	&\textbf{84.83}	&67.23&	50.21	&\textbf{70.70}	&44.13	&21.33	&\textbf{47.23}	&14.33	&0.37\\ 
& Adam    									& 89.75	&\textbf{71.84}	&29.05&	17.72	&\textbf{65.30}	&31.53	&11.83	&\textbf{39.43}	&6.23	&0.07\\
& RMSProp    								& 89.77	&\textbf{73.25}	&28.28&	17.16	&\textbf{62.30}	&30.27	&11.27	&\textbf{38.40}	&6.20	&0.03\\ 
\Xhline{2\arrayrulewidth}
\end{tabular}
\end{center}
\label{table:complete_result_standard_generalization_and_robustness}
\end{table}

\section{Results on Standard Generalization and Model Robustness}\label{sec:appendix_complete_results}
\textbf{Main results:} Table \ref{table:complete_result_standard_generalization_and_robustness} summarizes the result on the standard generalization ability and robustness properties of the models trained by SGD, Adam, and RMSProp on seven vision datasets.
All results are averaged over three independently initialized and trained models.
To evaluate standard generalization, we measure the classification accuracy of the models on the testing data.
To capture model robustness, we measure the classification accuracy of the models on the testing data perturbed using Gaussian perturbations, $\ell_2$ and $\ell_\infty$-bounded perturbations \citep{croce2020reliable}.
Perturbations with varying degrees of severity are included in the evaluation to ensure the observation of the robustness difference is not limited to perturbations with any particular parameters.
%
The degree of severity is determined by the variance of the Gaussian perturbation and an $\ell_2$ and $\ell_\infty$ norm for the attacks.
We select those parameters so the range of the accuracy differences between models is similar across different datasets.
Particularly, the highlighted results in Table \ref{table:complete_result_standard_generalization_and_robustness} are in a similar range, so we use them to plot Figure~\ref{fig:teaser_comparison}.
We also ensure that the original image semantics remains in the perturbed images, and we provide a visualization of the perturbed images in Figure~\ref{fig:gaussian_perturbed_images} to \ref{fig:linf_perturbed_images}.

Finally, for CIFAR100 and Caltech101, because of the large number of classes in the dataset, we use the top-5 classification accuracy to plot Figure \ref{fig:teaser_comparison} as the results are within a range similar to other datasets with 10 classes.
The observation of the similar standard generalization and different robustness holds on both top-1 and top-5 accuracy.

\Rebuttal{\subsection{Results on Models with Batch Normalization Enabled}}

When BN layers are activated in PreActResNet18, we observe that the models exhibit similar standard generalization performance, yet the robustness difference between SGD and adaptive gradient methods remains evident. This observation is in line with the results presented in Table~\ref{table:complete_result_standard_generalization_and_robustness}, where BN layers are disabled.
Notably, the accuracy of models with BN enabled is significantly lower compared to their BN-disabled counterparts under almost all types of perturbations, particularly under stronger perturbations. 
This finding aligns with the results from the previous work~\citep{benz2021batch, benz2021revisiting, wang2022removing}. 
\vspace{4mm}

\begin{table}[t]
\caption{\textbf{Results on standard generalization and robustness of models trained with BN enabled. } 
We follow the exact optimization configuration as the ones used in generating Table~\ref{table:complete_result_standard_generalization_and_robustness}. The only modification is that \textbf{BN is enabled}. 
}
\begin{center}
\renewcommand{\arraystretch}{1.3}
\renewcommand{\tabcolsep}{1.1mm}
\scriptsize
\begin{tabular}{ c  c | c| ccc| ccc| ccc } 
\multirow{2}{*}{Dataset} &  \multirow{2}{*}{Optimization}									& \multirow{2}{*}{Test}         & \multicolumn{3}{c}{\underline{Gaussian perturbations}}   & \multicolumn{3}{c}{\underline{$\ell_2$-bounded attack}}   & \multicolumn{3}{c}{\underline{$\ell_\infty$-bounded attack}}  \\ 
& & & $\sigma^2 = 0.001$ &$\sigma^2 = 0.005$&$\sigma^2 = 0.007$&$\eps=0.1$&$\eps=0.2$&$\eps=0.3$ &$\eps=\frac{1}{255}$&$\eps=\frac{2}{255}$&$\eps=\frac{4}{255}$ \\  \Xhline{2\arrayrulewidth}
\multirow{3}{*}{CIFAR10}
& SGD 										& 92.24	&86.91 & 55.23 & 43.29 & 64.84 & 27.86 & 6.96 & 48.64 & 9.14 & 0.04 \\ 
& Adam    									& 93.38	&85.67 & 50.41 & 39.11 & 56.5 & 15.96 & 2.63 & 37.7 & 3.8 & 0 \\
& RMSProp    								& 93.57	&86.97 & 52.09 & 39.86 & 55.93 & 15.16 & 2.23 & 37.7 & 3.76 & 0 \\ \Xhline{2\arrayrulewidth}
& & & $\sigma^2 = 0.001$ &$\sigma^2 = 0.005$&$\sigma^2 = 0.007$&$\eps=0.1$&$\eps=0.2$&$\eps=0.3$ &$\eps=\frac{1}{255}$&$\eps=\frac{2}{255}$&$\eps=\frac{4}{255}$ \\  \Xhline{2\arrayrulewidth}
\multirow{3}{*}{CIFAR100}
& SGD 										& 72.24	&57.99 & 26.08 & 19.23 & 34.36 & 10.2 & 3.63 & 21.3 & 4.8 & 0.2\\ 
& Adam    									& 71.36&55.85 & 24.55 & 18.23 & 27.56 & 6.66 & 1.73 & 15.26 & 2.9 & 0.23\\
& RMSProp    								& 70.99&56.13 & 24.56 & 18.03 & 23.8 & 5.1 & 1.1 & 13.7 & 2 & 0.1\\ \Xhline{2\arrayrulewidth}
& & & $\sigma^2 = 0.001$ &$\sigma^2 = 0.003$&$\sigma^2 = 0.005$&$\eps=0.1$&$\eps=0.2$&$\eps=0.3$ &$\eps=\frac{1}{255}$&$\eps=\frac{2}{255}$&$\eps=\frac{4}{255}$ \\  \Xhline{2\arrayrulewidth}
\multirow{3}{*}{SVHN}
& SGD 										& 94.16	&95.81 & 94.96 & 92.90 & 81.73 & 60.56 & 41.96 & 76.13 & 49.26 & 15.3\\ 
& Adam    									& 96.62	&96.09 & 93.96 & 91.19 & 80.13 & 47.76 & 21.86 & 72.36 & 32.76 & 4.36\\
& RMSProp    								& 96.44	&94.86 & 93.75 & 91.02 & 79.86 & 48.43 & 22.16 & 72.36 & 33.7 & 3.96\\ \Xhline{2\arrayrulewidth}
& & & $\sigma^2 = 0.01$ &$\sigma^2 = 0.05$&$\sigma^2 = 0.1$&$\eps=0.5$&$\eps=1.0$&$\eps=1.5$ &$\eps=\frac{1}{255}$&$\eps=\frac{2}{255}$&$\eps=\frac{4}{255}$ \\  \Xhline{2\arrayrulewidth}
\multirow{3}{*}{Caltech101}
& SGD 										& 78.61	&72.69 &45.38 &25.87 &61.23 &44.03 &23.03 &46.80  & 13.13 &0.83\\ 
& Adam    									& 79.58	&62.21 &21.08 &10.35 &56.76 &34.46 &13.06 &37.3   & 5.66  &0.23\\
& RMSProp    								& 75.56	&69.89 &45.38 &23.13 &58.6 	&44.6  &20.5  &42.6   & 11.3  &0.6\\ \Xhline{2\arrayrulewidth}
& & & $\sigma^2 = 0.01$ &$\sigma^2 = 0.05$&$\sigma^2 = 0.1$&$\eps=0.5$&$\eps=1.0$&$\eps=1.5$ &$\eps=\frac{1}{255}$&$\eps=\frac{2}{255}$&$\eps=\frac{4}{255}$ \\  \Xhline{2\arrayrulewidth}
\multirow{3}{*}{Imagenette}
& SGD 										& 89.35	& 76.67	&46.18&	28.27	& 73.43	&42.96	&17.56	& 48.93	&14.33	&0.03\\ 
& Adam    									& 91.88	& 67.29	&24.30&	14.92	& 67.66	&24.06	&3.06	& 31.4	&6.23	&0\\
& RMSProp    								& 91.93	& 67.02	&23.79&	15.43	& 68.76	&26.1	&4.1	& 33.66	&6.20	&0\\ 
\Xhline{2\arrayrulewidth}
\end{tabular}
\end{center}
\label{table:complete_result_standard_generalization_and_robustness_BN_enabled}
\end{table}

\Rebuttal{\subsection{Results on Vision Transformers}}
In addition to the network designs considered in Table~\ref{table:exp_setup}, we extend our work to ViT in order to verify whether similar observations can be drawn on other neural network architectures.

It is important to note that the dataset utilized in our paper is significantly smaller in size compared to larger datasets such as Imagenet and JFT-300M. 
Recent research, such as the work by \cite{zhu2023understanding}, has shown that ViT tends to generalize poorly on small datasets when trained from scratch. In particular, \cite{zhu2023understanding} empirically demonstrated that ViT and ResNet learn distinct representations on small datasets while converging to similar representations on larger datasets. 

Therefore, we perform fine-tuning on a pre-trained ViT-B/16. Among the datasets we considered, Imagenette is a 10-class subset of the Imagenet-1k dataset, making it especially suitable for the fine-tuning task, since the publicly available ViT checkpoint was pre-trained on Imagenet-1k. 
Also, it is important to note that the pretrained models were originally trained using Adam. In our fine-tuning process, we treat ViT as a feature extractor (i.e., no weight update on the transformer encoder), with a focus on fine-tuning the Multi-Layer Perceptrons (MLP) head. Our approach follows prior work~\citep{steiner2022how} and incorporates the three different optimizers, each fine-tuned for 10 epochs. We initiated the fine-tuning process with an initial learning rate of 0.01, followed by a cosine decay learning rate schedule and a linear warmup. Throughout this process, we maintained a fixed batch size of 512.

To evaluate the robustness of the fine-tuned models, we maintained the exact same perturbation strengths, including the variance of Gaussian noise and $\eps$ for adversarial perturbations, as used in Table~\ref{table:complete_result_standard_generalization_and_robustness}. The results can be found in Table~\ref{table:vit_imagenette}. We draw three observations. 

First, all models fine-tuned with the three different optimizers achieve near $100\%$ test accuracy, a substantial improvement from the $89\%$ accuracy when training from scratch using PreActResNet18. This significant boost in standard generalization highlights the effectiveness of fine-tuning with ViT. 
Second, we observe that the fine-tuned models exhibit a notable increase in robustness to Gaussian noise. 
However, they are highly vulnerable to adversarial perturbations. 
This observation is consistent with the results from existing literature~\citep{zhang2019theoretically}, where a trade-off is often present between standard accuracy and adversarial robustness. 
Finally, we make a similar observation on the robustness difference between models fine-tuned with the three optimizers, where models fine-tuned with SGD exhibited greater robustness to both Gaussian noise and adversarial perturbations when compared to models fine-tuned using Adam and RMSProp.
\vspace{5mm}

\begin{table}[t]
\caption{\textbf{Results on standard generalization and robustness of ViT-B/16 fined-tuned on the Imagenette dataset.} 
}
\begin{center}
\renewcommand{\arraystretch}{1.3}
\renewcommand{\tabcolsep}{1.1mm}
\scriptsize
\begin{tabular}{ c  c | c| ccc| ccc| ccc } 
\multirow{2}{*}{Model} &  \multirow{2}{*}{Optimization}									& \multirow{2}{*}{Test}         & \multicolumn{3}{c}{\underline{Gaussian perturbations}}   & \multicolumn{3}{c}{\underline{$\ell_2$-bounded attack}}   & \multicolumn{3}{c}{\underline{$\ell_\infty$-bounded attack}}  \\ 
& & & $\sigma^2 = 0.01$ &$\sigma^2 = 0.05$&$\sigma^2 = 0.1$&$\eps=0.5$&$\eps=1.0$&$\eps=1.5$ &$\eps=\frac{1}{255}$&$\eps=\frac{2}{255}$&$\eps=\frac{4}{255}$ \\  \Xhline{2\arrayrulewidth}
\multirow{3}{*}{ViT-b/16}
& SGD 										& 99.18	&99.05 & 96.16 & 91.25 & 6.2 & 0 & 0 & 1.3 & 0 & 0\\ 
& Adam    									& 99.93 &99.51 & 94.43 & 88.59 & 5.1 & 0 & 0 & 0.7 & 0 & 0\\ 
& RMSProp    								& 99.92	&99.54 & 95.38 & 89.91 & 5   & 0 & 0 & 0.8 & 0 & 0 \\\Xhline{2\arrayrulewidth}
\end{tabular}
\end{center}
\label{table:vit_imagenette}
\vspace{5mm}
\end{table}

\begin{table}[t]
\caption{\textbf{Results on standard generalization and robustness of models on an audio classification task on the Speech Commands dataset.} 
}
\begin{center}
\renewcommand{\arraystretch}{1.3}
\renewcommand{\tabcolsep}{0.8mm}
\scriptsize
\begin{tabular}{ c  c | c| ccc| ccc| ccc } 
\multirow{2}{*}{Dataset} &  \multirow{2}{*}{Optimization}									& \multirow{2}{*}{Test}         & \multicolumn{3}{c}{\underline{Gaussian perturbations}}   & \multicolumn{3}{c}{\underline{$\ell_2$-bounded attack}}   & \multicolumn{3}{c}{\underline{$\ell_\infty$-bounded attack}}  \\ 
& & & $\sigma^2 = 0.001$ &$\sigma^2 = 0.003$&$\sigma^2 = 0.005$&$\eps=0.01$&$\eps=0.05$&$\eps=0.1$ &$\eps=0.0001$&$\eps=0.0005$&$\eps=0.001$ \\  \Xhline{2\arrayrulewidth}
\multirow{3}{*}{\shortstack[l]{Speech\\Commands}}
& SGD 										& 85.14	& 55.76 & 39.88 & 33.42 & 70.01 & 20.31 & 9.81 & 75.6 & 36.71 & 13.47 \\ 
& Adam    									& 85.47	& 54.73 & 38.87 & 31.95 & 60.74 & 17.87 & 8.49 & 71.97 & 29.2 & 10.15 \\
& RMSProp    								& 84.67	& 52.37 & 36.59 & 27.94 & 59.57 & 19.04 & 8.88 & 70.50 & 30.95 & 11.01\\ \Xhline{2\arrayrulewidth}
\end{tabular}
\end{center}
\label{table:audio_data}
\end{table}

\begin{table}[t]
\caption{\textbf{Results on standard generalization and robustness of models trained by SGD without and with momentum (0.9).} 
}
\begin{center}
\renewcommand{\arraystretch}{1.3}
\renewcommand{\tabcolsep}{1.1mm}
\scriptsize
\begin{tabular}{ c  c | c| ccc| ccc| ccc } 
\multirow{2}{*}{Dataset} &  \multirow{2}{*}{Optimization}									& \multirow{2}{*}{Test}         & \multicolumn{3}{c}{\underline{Gaussian perturbations}}   & \multicolumn{3}{c}{\underline{$\ell_2$-bounded attack}}   & \multicolumn{3}{c}{\underline{$\ell_\infty$-bounded attack}}  \\ 
& & & $\sigma^2 = 0.001$ &$\sigma^2 = 0.005$&$\sigma^2 = 0.007$&$\eps=0.1$&$\eps=0.2$&$\eps=0.3$ &$\eps=\frac{1}{255}$&$\eps=\frac{2}{255}$&$\eps=\frac{4}{255}$ \\  \Xhline{2\arrayrulewidth}
\multirow{2}{*}{CIFAR10}
& SGD 										& 90.16	&87.35	&74.13	&68.66	&67.40	&37.57	&16.30	&53.93	&21.27	&0.93 \\ 
& SGD-m    									& 89.79 &87.28  &73.207 &66.783 &67.1 	&38.067 &15.9   &55.667 &20.233 &0.6333 \\ \Xhline{2\arrayrulewidth}
& & & $\sigma^2 = 0.001$ &$\sigma^2 = 0.005$&$\sigma^2 = 0.007$&$\eps=0.1$&$\eps=0.2$&$\eps=0.3$ &$\eps=\frac{1}{255}$&$\eps=\frac{2}{255}$&$\eps=\frac{4}{255}$ \\  \Xhline{2\arrayrulewidth}
\multirow{2}{*}{CIFAR100}
& SGD 										& 59.76	&56.88	&46.26	&41.28	&28.47	&12.93	&5.90	&18.80	&5.57	&1.17\\ 
& SGD-m    									& 56.08	&55.03 	&44.97 	&40.03 	&29.2 	&12.4 	&5.7 	&19.7 	&6.5 	&0.8\\ \Xhline{2\arrayrulewidth}
& & & $\sigma^2 = 0.001$ &$\sigma^2 = 0.003$&$\sigma^2 = 0.005$&$\eps=0.1$&$\eps=0.2$&$\eps=0.3$ &$\eps=\frac{1}{255}$&$\eps=\frac{2}{255}$&$\eps=\frac{4}{255}$ \\  \Xhline{2\arrayrulewidth}
\multirow{2}{*}{SVHN}
& SGD 										& 96.11	&95.68	&94.47	&93.81	&85.53	&63.60	&39.63	&80.67	&49.83	&17.03\\ 
& SGD-m    									& 96.14 &95.71  &94.44 &92.80   &82.7 &60.53 &39.03 &77.33 &49.26 &15.96\\ \Xhline{2\arrayrulewidth}
& & & $\sigma^2 = 0.01$ &$\sigma^2 = 0.05$&$\sigma^2 = 0.1$&$\eps=0.5$&$\eps=1.0$&$\eps=1.5$ &$\eps=\frac{1}{255}$&$\eps=\frac{2}{255}$&$\eps=\frac{4}{255}$ \\  \Xhline{2\arrayrulewidth}
\multirow{2}{*}{Caltech101}
& SGD 										& 70.80	&68.13	&57.67	&43.46	&58.77	&47.03	&35.17	&48.27	&27.87	&4.70\\ 
& SGD-m    									& 69.89 &67.77 &55.77 &41.34 &54.86 &43.36 &31.06 &44.16 &23.9 &3.133\\ \Xhline{2\arrayrulewidth}
& & & $\sigma^2 = 0.01$ &$\sigma^2 = 0.05$&$\sigma^2 = 0.1$&$\eps=0.5$&$\eps=1.0$&$\eps=1.5$ &$\eps=\frac{1}{255}$&$\eps=\frac{2}{255}$&$\eps=\frac{4}{255}$ \\  \Xhline{2\arrayrulewidth}
\multirow{2}{*}{Imagenette}
& SGD 										& 89.44	&84.83	&67.23 &50.21	&70.70	&44.13	&21.33	&47.23	&14.33	&0.37\\ 
& SGD-m    									& 88.69 &85.05  &68.58 &51.15   &75.53  &49.2   &24.53  &56.43  &17.76  &0.2\\ \Xhline{2\arrayrulewidth}
\end{tabular}
\end{center}
\label{table:complete_result_standard_generalization_and_robustness_SGD_m}
\end{table}

\Rebuttal{\subsection{Results with an Audio Dataset}}
Besides the vision domain, we extend our work to the audio domain since audio signals offer a frequency-based interpretation as well. We include additional results in Table~\ref{table:audio_data}, which compare the standard generalization and robustness properties of an audio classifier trained on the Speech Commands dataset~\citep{warden2018speech}. 
We focus on the PreActResNet18 architectures and all models are trained for 200 epochs, with an initial learning rate of 0.1 and learning rate decay by a factor of 0.1 at epoch 100 and 150. We consider the accuracy of models under Gaussian- and adversarially-perturbed test sets. Manual verification was conducted to ensure that the noisy audio phrase could still be recognizable.

Results demonstrate that despite similar test accuracy, the models trained using SGD exhibit greater robustness when compared to the other two optimization methods. These insights provide valuable context to the generalizability of our initial observations, offering a more comprehensive understanding of how optimizers perform in the context of different data modalities.

\Rebuttal{\subsection{Results with Momentum-enabled SGD}}
Additional results with momentum-enabled SGD (SGD-m) are included in Table~\ref{table:complete_result_standard_generalization_and_robustness_SGD_m}. We maintain the exact same optimization configuration as that is used for generating the SGD results presented in Table~\ref{table:complete_result_standard_generalization_and_robustness}, and the only variation is an additional momentum term with a coefficient of $\beta = 0.9$. The result shows that models optimized by both vanilla SGD and SGD-m exhibit similar trends in terms of generalization and robustness. 

\section{Modifying the Training Inputs in Sec.\ref{sec:observation_on_irrelevant_frequency}}\label{sec:appendix_modifying_training_inputs}
We demonstrate irrelevant frequencies in two settings: i) DCT basis vectors with a low magnitude are irrelevant and ii) high-frequency DCT bases are irrelevant. 
In Figure~\ref{fig:modified_images_mnist} to
\ref{fig:modified_images_imagenette}, we visualize the original images and the modified images used in Sec.~\ref{sec:observation_on_irrelevant_frequency}. 

To understand how the modified training images are generated, we use $\Phi_{\text{nrg}}(x, p)$ with $0<p<100$ to denote the operation that modifies the input image $x$ by removing the DCT basis vectors whose magnitudes are in the bottom $\frac{p}{100}$-th percentile.
We use $M_{\text{nrg}}(x,p)$ to denote the binary mask used in the process. 
Consider an image $x\in\Real^{d\times d}$, the entire process can be formulated as
\begin{equation}
    \Phi_{\text{nrg}}(x, p) = C (\tx \odot M_{\text{nrg}}(x,p)) C^{\top}, \nonumber
\end{equation}
where $\odot$ is the element-wise product and $C$ is the DCT transformation matrix. 
The binary mask $M_{\text{nrg}} \in \left\{0,1\right\}^{d \times d}$ is defined as
\begin{equation}
    M_{\text{nrg}}(x,p)  = 
    \left\{\begin{matrix*}[l]
    1 &\text{if}\quad \abs{\tx_{i,j}} > \phi({\tx}, p)
    \\
    0 &\text{otherwise},
    \end{matrix*}\right. \nonumber
\end{equation}
where $\phi({\tx},p) \in \Real$ computes the $\frac{p}{100}$-th percentile in $\abs{\tx}$. Therefore, DCT basis vectors with a magnitude smaller than the threshold are first discarded in $\tx$, and then this filtered $\tx$ is converted back to the spatial domain.

Similarly, we use $\Phi_{\text{freq}}(x, p)$ to denote the operation that modifies the input image $x$ by removing the DCT basis vectors whose frequency are in the highest $\frac{p}{100}$-th percentile.
We use $M_{\text{freq}}(p)$ to denote the binary mask used in the process. This operation can be formulated as
\begin{equation}
    \Phi_{\text{freq}}(x, p) = C (\tx \odot M_{\text{freq}}(p)) C^{\top}, \nonumber
\end{equation}
where $M_{\text{freq}} \in \left\{0,1 \right\}^{d \times d}$ is defined as:
\begin{equation}
    M_{\text{freq}}(p)  = 
    \left\{\begin{matrix*}[l]
    1 &\text{if}\quad i^2+j^2 > \frac{p}{100} \sqrt{2} d
    \\
    0 &\text{otherwise},
    \end{matrix*}\right. \nonumber
\end{equation}
and $i,j$ are frequency bases.
Notice that $M_{\text{freq}}$ only depends on the size of the image, whereas $M_{\text{nrg}}$ depends on the input $x$ since we identify the threshold value in $\abs{\tx}$.
Examples of the modified images and the modification process are shown in Appendix~\ref{sec:appendix_additional_figure}.

\section{Linear Regression Analysis}\label{sec:appendix_linear_analysis}
\subsection{Understanding the Synthetic Dataset}\label{sec:appendix_synthetic_data}
The goal of the linear analysis is to study the learning dynamics of different algorithms on a synthetic dataset where one can clearly define the frequency-domain signal-target (ir)relevance.
This motivates us to directly define the distribution of the input signal in the frequency domain.
In Sec.~\ref{sec:linear_analysis}, we consider $\tilde{X}$ follows a Gaussian distribution $\mathcal{N}(\tmu, \tSigma)$, and for analytical tractability, we consider $\tmu = 0$ and a diagonal structure of $\tSigma$, i.e., $\tSigma= \diag(\tsigma_0^2,...,\tsigma_{d-1}^2)$. 
Admittedly, it is quite unconventional to define the data distribution directly in the frequency domain, so we provide a few examples in Table \ref{table:examples_of_synthetic_data} to illustrate the structure of the input data in both representations.

\begin{table}[t]
\caption{\textbf{Examples of the synthetic data distribution in the frequency and the spatial domain.}
}
\begin{center}
\renewcommand{\arraystretch}{2}
\renewcommand{\tabcolsep}{1.3mm}
\scriptsize
\begin{tabular}{ lll } 
$\tSigma$													& Frequency-domain Representation & Spatial-domain Representation  \\ \Xhline{2\arrayrulewidth}
$\diag \left\{\tsigma_0^2, 0, 0 \right\}$ 						& $(\tilde{X}_0, 0, 0)$    					& $(\sqrt\frac{1}{3}\tilde{X}_0, \sqrt\frac{1}{3}\tilde{X}_0, \sqrt\frac{1}{3}\tilde{X}_0)$       \\
$\diag \left\{0, \tsigma_1^2,0  \right\}$ 						& $(0, \tilde{X}_1, 0)$    					& $(\sqrt\frac{1}{2}\tilde{X}_1, 0, - \sqrt\frac{1}{2}\tilde{X}_1)$       \\
$\diag \left\{0, 0, \tsigma_2^2 \right\}$ 						& $(0, 0, \tilde{X}_2)$    					& $(\sqrt\frac{1}{6}\tilde{X}_2, - \sqrt\frac{2}{3}\tilde{X}_2,  + \sqrt\frac{1}{6}\tilde{X}_2)$       \\ \hline
$\diag \left\{0, \tsigma_1^2, \tsigma_2^2 \right\}$ 			& $(0, \tilde{X}_1, \tilde{X}_2)$    			& $(\sqrt\frac{1}{2}\tilde{X}_1 + \sqrt\frac{1}{6}\tilde{X}_2, - \sqrt\frac{2}{3}\tilde{X}_2, - \sqrt\frac{1}{2}\tilde{X}_1 + \sqrt\frac{1}{6}\tilde{X}_2)$       \\
$\diag \left\{\tsigma_0^2, 0, \tsigma_2^2 \right\}$ 			& $(\tilde{X}_0, 0, \tilde{X}_2)$    			& $(\sqrt\frac{1}{3}\tilde{X}_0 + \sqrt\frac{1}{6}\tilde{X}_2, \sqrt\frac{1}{3}\tilde{X}_0 - \sqrt\frac{2}{3}\tilde{X}_2, \sqrt\frac{1}{3}\tilde{X}_0 + \sqrt\frac{1}{6}\tilde{X}_2)$       \\
$\diag \left\{\tsigma_0^2, \tsigma_0^2, 0 \right\}$ 			& $(\tilde{X}_0, \tilde{X}_1, 0)$    			& $(\sqrt\frac{1}{3}\tilde{X}_0 + \sqrt\frac{1}{2}\tilde{X}_1 , \sqrt\frac{1}{3}\tilde{X}_0, \sqrt\frac{1}{3}\tilde{X}_0 - \sqrt\frac{1}{2}\tilde{X}_1)$       \\ \hline
$\diag \left\{\tsigma_0^2, \tsigma_1^2, \tsigma_2^2 \right\}$ 	& $(\tilde{X}_0, \tilde{X}_1, \tilde{X}_2)$   & $(\sqrt\frac{1}{3}\tilde{X}_0 + \sqrt\frac{1}{2}\tilde{X}_1 + \sqrt\frac{1}{6}\tilde{X}_2, \sqrt\frac{1}{3}\tilde{X}_0 - \sqrt\frac{2}{3}\tilde{X}_2, \sqrt\frac{1}{3}\tilde{X}_0 - \sqrt\frac{1}{2}\tilde{X}_1 + \sqrt\frac{1}{6}\tilde{X}_2)$       \\ \Xhline{2\arrayrulewidth}
\end{tabular}
\end{center}
\label{table:examples_of_synthetic_data}
\end{table}
Similar to Sec.~\ref{sec:signGD_dynamics}, we focus on a low dimensional setting with $d = 3$.
%
The first six rows in Table \ref{table:examples_of_synthetic_data} represent the scenario when there are zero variances in $\tSigma$.
Notice the notion of irrelevant information in the data is different in the two representations.
In the frequency domain, an irrelevant frequency indicates that the data has a value of zero at the particular frequency.
In the spatial domain, having irrelevant frequency means that there are redundant dimensions in the spatial representation of the data because the value of data at some dimensions can be fully predictable by knowing the values of data at some other dimensions.

\subsection{Derivation of \Eqref{eq:linear_adv_risk}}\label{sec:appendix_linear_adv_risk}
The adversarial risk under an $\ell_2$-norm bounded perturbation with a size of $\eps$ is
\begin{linenomath*}\begin{align}
	\mathcal{R}_\text{a}(\tw) 
    &\eqdef \mathbb{E}_{(\tilde{X}, Y)} \biggl[ \max _{||\Delta \tx||_2 \leq \eps} \ell(\tilde{X}+\Delta \tx, Y; \tw) \biggr]  \nonumber \\
    &=  \mathbb{E}_{(\tilde{X}, Y)} \biggl[\underset{||\Delta \tx||_2 \leq \eps}{\max}  \frac{1}{2}\left| f(\tilde{X} + \Delta \tx,\tw)  - Y \right|^2 \biggr] \nonumber \\
	&=  \mathbb{E}_{\tilde{X}} \biggl[\underset{||\Delta \tx||_2 \leq \eps}{\max}  \frac{1}{2}\left|\ip{\tilde{X} + \Delta \tx}{\tw}  - \ip{\tilde{X}}{\tw^*} \right|^2 \biggr] \nonumber \\
    &=  \mathbb{E}_{\tilde{X}} \biggl[\underset{||\Delta \tx||_2 \leq \eps}{\max}  \frac{1}{2}\left|\ip{\tilde{X}}{\tw - \tw^*}  + \ip{\Delta \tx}{\tw} \right|^2 \biggr], \nonumber
\end{align}\end{linenomath*}
where we focus on the expectation over $\tilde{X}$, as $Y$ is replaced with $\ip{\tilde{X}}{\tw^*}$.

\subsection{Derivation of \Eqref{eq:linear_adv_risk_maximizer}}\label{sec:appendix_maximizer_of_linear_adv_risk}
Given a r.v. $\tilde{X}$, we define $\Delta \tx^*$ to be the maximizer of the term inside the expectation of (\ref{eq:linear_adv_risk}):
\begin{equation}
    \Delta \tx^* \eqdef \underset{||\Delta \tx||_2 \leq \eps}{\argmax} \frac{1}{2}\left|\ip{\tilde{X}}{\tw - \tw^*}  + \ip{\Delta \tx}{\tw} \right|^2. \nonumber
\end{equation}
To maximize this term, we need the two inner product terms to have the same sign. This means 
\begin{equation}
    \Delta \tx^*  = \sign[\ip{\tilde{X}}{\tw - \tw^*}] \underset{||\Delta \tx||_2 \leq \eps}{\argmax} \left| \ip{\Delta \tx}{\tw} \right|^2. \nonumber
\end{equation}
For the remaining argmax term, we can first use the Cauchy-Schwarz inequality to obtain
\begin{equation}
    \max_{||\Delta \tx||_2 \leq \eps} \left| \ip{\Delta \tx}{\tw} \right|^2 
    \leq 
    \max_{||\Delta \tx||_2 \leq \eps} \norm{\Delta \tx}_2^2\norm{\tw}_2^2 = \eps^2 \norm{\tw}_2^2, \nonumber
\end{equation}
which leads to 
\begin{equation}
    \underset{||\Delta \tx||_2 \leq \eps}{\argmax} \left| \ip{\Delta \tx}{\tw} \right|^2 = \eps \frac{\tw}{\norm{\tw}_2}. \nonumber
\end{equation}
Finally, we have
\begin{equation}
    \Delta \tx^* = \eps \sign[\ip{\tilde{X}}{\tw-\tw^*}] \frac{\tw}{||\tw||_2}. \nonumber
\end{equation}

\subsection{Derivation of \Eqref{eq:linear_max_adv_risk}}\label{sec:appendix_linear_max_adv_risk}
The adversarial risk is
\begin{align}
\mathcal{R}_\text{a}(\tw)  &=  \frac{1}{2} \mathbb{E}_{\tilde{X}} \biggl[  \left|\ip{\tilde{X}}{\tw - \tw^*}  + \eps\sign[\ip{\tilde{X}}{\tw - \tw^*}]||\tw||_2 \right|^2 \biggr]  \nonumber \\
    &=  \frac{1}{2} \mathbb{E}_{\tilde{X}} \biggl[ \ip{\tilde{X}}{\tw - \tw^*}^2  + 2\eps\left| \ip{\tilde{X}}{\tw - \tw^*}\right| ||\tw||_2  +\eps^2 ||\tw||_2^2 \biggr] \nonumber \\
    &=  \frac{1}{2} \sum_{i\in\mathbb{I}_\text{rel}}\tsigma_{i}^2 (\tw_i - \tw_i^*)^2+ \eps \mathbb{E}_{\tilde{X}} \biggl[ \left| \ip{\tilde{X}}{\tw - \tw^*}\right|\biggr]||\tw||_2 +\frac{\eps^2}{2} ||\tw||_2^2 \nonumber.
\end{align}

To compute the expectation, we first denote $Z = \sum_{i\in \mathbb{I}_\text{rel}}\tilde{X}_i (\tw_i - \tw_i^*)$. Because $\tsigma^2_i=0$ for all $i\in\mathbb{I}_\text{irrel}$, this allows us to ignore those irrelevant frequencies in the summation in $Z$. This leads us to
\begin{equation}
    \mathbb{E}_{\tilde{X}} \biggl[ \left| \ip{\tilde{X}}{\tw - \tw^*}\right|\biggr] = \mathbb{E}_{Z} \biggl[ \left| Z \right|\biggr]. \nonumber
\end{equation}
Since $Z$ is a linear combination of zero-mean Gaussian r.v.'s, this makes it also a zero-mean Gaussian r.v, i.e., $\mathbb{E}[Z] = 0$. The variance of $Z$ is
\begin{align}
    \sigma_Z^2 &= \mathbb{E}[Z^2] - \mathbb{E}[Z]^2 \nonumber \\ 
               &= \mathbb{E}\biggl[\sum_{i\in \mathbb{I}_\text{rel}}\sum_{j\in \mathbb{I}_\text{rel}, i\neq j} \biggl[\tilde{X}_i\tilde{X}_j (\tw_i - \tw_i^*)(\tw_j - \tw_j^*)\biggr] + \sum_{i\in \mathbb{I}_\text{rel}}\biggl[ \tilde{X}_i^2 (\tw_i - \tw_i^*)^2\biggr]\biggr] \nonumber \\
               &= \mathbb{E}\biggl[\sum_{i\in \mathbb{I}_\text{rel}}\biggl[ \tilde{X}_i^2 (\tw_i - \tw_i^*)^2\biggr]\biggr] \nonumber \\
               &= \sum_{i\in \mathbb{I}_\text{rel}} \tsigma_i^2 (\tw_i - \tw_i^*)^2, \nonumber
\end{align}
where the expectation on the cross-multiplication term is zero because $\tilde{X}_i$ and $\tilde{X}_j$ are independent r.v.'s. This means $Z\sim \mathcal{N}(0, \sigma_Z^2)$ with $\sigma_Z^2 = \sum_{i\in \mathbb{I}_\text{rel}} \tsigma_i^2 (\tw_i - \tw_i^*)^2$.
Therefore, $\mathbb{E}_{Z} \biggl[ \left| Z \right|\biggr]$ is the expectation of a folded normal distribution:
\begin{equation}
    \mathbb{E}_{Z} \biggl[ \left| Z \right|\biggr] = \sigma_Z \sqrt{\frac{2}{\pi}} = \sqrt{\frac{2}{\pi}\sum_{i\in\mathbb{I}_\text{rel}}\tsigma_{i}^2 (\tw_i - \tw_i^*)^2}. \nonumber
\end{equation}

\subsection{Derivations of the GD Dynamics: \Eqref{eq:linear_spatial_GD_dynamics} and \Eqref{eq:linear_freq_GD_dynamics}}\label{sec:appendix_derivation_gd_dynamics}
The gradient computed using the population risk is $\nabla_w \mathcal{R}_{\text{s}}(w(t)) = \EE{XX^\top}e(t) = \Sigma e(t)$, 
and the learning dynamics of GD in the spatial domain can be captured using: 
\begin{linenomath*}\begin{align}
    e(t+1) &= w(t+1) - w^* \nonumber \\
        &= w(t) - \eta \nabla_w \mathcal{R}_{\text{s}}(w(t)) - w^* \nonumber \\ 
        &= w(t) - w^* - \eta \Sigma e(t) \nonumber \\ 
        &= e(t) - \eta \Sigma e(t) \nonumber \\ 
        &= (I - \eta \Sigma) e(t) \nonumber \\
        &= (I-\eta \Sigma)^{t+1} e(0). \nonumber
\end{align}\end{linenomath*}
This shows that the learned weight converges to the optimal weight $w^*$ at a rate depending on $\Sigma$.
To see the GD dynamics in the frequency domain, we can simply perform DCT on both sides of (\ref{eq:linear_spatial_GD_dynamics}):
\begin{linenomath*}\begin{align}
    \te(t+1) &= C(I-\eta \Sigma)^{t+1} e(0)\nonumber \\
    &= C(I-\eta \Sigma)^{t+1} C^\top\te(0) \nonumber \\
    &= C(I-\eta \Sigma)^{t} C^\top C (I-\eta \Sigma) C^\top \te(0) \nonumber \\
    &= C(I-\eta \Sigma)^{t-1} C^\top C (I-\eta \Sigma) C^\top C (I-\eta \Sigma) C^\top\te(0) \nonumber \\
    &= \left[C(I-\eta \Sigma)C^{\top}\right]^{t+1} \te(0) \nonumber \\
    &= (I-\eta C\Sigma C^\top)^{t+1} \te(0) \nonumber \\
    &= (I-\eta \tSigma)^{t+1} \te(0), \nonumber
\end{align}\end{linenomath*}
where $\tSigma$ is the covariance of $\tx$.

\subsection{Derivation of the signGD Dynamics for any $\tSigma$: \Eqref{eq:linear_spatial_signGD_dynamics} and \Eqref{eq:linear_freq_signGD_dynamics_short}}\label{sec:appendix_linear_signGD_dynamics_general}
The signGD learning dynamics in the spatial domain is 
\begin{linenomath*}\begin{align}
    e(t+1) &= w(t+1) - w^* \nonumber \\
    &= w(t) - \eta \sign[\nabla_w \mathcal{R}_{\text{s}}(w)] - w^* \nonumber \\
    &= e(t)  - \eta \sign[\Sigma e(t)] .\nonumber 
\end{align}\end{linenomath*}
The signGD learning dynamics in the frequency domain are obtained by taking the DCT transformation on both sides of (\ref{eq:linear_spatial_signGD_dynamics}):
\begin{linenomath*}\begin{align}
    \te(t+1) &= \te(t)  - \eta C \sign[\Sigma e(t)] \nonumber \\
             &= \te(t)  - \eta C \sign[C^{\top} \tSigma C C^{\top} \te(t)] \nonumber \\
             &= \te(t)  - \eta C \sign[C^{\top} \tSigma \te(t)].\nonumber 
\end{align}\end{linenomath*}

\subsection{Derivation of the signGD Dynamics for $\tSigma = \diag \left\{\tsigma_0^2, \tsigma_1^2, 0\right\}$: \Eqref{eq:linear_freq_signGD_dynamics}}\label{sec:appendix_linear_signGD_dynamics}
To understand $\te(t+1)$ with our specific choice of $\Sigma = C^{\top} \tSigma C$ and $\tSigma = \diag \left\{\tsigma_0^2, \tsigma_1^2, 0\right\}$, first notice that
the DCT transformation matrix $C=C^{(3)}$ follows the definition in (\ref{eq:dct_transformation_matrix}):
\begin{linenomath*}\begin{equation}
    C = \begin{bmatrix}
        \sqrt\frac{1}{3} & \sqrt\frac{1}{3} & \sqrt\frac{1}{3}
        \\
        \sqrt\frac{2}{3}\cos{\frac{\pi}{6}} & \sqrt\frac{2}{3}\cos{\frac{\pi}{2}} & \sqrt\frac{2}{3}\cos{\frac{5\pi}{6}}
        \\
        \sqrt\frac{2}{3}\cos{\frac{\pi}{3}} & \sqrt\frac{2}{3}\cos{\pi} & \sqrt\frac{2}{3}\cos{\frac{5\pi}{3}}
        \end{bmatrix}
        =\begin{bmatrix}
        \sqrt\frac{1}{3} & \sqrt\frac{1}{3} & \sqrt\frac{1}{3}
        \\
        \sqrt\frac{1}{2} & 0 & -\sqrt\frac{1}{2}
        \\
        \sqrt\frac{1}{6} & -\sqrt\frac{2}{3} & \sqrt\frac{1}{6}
        \end{bmatrix}. \nonumber \\
\end{equation}\end{linenomath*}
Denote $\sqrt\frac{1}{3}\tsigma_0^2\te_0(t)$ and $\sqrt\frac{1}{2}\tsigma_1^2\te_1(t)$ by using $A(t)$ and $B(t)$, respectively. Putting it all together, we have:
\begin{linenomath*}\begin{align}\label{eq:linear_freq_signGD_dynamics_full_derivation}
    \te(t+1) &= \te(t)  - \eta C \sign[C^{\top} \tSigma C e(t)] \nonumber \\
             &= \te(t)  - \eta C \sign[C^{\top} \tSigma \te(t)] \nonumber \\
             &= \te(t)  - \eta \begin{bmatrix}
                                \sqrt\frac{1}{3} & \sqrt\frac{1}{3} & \sqrt\frac{1}{3}
                                \\
                                \sqrt\frac{1}{2} & 0 & -\sqrt\frac{1}{2}
                                \\
                                \sqrt\frac{1}{6} & -\sqrt\frac{2}{3} & \sqrt\frac{1}{6}
                                \end{bmatrix} \sign\left[\begin{bmatrix}
                                \sqrt\frac{1}{3} & \sqrt\frac{1}{3} & \sqrt\frac{1}{3}
                                \\
                                \sqrt\frac{1}{2} & 0 & -\sqrt\frac{1}{2}
                                \\
                                \sqrt\frac{1}{6} & -\sqrt\frac{2}{3} & \sqrt\frac{1}{6}
                                \end{bmatrix}^{\top} \begin{bmatrix}
                                \tsigma_0^2 \te_0(t)
                                \\
                                \\
                                \tsigma_1^2 \te_1(t)
                                \\
                                \\
                                0
                                \end{bmatrix}\right] \nonumber \\
            &= \te(t)  - \eta \begin{bmatrix}
                                \sqrt\frac{1}{3} & \sqrt\frac{1}{3} & \sqrt\frac{1}{3}
                                \\
                                \sqrt\frac{1}{2} & 0 & -\sqrt\frac{1}{2}
                                \\
                                \sqrt\frac{1}{6} & -\sqrt\frac{2}{3} & \sqrt\frac{1}{6}
                                \end{bmatrix} \begin{bmatrix}
                                \sign\left[\sqrt\frac{1}{3}\tsigma_0^2\te_0(t) + \sqrt\frac{1}{2}\tsigma_1^2\te_1(t)\right]
                                \\
                                \sign\left[\sqrt\frac{1}{3}\tsigma_0^2\te_0(t)\right]
                                \\
                                \sign\left[\sqrt\frac{1}{3}\tsigma_0^2\te_0(t) - \sqrt\frac{1}{2}\tsigma_1^2\te_1(t)\right]
                                \end{bmatrix} \nonumber \\
            &= \te(t)  - \eta \begin{bmatrix}
                                \sqrt\frac{1}{3} & \sqrt\frac{1}{3} & \sqrt\frac{1}{3}
                                \\
                                \sqrt\frac{1}{2} & 0 & -\sqrt\frac{1}{2}
                                \\
                                \sqrt\frac{1}{6} & -\sqrt\frac{2}{3} & \sqrt\frac{1}{6}
                                \end{bmatrix} \begin{bmatrix}
                                \sign[A(t)+B(t)]
                                \\
                                \\
                                \sign[A(t)]
                                \\
                                \\
                                \sign[A(t)-B(t)]
                                \end{bmatrix} \nonumber \\
             &= \te(t) - 
                        \eta\begin{bmatrix}
                            \frac{\sqrt3}{3} (\sign [A(t)+B(t)] + \sign [A(t)] + \sign [A(t)-B(t)] )
                            \\
                            \frac{\sqrt2}{2} (\sign [A(t)+B(t)] -\sign [A(t)-B(t)])
                            \\
                            \frac{\sqrt6}{6} \sign [A(t)+B(t)] - \frac{\sqrt6}{3}\sign [A(t)] + \frac{\sqrt6}{6}\sign [A(t)-B(t)]
                        \end{bmatrix}. \nonumber
\end{align}\end{linenomath*}

\subsection{Understanding the Dynamics of signGD with $\tSigma = \diag \left\{\tsigma_0^2, \tsigma_1^2, 0\right\}$}\label{sec:appendix_linear_signGD_analysis_of_error_terms}
Previous work has shown that for strictly convex problems with a unique minimum, the signGD solution converges to the minimum under a sequence of decaying learning rate: $\lim_{t\to\infty} \eta(t) = 0$ \citep{moulay2019properties}.
In this section, we follow Sec.~\ref{sec:gd_dynamics} where the GD dynamics is studied under a constant learning rate and investigate the behavior of signGD under a fixed $\eta$.
%
Compared to the asymptotic GD solution that converges exactly to the standard risk minimizer, we demonstrate that the asymptotic signGD solution converges to an $O (\eta)$ neighborhood of the standard risk minimizer. 

For the rest of this section, we first perform a partition-based analysis to study the learning dynamics of $\te_0$ and $\te_1$ in Appx.~\ref{sec:appendix:dynamics_of_e0_e1}.
Proposition~\ref{prop:partition_summary} summarizes how the value of weight changes in different partitions, and the weight adaptation of $\te_0$ and $\te_1$ is summarized in Corollary~\ref{cor:sign_diff}.
Based on the corollary, we analyze the dynamics of $\te_2$ in Appx.~\ref{sec:appendix:dynamics_of_e2}.
Lastly, we focus on the differences between the adversarial risk of solutions found by GD and signGD in Appx.~\ref{sec:appendix:dynamics_of_delta_w2}.

\subsubsection{Dynamics of $\te_0$ and $\te_1$ under signGD}\label{sec:appendix:dynamics_of_e0_e1}
With our choice of $\tSigma$, (\ref{eq:linear_freq_signGD_dynamics}) shows that weight adaptation depends on the sign of three terms:
$A(t)$, $A(t)+B(t)$ and $A(t)-B(t)$.
This allows us to study the learning dynamics of signGD by analyzing the three terms in Table~\ref{table:signGD_update_synthetic}.
There are 27 sign combinations in total; however, not all of them are valid. 
For example, consider the combination with $\sign [A(t)]=1$, $\sign[A(t)+B(t)]=-1$.
Those two conditions imply that $B(t)< 0$ and $\abs{A(t)} < \abs{B(t)}$, and this means that $\sign [A(t)-B(t)]$ must be positive.
This makes the entry with $\sign [A(t)-B(t)]=-1$ invalid, as shown in the fifth row in Table~\ref{table:signGD_update_synthetic}.
We denote those entries with invalid sign combinations as n/a.

Notice in (\ref{eq:linear_freq_signGD_dynamics}) that the weight adaptation under signGD depends on the dynamics of $A(t)$ and $B(t)$, and they are functions of $\te_0(t)$ and $\te_1(t)$ respectively, so let us first focus on understanding the weight adaptation at the first two frequency bases.

%
\begin{table}[t]
\caption{\textbf{Learning dynamics of signGD.}
The dynamics of the error term in the frequency domain can be written in a tabular format and the exact update depends on the initialized weight $\tw(0)$ and the true model $\tw^*$. We use $A(t)$ and $B(t)$ to denote $\frac{\sqrt3}{3}\tsigma_0^2\te_0(t)$ and $\frac{\sqrt2}{2}\tsigma_1^2\te_1(t)$, respectively. Invalid sign combinations are denoted by using n/a.
}
\begin{center}
\renewcommand{\arraystretch}{1.5}
\renewcommand{\tabcolsep}{2.7mm}
\scriptsize
\begin{tabular}{l | lll | l | l} 
No.&$\sign[A(t)]$ & $\sign[A(t)+B(t)]$ & $\sign[A(t)-B(t)]$ & $\te(t+1)$ & $\abs{A(t)}$ vs. $\abs{B(t)}$\\ \Xhline{2\arrayrulewidth} 
1&$1  $  & $1  $  & $1  $    & $\te(t) - \eta \left[\sqrt{3},0, 0\right]^\top$ & $\abs{A(t)} > \abs{B(t)}$\\
2&$1  $  & $1  $  & $-1 $    & $\te(t) - \eta \left[\frac{\sqrt3}{3},\sqrt{2}, -\frac{\sqrt6}{3}\right]^\top$ & $\abs{A(t)} < \abs{B(t)}$\\
3&$1  $  & $1  $  & $0  $    & $\te(t) - \eta \left[\frac{2\sqrt3}{3}, \frac{\sqrt2}{2}, -\frac{\sqrt6}{6}\right]^\top$ & $\abs{A(t)} = \abs{B(t)}$\\ \hdashline

4&$1  $  & $-1 $  & $1  $    & $\te(t) - \eta \left[\frac{\sqrt3}{3},-\sqrt{2}, -\frac{\sqrt6}{3}\right]^\top$ & $\abs{A(t)} < \abs{B(t)}$\\ 
n/a&$1  $  & $-1 $  & $-1 $    & n/a & n/a \\
n/a&$1  $  & $-1 $  & $0  $    & n/a & n/a \\\hdashline

5&$1  $  & $0  $  & $1  $    & $\te(t) - \eta \left[\frac{2\sqrt3}{3},-\frac{\sqrt2}{2}, -\frac{\sqrt6}{6}\right]^\top$ & $\abs{A(t)} = \abs{B(t)}$\\
n/a&$1  $  & $0  $  & $-1 $    & n/a & n/a \\
n/a&$1  $  & $0  $  & $0  $    & n/a & n/a \\ \hline

n/a&$-1 $  & $1  $  & $1  $    & n/a & n/a \\
6&$-1 $  & $1  $  & $-1 $    & $\te(t) - \eta \left[-\frac{\sqrt3}{3},\sqrt{2}, \frac{\sqrt6}{3}\right]^\top$ & $\abs{A(t)} < \abs{B(t)}$\\
n/a&$-1 $  & $1  $  & $0  $    & n/a & n/a \\ \hdashline

7&$-1 $  & $-1 $  & $1  $    & $\te(t) - \eta \left[-\frac{\sqrt3}{3},-\sqrt{2}, \frac{\sqrt6}{3}\right]^\top$ & $\abs{A(t)} < \abs{B(t)}$\\ 
8&$-1 $  & $-1 $  & $-1 $    & $\te(t) - \eta \left[-\sqrt{3},0, 0\right]^\top$ & $\abs{A(t)} > \abs{B(t)}$\\
9&$-1 $  & $-1 $  & $0  $    & $\te(t) - \eta \left[-\frac{2\sqrt3}{3}, -\frac{\sqrt2}{2}, \frac{\sqrt6}{6}\right]^\top$ & $\abs{A(t)} = \abs{B(t)}$\\ \hdashline

n/a&$-1 $  & $0  $  & $1  $    & n/a & n/a \\
10&$-1 $  & $0  $  & $-1 $    & $\te(t) - \eta \left[-\frac{2\sqrt3}{3}, \frac{\sqrt2}{2}, \frac{\sqrt6}{6}\right]^\top$ & $\abs{A(t)} = \abs{B(t)}$\\ 
n/a&$-1 $  & $0  $  & $0  $    & n/a & n/a \\ \hline

n/a&$0  $  & $1  $  & $1  $    & n/a & n/a \\
11&$0  $  & $1  $  & $-1 $    & $\te(t) - \eta \left[0, \sqrt{2}, 0\right]^\top$ & $\abs{A(t)} < \abs{B(t)}$\\
n/a&$0  $  & $1  $  & $0  $    & n/a & n/a \\ \hdashline

12&$0  $  & $-1 $  & $1  $    & $\te(t) - \eta \left[0, -\sqrt{2}, 0\right]^\top$ & $\abs{A(t)} < \abs{B(t)}$\\
n/a&$0  $  & $-1 $  & $-1 $    & n/a & n/a \\
n/a&$0  $  & $-1 $  & $0  $    & n/a & n/a \\\hdashline

n/a&$0  $  & $0  $  & $1  $    & n/a & n/a \\
n/a&$0  $  & $0  $  & $-1 $    & n/a & n/a \\
13&$0  $  & $0  $  & $0  $    & Optimal & $\abs{A(t)} = \abs{B(t)}= 0$\\

\Xhline{2\arrayrulewidth}
\end{tabular}
\end{center}

\label{table:signGD_update_synthetic}
\end{table}

There are 13 possible updates in Table~\ref{table:signGD_update_synthetic}.
%
Notice that the non-zero updates are always in the direction to reduce $\abs{\te_0(t)}$ and $\abs{\te_1(t)}$, and the step size depends on the magnitude of $\abs{A(t)}$ and $\abs{B(t)}$.
To simplify the analysis, let us focus on the updates on $A$ and $B$ instead. 
For example, in updates 1 and 8, decreasing $\abs{\te_0}$ by $\sqrt{3}\eta$ is equivalent to decreasing $\abs{A}$ by $ \tsigma_0^2\eta$.
%


Now take note of the limited number of update magnitudes for $\abs{A}$, specifically $\tsigma^2_0\eta$, $\frac{2\tsigma^2_0\eta}{3}$, and $\frac{\tsigma^2_0\eta}{3}$, which correspond to updating $\abs{\te_0}$ by $\sqrt3 \eta$, $\frac{2\sqrt3}{3} \eta$, and $\frac{\sqrt3}{3} \eta$, respectively. 
Similarly, $\abs{B}$ has only two update magnitudes, namely $\tsigma^2_1\eta$ and $\frac{\tsigma^2_1\eta}{2}$, which correspond to updating $\abs{\te_1}$ with $\sqrt{2} \eta$ and $\frac{\sqrt 2}{2} \eta$, respectively. 
This observation leads to the following proposition.
%
%

\begin{prop}\label{prop:probability_of_initialization} 
Suppose the initial weight $w(0) \sim \mu$ are sampled from probability density $\mu$,
then neither $A$ nor $B$ ($\te_0$ nor $\te_1$) can be reduced to exactly 0 almost surely.
\end{prop}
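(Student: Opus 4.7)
The plan is to exploit the fact that along any signGD trajectory, the set of values that $A(t)$ (equivalently $\te_0(t)$) can take is countable, whereas $A(0)$ is a continuous random variable. Concretely, I would first observe from Table~\ref{table:signGD_update_synthetic} that every nonzero update to $A$ has magnitude in the finite set $\{\tsigma_0^2 \eta/3,\, 2\tsigma_0^2 \eta/3,\, \tsigma_0^2 \eta\}$, and every nonzero update to $B$ has magnitude in $\{\tsigma_1^2 \eta/2,\, \tsigma_1^2 \eta\}$. In particular, each cumulative displacement $A(t)-A(0)$ lies in the countable additive group $\tfrac{\tsigma_0^2 \eta}{3}\mathbb{Z}$, and each $B(t)-B(0)$ lies in $\tfrac{\tsigma_1^2 \eta}{2}\mathbb{Z}$.

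Next I would formalize this via a trajectory-conditioning argument. Fix any deterministic sequence of sign patterns $\pi = (s_0, s_1, \dotsc)$ where each $s_t \in \{-1,0,1\}^3$ records $(\sign[A(t)], \sign[A(t)+B(t)], \sign[A(t)-B(t)])$. Conditional on $\pi$, the sequences $A(t)$ and $B(t)$ are deterministic affine functions of $(A(0), B(0))$, so $\{t : A(t) = 0\}$ and $\{t : B(t) = 0\}$ correspond to $A(0)$ or $B(0)$ lying in a countable set $S_\pi \subset \mathbb{R}$. Since $w(0) \sim \mu$ has a density and the map $w(0) \mapsto (A(0), B(0)) = (\tfrac{\sqrt 3}{3}\tsigma_0^2 \te_0(0),\, \tfrac{\sqrt 2}{2}\tsigma_1^2 \te_1(0))$ is a nondegenerate linear function (composing the fixed DCT with the diagonal rescaling by $\tsigma_0^2, \tsigma_1^2$), the induced law of $(A(0), B(0))$ is absolutely continuous on its support, so $\mathbb{P}(A(0) \in S_\pi) = \mathbb{P}(B(0) \in S_\pi) = 0$ for every fixed $\pi$.

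The remaining step is to remove the conditioning. The set of possible sign-pattern sequences is countable (indeed each $s_t$ takes finitely many values, and we can restrict to sequences actually realizable by some initial condition), so by countable additivity $\mathbb{P}(\exists t : A(t) = 0) \le \sum_\pi \mathbb{P}(A(0) \in S_\pi \mid \pi) = 0$, and similarly for $B$. The almost-sure conclusion for $\te_0$ and $\te_1$ then follows because $A$ and $\te_0$ differ by the nonzero constant factor $\tfrac{\sqrt 3}{3}\tsigma_0^2$, and likewise for $B$ and $\te_1$.

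The main technical obstacle I anticipate is the measurability bookkeeping in the conditioning step: the sign pattern $s_t$ is itself a function of $(A(0),B(0))$, so one has to be careful that ``conditional on $\pi$'' defines measurable events. The cleanest way around this is to avoid explicit conditioning and instead argue directly: let $T_\pi$ be the (measurable) set of initial conditions producing prefix $\pi$ up to time $t$; on $T_\pi$ the value $A(t)$ is a fixed affine function of $A(0)$, so $\{A(t)=0\} \cap T_\pi$ is contained in a hyperplane in $(A(0),B(0))$-space, hence has measure zero, and summing over the countably many length-$t$ prefixes and then over $t$ gives the result.
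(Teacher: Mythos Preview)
Your proposal is correct and rests on the same core observation as the paper: every signGD update to $A$ has magnitude in $\{\tsigma_0^2\eta/3,\,2\tsigma_0^2\eta/3,\,\tsigma_0^2\eta\}$ and every update to $B$ in $\{\tsigma_1^2\eta/2,\,\tsigma_1^2\eta\}$, so $A(t)-A(0)\in\tfrac{\tsigma_0^2\eta}{3}\mathbb{Z}$ and $B(t)-B(0)\in\tfrac{\tsigma_1^2\eta}{2}\mathbb{Z}$ deterministically, whence hitting $0$ forces the initial value into a fixed countable set of Lebesgue measure zero.

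The difference is that you then layer on a trajectory-conditioning argument that is unnecessary. Once you have the group-structure observation in your first paragraph, you are already done: the event $\{\exists t:\,A(t)=0\}$ is contained in $\{A(0)\in\tfrac{\tsigma_0^2\eta}{3}\mathbb{Z}\}$ \emph{unconditionally}, with no reference to which sign pattern was realized, so there is nothing to sum over and no measurability bookkeeping to worry about. The paper's proof is exactly this one-line version. Your conditioning route is not wrong, but it manufactures a ``technical obstacle'' that the direct argument sidesteps entirely.
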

\begin{proof}
Due to the limited number of update magnitudes, reducing $\abs{A}$ and $\abs{B}$ to 0 requires their initial value to be exactly some integer multiplication of those updates. However, with the initial weight sampled from probability density $\mu$, the probability of the initial values of $A$ and $B$ being the exact integer multiple of the possible update is 0. 
\end{proof}

Next, we introduce the following lemma to understand the dynamics of $A(t)$.

\begin{lem}\label{lem:e0_dynamics}
    Consider the update rule $x(t+1) = x(t) - \sign[x(t)]\Delta(t)$, 
    where $x\in \Real$, \mbox{$\Delta(t) \in \set{\Delta_1, \Delta_2, \dotsc, \Delta_{\max}}$} 
    and $0<\Delta_1 < \Delta_2 < \cdots<\Delta_{\max}$. 
    Then there exists $t$ such that $\abs{x(t)} \leq \Delta_{\max}$. Moreover, whenever $\abs{x(t)} \leq \Delta_{\max}$, the rest of the sequence stays $\Delta_{\max}$-bounded, i.e., $\abs{x(t')} \leq \Delta_{\max}$ for all $t'\geq t$.
\end{lem}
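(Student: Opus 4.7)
The plan is to prove the lemma by a straightforward two-part argument: first establishing that the iterate eventually enters the $\Delta_{\max}$-neighborhood of the origin, and then showing that this neighborhood is forward-invariant under the update rule.

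For the first claim, I would observe that whenever $\abs{x(t)} > \Delta_{\max}$, we have $\abs{x(t)} > \Delta(t)$ for every allowable step size, so subtracting $\sign[x(t)]\Delta(t)$ from $x(t)$ cannot flip its sign (no overshoot). Hence $\abs{x(t+1)} = \abs{x(t)} - \Delta(t) \leq \abs{x(t)} - \Delta_1$, giving a uniform decrease by at least $\Delta_1 > 0$ at every step outside the neighborhood. Iterating, after at most $\lceil (\abs{x(0)} - \Delta_{\max})/\Delta_1 \rceil$ steps the sequence must land in $[-\Delta_{\max},\Delta_{\max}]$, which proves existence of the desired $t$.

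For the forward-invariance claim, suppose $\abs{x(t)} \leq \Delta_{\max}$. I would split into two cases based on the relative size of $\abs{x(t)}$ and $\Delta(t)$. If $\Delta(t) \leq \abs{x(t)}$, no sign flip occurs and $\abs{x(t+1)} = \abs{x(t)} - \Delta(t) \leq \abs{x(t)} \leq \Delta_{\max}$. If $\Delta(t) > \abs{x(t)}$, a sign flip occurs and $\abs{x(t+1)} = \Delta(t) - \abs{x(t)} \leq \Delta(t) \leq \Delta_{\max}$. Either way, the bound is preserved, and a simple induction extends the conclusion to all $t' \geq t$.

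The argument is entirely elementary, so there is no real obstacle. The only minor subtlety is the edge case $x(t) = 0$; with the standard convention $\sign[0] = 0$, the update leaves $x(t)$ unchanged and the bound trivially holds, while any other convention still leaves $\abs{x(t+1)} = \Delta(t) \leq \Delta_{\max}$, so the result is insensitive to this choice.
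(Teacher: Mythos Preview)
Your proposal is correct and follows the same two-part skeleton as the paper: first show the iterate reaches $[-\Delta_{\max},\Delta_{\max}]$, then show this interval is forward-invariant. The second part is essentially identical in both (a short case split), only organized differently: you split on whether overshoot occurs, the paper splits on the sign of $x(t)$.

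The genuine difference is in the first part. The paper argues by contradiction: assuming the sequence never enters the interval, it is monotone and bounded, hence convergent to some $x^*$, and then passing to the limit in the update rule forces $x^* = x^* - \Delta(t)$, which is impossible. You instead exploit the uniform lower bound $\Delta_1$ on the step size to get a per-step decrease $\abs{x(t+1)} \leq \abs{x(t)} - \Delta_1$ outside the interval, which immediately yields an explicit hitting-time bound $\lceil (\abs{x(0)} - \Delta_{\max})/\Delta_1 \rceil$. Your route is more elementary and constructive, and it sidesteps the slight informality in the paper's limit argument (where $\Delta(t)$ still depends on $t$). The paper's approach would in principle extend to step-size sets without a positive minimum element provided they are bounded away from zero along the trajectory, but since the lemma explicitly assumes a finite set $\{\Delta_1,\dotsc,\Delta_{\max}\}$ with $\Delta_1>0$, this generality is not needed here.
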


\begin{proof} In the following, we provide a proof for the case when $x(0)>0$. A proof with $x(0)<0$ can be done in a similar way. The proof can be divided into two parts.

1. Let us denote the sequence of $\set{x(0), x(1), \dotsc, x(t)}$ by $\set{x(t)}$.
First, we prove that there exists a $t$ such that $\abs{x(t)}\leq \Delta_{\max}$.

If $x(t) > \Delta_{\max}$, then $x(t+1) = x(t) - \Delta(t) \geq x(t) - \Delta_{\max} >0$.

Consider $\set{x(t)}$ with $x(t')>\Delta_{\max}$ for all $t' \in \set{0,1,\dotsc,t}$, we have \mbox{$x(t'+1) = x(t') - \Delta(t')< x(t')$}.
This means that for any $x(t) > \Delta_{\max}$, the sequence $\set{x(t)}$ is decreasing.





We prove, by contradiction, that there exists a $t$ such that $\abs{x(t)}<\Delta_{\max}$. Suppose that such a $t$ does not exist, then one of the two cases must happen.
\begin{enumerate}
    \item $x(t) > \Delta_{\max}$ for all $t$.
    \item $\exists k$ such that $x(0)>x(1)>\cdots>x(k)>\Delta_{\max}$, but $x(k+1)<-\Delta_{\max}$.
\end{enumerate}
For case 1, since $x(t) > \Delta_{\max}$, we know that $\set{x(t)}$ is decreasing and bounded from below, so we have 
\mbox{$ x(t) \rightarrow x^* \geq \Delta_{\max} $}
as $t \rightarrow \infty$. This means that 
$\lim_{t \rightarrow \infty} x(t) = \lim_{t \rightarrow \infty} x(t+1) = x^*$.

Using the update rule, we have
\begin{linenomath*}\begin{align*}
    \lim_{t \rightarrow \infty} x(t+1) &= \lim_{t \rightarrow \infty} x(t) - \Delta(t) \sign(x(t)) \\
    &= \lim_{t \rightarrow \infty} x(t) - \Delta(t) \\
    &= x^* - \Delta(t),
\end{align*}\end{linenomath*}
or 
$x^* = x^* - \Delta(t)$,
which is impossible because $\Delta(t)>0$.

For case 2, by the assumption of the case, we have 
$x(k+1) = x(k) - \Delta(k) < -\Delta_{\max}$,
which is not possible because $x(k) > \Delta_{\max}$.

The same approach can be applied to prove the case when $x$ is initialized with a negative value, i.e., $x(0)<0$.

The first part of the proof shows that there exists a $t$ such that $\abs{x(t)} \leq \Delta_{\max}$.

2. Next, we show that for any $t$ such that $\abs{x(t)} \leq \Delta_{\max}$, we have $\abs{x(t+1)} \leq \Delta_{\max}$.

When $0 \leq x(t) \leq \Delta_{\max}$, we have
\begin{linenomath*}\begin{equation*}
    x(t+1) = x(t) - \Delta(t) \geq -\Delta_{\max}\quad\quad\text{and}\quad\quad x(t+1) = x(t) - \Delta(t) \leq x(t) \leq \Delta_{\max}.
\end{equation*}\end{linenomath*}
 

When $-\Delta_{\max} \leq x(t) \leq 0$, we have 
\begin{linenomath*}\begin{equation*}
    x(t+1) = x(t) + \Delta(t) \leq \Delta_{\max} \quad\quad\text{and}\quad\quad x(t+1) = x(t) + \Delta(t) \geq x(t) \geq -\Delta_{\max}.
\end{equation*}\end{linenomath*}
    

%
This means that $-\Delta_{\max} \leq x(t+1)\leq \Delta_{\max}$, and this results holds for any $t$ such that $\abs{x(t)} \leq \Delta_{\max}$. 

To combine the two parts of the proof, consider the first of such $t$, i.e., $t = \min \cset{t}{\abs{x(t)} \leq \Delta_{\max}}$. We can prove, by mathematical induction, that $\abs{x(t')} \leq \Delta_{\max}$ for all $t'\geq t$.
\end{proof}

The following proposition describes the behavior of $A(t)$ under signGD.

\begin{prop}\label{prop:e0_dynamics}
There exists $t$ such that $\abs{A(t')} \leq \tsigma^2_0\eta$ for all $t'>t$. 
\end{prop}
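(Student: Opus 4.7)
The plan is to reduce the statement to the one-dimensional sign-descent lemma (Lemma~\ref{lem:e0_dynamics}) by extracting the update rule for $A(t)$ directly from the signGD recursion~(\ref{eq:linear_freq_signGD_dynamics}).

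First, I would multiply the first coordinate of~(\ref{eq:linear_freq_signGD_dynamics}) by $\tfrac{\sqrt3}{3}\tsigma_0^2$ to obtain
\[
A(t+1) - A(t) = -\tfrac{\tsigma_0^2\eta}{3}\bigl(\sign[A(t)+B(t)] + \sign[A(t)] + \sign[A(t)-B(t)]\bigr).
\]
The bracketed integer-valued sum depends on $B(t)$ only through $\sign[A(t)\pm B(t)]$, so the whole recursion is a sign-descent step with a data-dependent but bounded step size. I would then enumerate the $13$ feasible sign combinations collected in Table~\ref{table:signGD_update_synthetic} and check that whenever $A(t) \neq 0$ the bracketed sum has the same sign as $A(t)$ and absolute value in $\{1,2,3\}$. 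Concretely, rows~1--5 give sum $+1$, $+2$, or $+3$ when $\sign[A]=+1$; rows~6--10 are the mirror images when $\sign[A]=-1$; and the remaining valid rows (11--13) correspond to $A(t)=0$, in which case the first coordinate of the update vanishes and $A$ stays at $0$. Hence $A(t+1) = A(t) - \sign[A(t)]\,\Delta(t)$ with $\Delta(t) \in \{\tsigma_0^2\eta/3,\,2\tsigma_0^2\eta/3,\,\tsigma_0^2\eta\}$, exactly matching the hypothesis of Lemma~\ref{lem:e0_dynamics} with $\Delta_{\max} = \tsigma_0^2\eta$.

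Applying Lemma~\ref{lem:e0_dynamics} then yields both conclusions in one stroke: there exists a first $t$ with $\abs{A(t)} \leq \tsigma_0^2\eta$, and once this bound is reached the sequence never leaves $[-\tsigma_0^2\eta,\tsigma_0^2\eta]$, so $\abs{A(t')} \leq \tsigma_0^2\eta$ for all $t' > t$, which is the claim. The main obstacle is purely bookkeeping: verifying across all $13$ valid rows of Table~\ref{table:signGD_update_synthetic} that the $A$-update is always opposite in sign to $A$ (and vanishes only when $A=0$). This sign-monotonicity is what guarantees that the generic sign-descent convergence in Lemma~\ref{lem:e0_dynamics} applies here without modification, despite the presence of the auxiliary variable $B(t)$ in the update.
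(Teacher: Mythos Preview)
Your proposal is correct and follows essentially the same approach as the paper: extract the $A$-recursion from the first coordinate of~(\ref{eq:linear_freq_signGD_dynamics}), use Table~\ref{table:signGD_update_synthetic} to verify that the update always has the form $A(t+1)=A(t)-\sign[A(t)]\Delta(t)$ with $\Delta(t)\in\{\tsigma_0^2\eta/3,\,2\tsigma_0^2\eta/3,\,\tsigma_0^2\eta\}$, and then invoke Lemma~\ref{lem:e0_dynamics} with $\Delta_{\max}=\tsigma_0^2\eta$. The paper's proof is terser (it simply asserts the sign-reducing update from the table rather than enumerating rows), but the argument is the same.
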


\begin{proof}
Table~\ref{table:signGD_update_synthetic} shows that there is always a non-zero update in the direction to reduce $\abs{A(t)}$, so we can define the dynamics of $A(t)$ as
\begin{linenomath*}\begin{equation*}
    A(t+1) = A(t) - \sign[A(t)]\Delta(t),
\end{equation*}\end{linenomath*}
where $\Delta(t) \in \set{\frac{\tsigma^2_0\eta}{3}, \frac{2\tsigma^2_0\eta}{3}, \tsigma^2_0\eta}$.  Lemma~\ref{lem:e0_dynamics} with $\Delta_{\max} = \tsigma^2_0\eta$ proves the proposition.
\end{proof}

Proposition~\ref{prop:e0_dynamics} implies that once $\abs{A}$ drops below $\tsigma^2_0\eta$, it remains below $\tsigma^2_0\eta$ for all future iterations.
Combining Proposition~\ref{prop:e0_dynamics} with the update directions of $A$ in Table~\ref{table:signGD_update_synthetic}, we know that $A$ will begin oscillating around zero.
However, there are some limitations of Proposition \ref{prop:e0_dynamics}.
First, we do not know when exactly the oscillation starts: whether it starts immediately following the first iteration when $\abs{A}\leq \tsigma^2_0\eta$ or from some iterations after it. 
Second, the characteristics of this oscillation (periodic or non-periodic) are unknown.
Answers to these questions can improve our understanding of the behavior of $A$, and later become particularly useful in developing the asymptotic signGD solution of $\te_2$, which is important because it leads to the adversarial risk of the signGD solution.

\begin{figure}[t]
    \captionsetup[subfigure]{labelformat=simple, labelsep=period}
    \centering 
    {\includegraphics[width=0.9\linewidth]{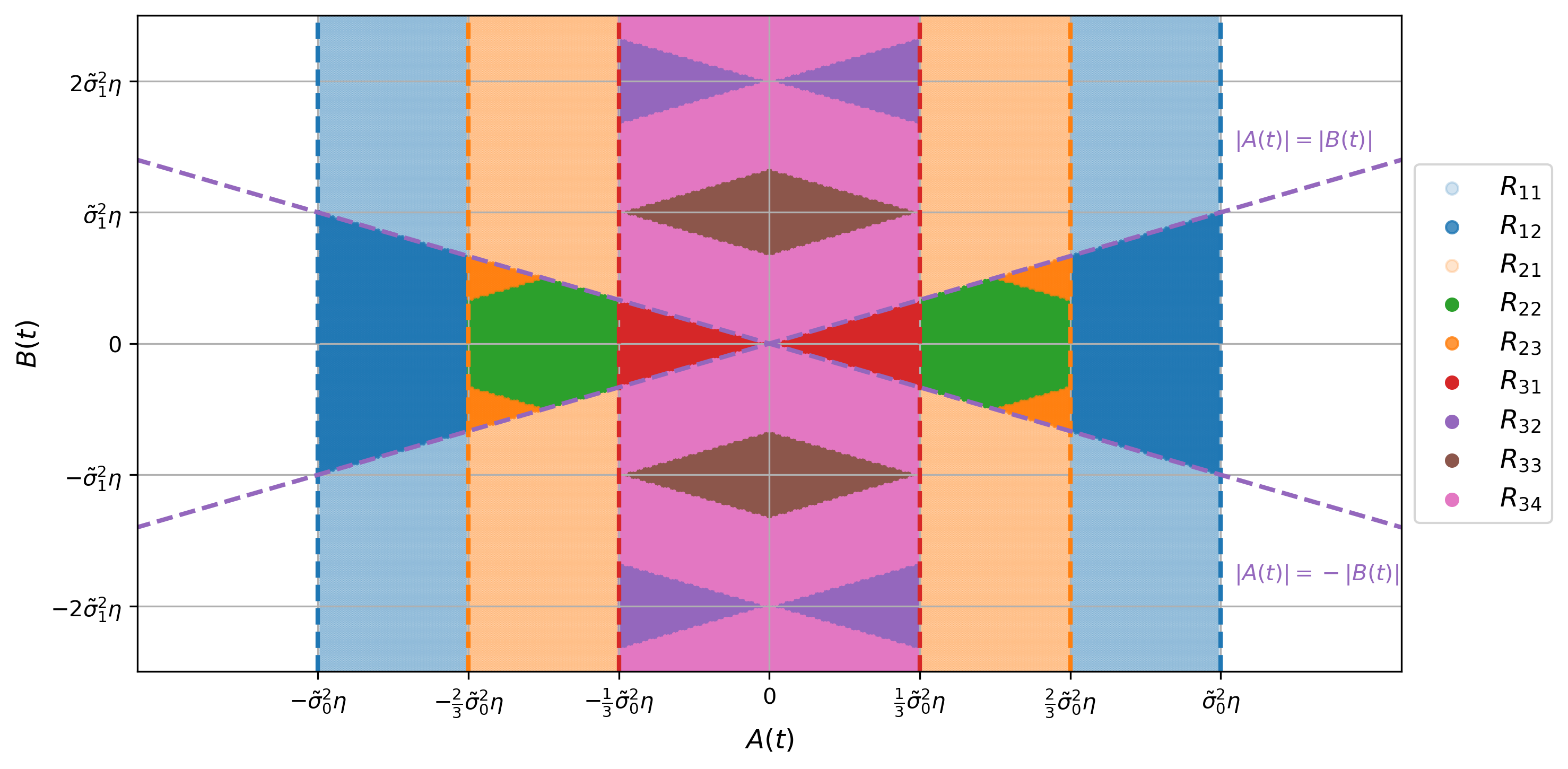}}
    \caption{\textbf{Analyzing the dynamics of $A$ and $B$ by partitioning the set of values of $(A, B)$ in $[-\tsigma^2_0\eta, \tsigma^2_0\eta] \times \Real$.}
    Such a set is first divided into partitions $R_{1}$, $R_{2}$ and $R_{3}$ based on the value of $\abs{A}$.
    Then, we consider $R_{s} = \set{R_{22}, R_{31}, R_{32}, R_{33}, R_{34}}$ as the stationary subpartitions, because once $(A(t), B(t)) \in R_{s}$, the sequence remains in the stationary subpartitions.
    Also, we consider $R_{t} = \{ R_{11}, R_{12}, R_{21}, R_{23}\}$ as the transient subpartitions, because any $(A(t), B(t)) \in R_{t}$ will soon enter one of the stationary subpartitions, that is, there exists $t' \geq t$ such that $(A(t'), B(t')) \in R_{s}$.
    We consider $\tsigma_0^2 = \tsigma_1^2 = 1$ and $\eta=1$ in this figure.}
    \label{fig:partiotion_anlaysis}
\end{figure}

Because the update for $B(t)$ can be zero when $\abs{A(t)} > \abs{B(t)}$, Lemma \ref{lem:e0_dynamics} is not suitable to understand the dynamics of $B$, as the lemma requires that all step sizes be greater than zero.
Nevertheless, Proposition~\ref{prop:e0_dynamics} allows us to narrow down the range of $A$ and we can partition the set of all possible values of $A$ and $B$.
By analyzing the dynamics of $A$ and $B$ in those partitions, we can develop the standard and adversarial population risk of the asymptotic signGD solution under a constant learning rate $\eta$.

Let us first divide the set of values of $(A, B)$ into partitions based on the value of $\abs{A}$, and then divide those partitions into smaller subpartitions based on the relative magnitude of $\abs{A}$ and $\abs{B}$. 
Such a partitioning process is illustrated in Figure~\ref{fig:partiotion_anlaysis}.
\begin{itemize}
    \item $ R_1 = \cset{(A,B)}{\frac{2\tsigma^2_0\eta}{3}<\abs{A}<\tsigma^2_0\eta \;\text{and}\; B \in (-\infty, \infty)}$,
    \begin{itemize}
        \item $ R_{11} = \cset{(A,B)}{ (A,B) \in R_{1}\;\text{and}\; \abs{A}<\abs{B}}$, 
        \item $ R_{12} = \cset{(A,B)}{ (A,B) \in R_{1}\;\text{and}\; \abs{A}>\abs{B}}$,
    \end{itemize}
    \item $ R_2 = \cset{(A,B)}{\frac{\tsigma^2_0\eta}{3}<\abs{A}<\frac{2\tsigma^2_0\eta}{3}\;\text{and}\; B \in (-\infty, \infty)}$,
    \begin{itemize}
        \item $ R_{21} = \cset{(A,B)}{ (A,B) \in R_{2}\;\text{and}\; \abs{A}<\abs{B}}$, 
        \item $ R_{22} = \cset{(A,B)}{ (A,B) \in R_{2}\;\text{and}\; \abs{A}>\abs{B}\;\text{and}\; \abs{A+\tsigma^2_0\eta}>\abs{B} \;\text{and}\; \abs{A-\tsigma^2_0\eta}>\abs{B}}$,
        \item $ R_{23} = \cset{(A,B)}{ (A,B) \in R_{2}\;\text{and}\; \abs{A}>\abs{B}\;\text{and}\; \left( \abs{A+\tsigma^2_0\eta}<\abs{B}\;\text{or}\; \abs{A-\tsigma^2_0\eta}<\abs{B}\right)} $,
    \end{itemize}
    \item $ R_3 = \cset{(A,B)}{\abs{A}<\frac{\tsigma^2_0\eta}{3}\;\text{and}\; B \in (-\infty, \infty)}$,
    \begin{itemize}
        \item $ R_{31} = \cset{(A,B)}{ (A,B) \in R_{3}\;\text{and}\; \abs{A}>\abs{B}}$, 
        \item $ R_{32} = \cset{(A,B)}{ \bigcup_{k \in \Integer_{\text{even}} - \set{0}} \left\{(A,B) \in R_{3}\;\text{and}\; \abs{A} >\abs{B+k\tsigma^2_1\eta}\right\}}$,
        \item $ R_{33} = \cset{(A,B)}{ \bigcup_{k \in \Integer_{\text{odd}}} \left\{(A,B) \in R_{3}\;\text{and}\;  \abs{A}+\abs{B+k\tsigma^2_1\eta} < \frac{\tsigma^2_0\eta}{3}\right\}}$,
        \item $ R_{34} = R_3 - (R_{31} \cup R_{32} \cup R_{33})$,
    \end{itemize}
\end{itemize}
where $\Integer_{\text{odd}}$ and $\Integer_{\text{even}}$ are the set of odd and even integers, respectively.

There are nine non-overlapping subpartitions. 
We call $R_{22}$, $R_{31}$, $R_{32}$, $R_{33}$ and $R_{34}$ the \textit{stationary} subpartitions and denote $R_{s} = \{ R_{22}, R_{31}, R_{32}, R_{33}, R_{34}\}$. They are called stationary subpartitions because once $(A(t), B(t)) \in R_{s}$, the sequence remains in the stationary subpartition. 
On the other hand, we call $R_{11}$, $R_{12}$, $R_{21}$, $R_{23}$ the \textit{transient} subpartitions and denote $R_{t} = \{ R_{11}, R_{12}, R_{21}, R_{23}\}$. They are called the transient subpartitions because any $(A(t), B(t)) \in R_{t}$ will soon enter one of the stationary subpartitions, that is, there exists $t' \geq t$ such that $(A(t'), B(t')) \in R_{t}$.
The dynamics of $A$ and $B$ can be summarized in the following proposition.

\begin{prop}\label{prop:partition_summary} \ \\
\textbf{Transient subpartitions:} For each of $R \in R_{t}$, consider $t$ such that $(A(t), B(t)) \in R$, then
there exists $t' > t$ such that $(A(t'), B(t')) \in R_{s}$. The transition of $(A(t), B(t))$ from $R_{t}$ to $R_{s}$ happens at most 3 iterations after $t$; specifically, it corresponds to the scenario of $(A(t), B(t)) \in R_{11}$, $(A(t+1), B(t+1)) \in R_{23}$, $(A(t+2), B(t+2)) \in R_{21}$, and finally $(A(t+3), B(t+3)) \in R_{3}$.

\textbf{Stationary subpartitions:} For each of $R \in R_{s}$, consider $t$ such that $(A(t), B(t)) \in R$. For any $t' \geq t$, $\abs{A(t')} \leq \frac{2\tsigma^2_0\eta}{3}$ and $A(t')$ shows 2-periodic behavior switching between positive and negative signs, that is, for any $i \in \Integer_{\geq 0}$, we have $A(t + 2i) = A(t)$ and $\sign(A(t+2i)) = \sign(A(t)) = - \sign(A(t+2i+1))$. To be more specific about each stationary subpartition, we have
\begin{enumerate}
    \item For each of $R \in \set{R_{22}, R_{31}}$, consider $t$ such that $(A(t), B(t)) \in R$. For any $t' \geq t$, $\abs{B(t')}\leq \tsigma^2_1\eta$ and $B(t')$ remains constant, that is, $B(t') = B(t)$.
    \item For each of $R \in \set{R_{32}, R_{33}}$, consider $t$ such that $(A(t), B(t)) \in R$.
        \begin{enumerate}
            \item There exists $\bar{t}> t$ such that $(A(\bar{t}), B(\bar{t})) \in R_{31}$. Denote the smallest $\bar{t}$ as $\bar{t}^*$. 
            \item For any $ \bar{t}^* > t' \geq t$, we have $\abs{B(t'+1)} = \abs{B(t')} - \sign[B(t')]$.
            \item For any $t' \geq \bar{t}^*$, $\abs{B(t')}\leq \tsigma^2_1\eta$ and $B(t')$ remains constant, that is, $B(t') = B(t)$.
        \end{enumerate}
    \item Consider $t$ such that $(A(t), B(t)) \in R_{34}$. 
        \begin{enumerate}
            \item For any $t' \geq t$, $(A(t'), B(t'))$ remains in $R_{34}$.
            \item There exists $\bar{t}> t$ such that for any $ \bar{t} > t' \geq t$, the sign of $B(t')$ remains constant.
            \item For any $t' \geq \bar{t}$, $\abs{B(t')} \leq \tsigma^2_1\eta$ and $B(t')$ shows 2-periodic behavior switching between positive and negative signs, that is, for any $i \in \Integer_{\geq 0}$, we have \mbox{$B(\bar{t} + 2i) = B(\bar{t})$} and \mbox{$\sign(B(\bar{t}+2i)) = \sign(B(\bar{t})) = - \sign(B(\bar{t}+2i+1))$}.
        \end{enumerate}
\end{enumerate}
\end{prop}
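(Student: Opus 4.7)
\textbf{Proof plan for Proposition~\ref{prop:partition_summary}.} The plan is to carry out a systematic case analysis driven by Table~\ref{table:signGD_update_synthetic}. For any point $(A(t),B(t))$, the defining inequalities of a subpartition (on $\abs{A}$, on $\abs{B}$ versus $\abs{A}$, and on $\abs{B}$ versus $\abs{A \pm \tsigma_0^2\eta}$) together with the sign of $A$ determine $\sign(A)$, $\sign(A+B)$, and $\sign(A-B)$, and hence pick out one of the 13 valid rows of the table. By the symmetry $(A,B) \leftrightarrow (-A,-B)$ of the update, I can assume $A(t) > 0$ without loss of generality and only need to check rows 1--5 and 11--13 in each case. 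Having identified the applicable row, I read off $A(t+1)$ and $B(t+1)$ explicitly and then verify which subpartition contains the image.

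For the transient part, I would trace the chain $R_{11} \to R_{23} \to R_{21} \to R_3$ step by step. Starting in $R_{11}$, where $\frac{2\tsigma_0^2\eta}{3} < \abs{A} < \tsigma_0^2\eta$ and $\abs{A} < \abs{B}$, the applicable rule (2, 4, 6, or 7) decreases $\abs{A}$ by $\frac{\tsigma_0^2\eta}{3}$ and changes $\abs{B}$ by $\tsigma_1^2\eta$, putting the image into $R_2$. To decide between $R_{21}$, $R_{22}$, and $R_{23}$, I would combine the bound $\frac{\tsigma_0^2\eta}{3}<\abs{A(t+1)}<\frac{2\tsigma_0^2\eta}{3}$ with the preserved inequality $\abs{B(t)} > \abs{A(t)}$ (shifted by the $B$-update) to show $\abs{A(t+1) \pm \tsigma_0^2\eta} < \abs{B(t+1)}$, giving $R_{23}$. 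Iterating this logic yields the other two transitions, bounding the transient length by $3$. The other transient subpartitions $R_{12}$ and $R_{21}$ are shorter and can be handled by the same template.

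For the stationary part, I would verify invariance subpartition-by-subpartition and then extract the periodic behavior. In $R_{22}$ and $R_{31}$, the condition $\abs{A}>\abs{B}$ forces $\sign(A+B)=\sign(A-B)=\sign(A)$, so the only applicable rows are 1 and 8: $\abs{A}$ is updated by $\tsigma_0^2\eta$ while $B$ stays put. A direct computation then shows the image has the opposite sign of $A$ but the same magnitude, yielding the claimed $2$-periodicity with $A(t+2)=A(t)$. For $R_{32}$ and $R_{33}$, the governing updates are rules 2, 4, 6, 7, 11, 12, which move $\abs{B}$ by $\tsigma_1^2\eta$ toward zero in integer multiples; I would prove by induction on $\abs{B(t)}/\tsigma_1^2\eta$ that after at most $\lceil \abs{B(t)}/\tsigma_1^2\eta\rceil$ steps the sequence enters $R_{31}$, then invoke the $R_{31}$ analysis. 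The pointwise update on $A$ during this phase gives the 2-periodic alternation by the same mechanism as in $R_{22}$.

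The main obstacle is the forward-invariance of $R_{34}$, where $\abs{A}<\abs{B}$ but the offsets $\abs{A\pm\tsigma_0^2\eta}$ straddle $\abs{B}$, so the applicable row alternates between a no-$B$-change and a $B$-changing rule on successive steps. To show invariance and eventual 2-periodicity of $B$ there, I plan to exploit the symmetry of the update under $A\mapsto -A$: over any two consecutive iterations in which $A$ completes one cycle, the net change in $B$ telescopes to zero once $A$ enters its periodic orbit. For the initial pre-periodic phase, $\sign(B)$ is preserved and $\abs{B}$ is weakly monotone, which I would control by the same $\tsigma_1^2\eta$-step-size bound used for $R_{32}, R_{33}$, giving both a finite bound on the transient length and the limiting $2$-cycle stated in the proposition.
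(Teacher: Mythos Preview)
Your overall template (case analysis via Table~\ref{table:signGD_update_synthetic}) matches the paper, but two pieces of the plan are genuinely off.

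\textbf{Transient part.} You treat $R_{11}\to R_{23}\to R_{21}\to R_3$ as the unique trajectory and try to prove that the image of $R_{11}$ lands in $R_{23}$. That is neither true nor what the proposition asserts: the chain is the \emph{longest} possible path, not the only one. From $R_{11}$ the update sends $\abs{A}$ into $(\tfrac{\tsigma_0^2\eta}{3},\tfrac{2\tsigma_0^2\eta}{3})$, hence into $R_2$, but which of $R_{21},R_{22},R_{23}$ depends on $\abs{B(t+1)}$ versus $\abs{A(t+1)}$ and $\abs{A(t+1)\pm\tsigma_0^2\eta}$; your inequality ``$\abs{A(t+1)\pm\tsigma_0^2\eta}<\abs{B(t+1)}$'' is actually the defining condition of $R_{21}$, not $R_{23}$ (which requires $\abs{A}>\abs{B}$). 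The paper simply shows each transient subpartition maps into $R_2$ or $R_3$ after one step, and then bounds the longest concatenation; you should do the same rather than force a single chain.

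\textbf{$R_{34}$ invariance.} This is the real gap. Your description of $R_{34}$ (``offsets $\abs{A\pm\tsigma_0^2\eta}$ straddle $\abs{B}$'', ``the applicable row alternates between a no-$B$-change and a $B$-changing rule'') is not its definition: $R_{34}$ is the complement $R_3\setminus(R_{31}\cup R_{32}\cup R_{33})$, and since it excludes $R_{31}$ one always has $\abs{A}<\abs{B}$ there, so $B$ is updated at \emph{every} step. More importantly, your telescoping argument does not prove forward invariance. The paper's mechanism is quite different: it argues by contradiction that if the orbit entered $R_{31}$ after $k$ steps, then by the $2$-periodicity of $A$ the parity of $k$ would force the starting point to satisfy the defining inequality of $R_{32}$ (if $k$ even) or $R_{33}$ (if $k$ odd), contradicting disjointness. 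Only after invariance is secured does Lemma~\ref{lem:e0_dynamics} apply to $B$ to give the eventual $2$-cycle. Your plan omits this parity-based contradiction entirely, and without it there is no reason the orbit cannot drift into $R_{31}$.
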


\begin{proof}
From Proposition~\ref{prop:e0_dynamics}, we know that from an arbitrary $(A(0), B(0))$, $\abs{A}$ will drop below $\tsigma^2_0\eta$ under the signGD update, which means that $(A, B)$ must enter one of the subpartitions. This allows us to continue analyzing the behavior of $(A(t), B(t))$ by assuming it enters one of the subpartitions at iteration $t$.

    
    \textbf{Analysis of $R_{11}$:} For any $(A(t), B(t))$ in $R_{11}$, we know that $A(t+1) = A(t) - \sign[A(t)]\frac{\tsigma^2_0\eta}{3}$. This means that $\frac{\tsigma^2_0\eta}{3}< \abs{A(t+1)} < \frac{2\tsigma^2_0\eta}{3}$, so $(A(t+1), B(t+1))$ is in $R_2$ and we can study its dynamics using $R_2$.
    
    \textbf{Analysis of $R_{12}$:} For any $(A(t), B(t))$ in $R_{12}$, we know that $A(t+1) = A(t) - \sign[A(t)]\tsigma^2_0\eta$. This means that $\abs{A(t+1)} < \frac{\tsigma^2_0\eta}{3}$. Therefore, $(A(t+1), B(t+1))$ is in $R_3$ and we can analyze its dynamics using $R_3$.


    \textbf{Analysis of $R_{21}$:} For any $(A(t), B(t))$ in $R_{21}$, we know that $A(t+1) = A(t) - \sign[A(t)]\frac{\tsigma^2_0\eta}{3}$. This means that $\abs{A(t+1)} < \frac{\tsigma^2_0\eta}{3}$. Therefore, $(A(t+1), B(t+1))$ is in $R_3$ and we can analyze its dynamics using $R_3$. 
    
    \textbf{Dynamics of $(A, B)$ in $R_{22}$ and $R_{23}$:} For any $(A(t), B(t))$ in $R_{22}$ and $R_{23}$, we have $A(t+1) = A(t) - \sign[A(t)]\tsigma^2_0\eta$ and $B(t+1) = B(t)$, so we know that $\frac{\tsigma^2_0\eta}{3}< \abs{A(t+1)} < \frac{2\tsigma^2_0\eta}{3}$. 
    
    \textbf{Analysis of $R_{22}$:} For any $(A(t), B(t))$ in $R_{22}$, we have $\abs{A(t+1)}>\abs{B(t+1)}$. This means that \mbox{$(A(t+1), B(t+1))$} remains in $R_{22}$, and we have $A(t+2) = A(t+1)-\sign[A(t+1)]\tsigma^2_0\eta$ and $B(t+2) = B(t+1)$, which means that $(A(t+2), B(t+2))$ returns to the starting position at $(A(t), B(t))$.  In fact, for any $t' \geq t$, $A(t')$ shows 2-periodic behavior switching between positive and negative signs and $B(t')$ remains constant, that is, for any $i \in \Integer_{\geq0}$, we have $A(t + 2i) = A(t)$, $\sign(A(t+2i)) = \sign(A(t)) = - \sign(A(t+2i+1))$, and \mbox{$B(t + 2i + 1) = B(t + 2i) = B(t)$}.

    \textbf{Analysis of $R_{23}$:} For any $(A(t), B(t))$ in $R_{23}$, by the definition of subpartition, we have that \mbox{$\abs{A(t+1)}<\abs{B(t+1)}$}, so $(A(t+1), B(t+1))$ is in $R_{21}$. This means that $(A(t+2), B(t+2))$ is in $R_3$ and we can analyze its dynamics using partition $R_3$. 
    

    \textbf{Analysis of $R_{31}$:} For any $(A(t), B(t))$ in $R_{31}$, we know that $A(t+1) = A(t) - \sign[A(t)]\tsigma^2_0\eta$ and \mbox{$B(t+1) = B(t)$}. This means that $ \frac{2\tsigma^2_0\eta}{3} < \abs{A(t+1)} < \tsigma^2_0\eta$. Since $\abs{A(t+1)} > \abs{B(t+1)} = \abs{B(t)}$, we have that $B(t+2) = B(t+1)$ and $A(t+2) = A(t+1)-\sign[A(t+1)]\tsigma^2_0\eta$, which means that $(A(t+2), B(t+2))$ returns to the starting position at $(A(t), B(t))$.
    Therefore, for any $t' \geq t$, $A(t')$ shows 2-periodic behavior switching between positive and negative signs and $B(t')$ remains constant, that is, for any $i \in \Integer_{\geq0}$, we have $A(t + 2i) = A(t)$, \mbox{$\sign(A(t+2i)) = \sign(A(t)) = - \sign(A(t+2i+1))$}, and $B(t + 2i + 1) = B(t + 2i) = B(t)$.
    
    \textbf{Dynamics of $A$ in $R_{32}$, $R_{33}$ and $R_{34}$:} The behavior of $A$ in $R_{32}$, $R_{33}$ and $R_{34}$ is the same. For any $(A(t), B(t))$ in $\set{R_{32}, R_{33}, R_{34}}$, we know that 
    \begin{linenomath*}\begin{equation}
        A(t+1) = A(t) - \sign[A(t)]\frac{\tsigma^2_0\eta}{3}\quad\text{and}\quad B(t+1) = B(t) - \sign[B(t)]\tsigma^2_1\eta. \label{eq:A_dynamics_in_R33}
    \end{equation}\end{linenomath*}
    This means that $ \abs{A(t+1)} < \frac{\tsigma^2_0\eta}{3}$, so $(A(t+1), B(t+1))$ remains in $R_3$. For any $t' \geq t$, $A(t')$ shows 2-periodic behavior switching between positive and negative signs, that is, for any $i \in \Integer_{\geq0}$, we have 
    \begin{linenomath*}\begin{equation}
        A(t + 2i) = A(t) \quad\text{and}\quad \sign(A(t+2i)) = \sign(A(t)) = - \sign(A(t+2i+1)).
    \end{equation}\end{linenomath*}
    The behavior of $B$ is different across the three subpartitions, so we analyze them separately.
    
    \textbf{Analysis of $R_{32}$:} Because all subpartitions are non-overlapping, for any $(A(t), B(t))$ in $R_{32}$, there exists a unique $k \in \Integer_\text{even} - \set{0}$ such that $A(t)$ and $B(t)$ satisfies $\abs{A(t)} > \abs{B(t)+k\tsigma^2_1\eta}$.
    Next, we show that starting from any $(A(t), B(t))$ in $R_{32}$, after $\abs{k}$ iterations of signGD update, we have \mbox{$\abs{A(t+\abs{k})} > \abs{B(t+\abs{k})}$}, which means that $(A(t+\abs{k}), B(t+\abs{k}))$ is in $R_{31}$.
    
    This can be proved by showing that $\abs{A(t+\abs{k})} = \abs{A(t)}$ and $\abs{B(t+\abs{k})} = \abs{B(t)+k\tsigma^2_1\eta}$.
    For any $t' \geq t$, $A(t')$ shows 2-periodic behavior, and because $k$ is an even number, we have $\abs{A(t+\abs{k})} = \abs{A(t)}$. 
    
    Since $\abs{B(t)+k\tsigma^2_1\eta} > 0$ and $B(t+1) = B(t) - \sign[B(t)]\tsigma^2_1\eta$, we know that the sign of $B$ remains the same for the next $\abs{k}-1$ updates.
    This means that 
    \begin{linenomath*}\begin{equation*}
        B(t+\abs{k}) = B(t)-\sum_{i=0}^{\abs{k}-1}\sign[B(t+i)]\tsigma^2_1\eta = B(t)-\abs{k}\sign[B(t)]\tsigma^2_1\eta. 
    \end{equation*}\end{linenomath*}
    By the definition of the subpartition, we have $\frac{\tsigma^2_0\eta}{3}>\abs{B(t)+k\tsigma^2_1\eta}$; and this is true if and only if $B(t)$ and $k$ have opposite signs because $k$ is a non-zero even integer. Therefore, we have 
    \begin{linenomath*}\begin{align*}
        \abs{B(t+\abs{k})} &= \abs{B(t)-\abs{k}\sign[B(t)]\tsigma^2_1\eta} \\
                           &= \abs{B(t)-\abs{k}(-\sign[k])\tsigma^2_1\eta} \\
                           &= \abs{B(t)+\abs{k}\sign[k]\tsigma^2_1\eta} \\
                           &= \abs{B(t)+k\tsigma^2_1\eta}.
    \end{align*}\end{linenomath*}
    
    \textbf{Analysis of $R_{33}$:} Similarly, for any $(A(t), B(t))$ in $R_{33}$, there exists a unique $k \in \Integer_\text{odd}$ such that $A(t)$ and $B(t)$ satisfies $\abs{A(t)}+\abs{B(t)+k\tsigma^2_1\eta} < \frac{\tsigma^2_0\eta}{3}$. 
    Again, we show that starting from any $(A(t), B(t))$ in $R_{33}$, after $\abs{k}$ iterations of signGD update, $(A(t+\abs{k}), B(t+\abs{k}))$ is in $R_{31}$. This can be proved by showing that $\frac{\tsigma^2_0\eta}{3}-\abs{A(t)} \leq \abs{A(t+\abs{k})}$ and $\abs{B(t)+k\tsigma^2_1\eta} = \abs{B(t+\abs{k})}$.
    
    First, by using the same analysis of $\abs{B(t)+k\tsigma^2_1\eta}$ in $R_{32}$, we have \mbox{$\abs{B(t)+k\tsigma^2_1\eta} = \abs{B(t+\abs{k})}$}.
    Next, the behavior of $A(t)$ follows (\ref{eq:A_dynamics_in_R33}), which means that for any $t' \geq t$, $A(t')$ shows 2-periodic behavior, and because $k$ is an odd number, we have $\abs{A(t+\abs{k}) - A(t)} = \abs{A(t+1) - A(t)}= \frac{\tsigma^2_0\eta}{3}$.

    Also, we have $\abs{A(t+\abs{k}) - A(t)} \leq \abs{A(t+\abs{k})}+\abs{A(t)}$,
    which means that $\frac{\tsigma^2_0\eta}{3} \leq \abs{A(t+\abs{k})}+\abs{A(t)}$, or $\frac{\tsigma^2_0\eta}{3}-\abs{A(t)} \leq \abs{A(t+\abs{k})}$.
    %
    Combining with the definition of the subpartition, we have 
    \begin{linenomath*}\begin{equation*}
        \abs{B(t)+\abs{k}} < \frac{\tsigma^2_0\eta}{3}- \abs{A(t)} \leq \abs{A(t+\abs{k})}.
    \end{equation*}\end{linenomath*}
    Therefore, starting from any $(A(t), B(t))$ in $R_{33}$, after $\abs{k}$ iterations of signGD updates, we have $\abs{A(t+\abs{k})} > \abs{B(t+\abs{k})}$,
    which together with the fact that $\abs{A(t+\abs{k})} < \frac{\tsigma^2_0\eta}{3}$ as shown before, implies that $(A(t+\abs{k}), B(t+\abs{k}))$ is in $R_{31}$. 
    
    \textbf{Analysis of $R_{34}$:} Finally, we prove, by contradiction, that for any $(A(t), B(t))$ in $R_{34}$, there is no $t' > t$ such that $(A(t'), B(t'))$ in $R_{31}$.
    Suppose that $(A(t'), B(t'))$ enters $R_{31}$, then $k \eqdef t'-t$ must be either an odd number or an even number.
    By the definition of the subpartition, if $k$ is a non-zero even number, it means that $(A(t), B(t))$ must be in $R_{32}$; whereas if $k$ is an odd number, it means that $(A(t), B(t))$ must be in $R_{33}$.
    Neither is possible since all subpartitions are non-overlapping, so for any $(A(t), B(t))$ in $R_{34}$, $(A(t'), B(t'))$ remains in $R_{34}$ for all $t' \geq t$.
    This means that there will always be a non-zero update on $B$, and this allows us to apply Lemma~\ref{lem:e0_dynamics}.
    For any $(A(t), B(t))$ in $R_{34}$, there exists $t$ such that $\abs{B(t')} \leq \tsigma^2_1\eta$ for all $t'\geq t$.
\end{proof}

Combining Proposition~\ref{prop:probability_of_initialization} and the dynamics of $(A, B)$ in the stationary subpartitions described in Proposition~\ref{prop:partition_summary}, we have the following remark.
\begin{rem}\label{rem:linear_signGD_e0_e1_limsup}
The asymptotic solution of $A$ oscillates in $[-\frac{2\tsigma^2_0\eta}{3}, \frac{2\tsigma^2_0\eta}{3}]$, which is a tighter bound compared to the one in Proposition~\ref{prop:e0_dynamics}.
The asymptotic solution of $B$ either remains constant in $[-\tsigma^2_1\eta, \tsigma^2_1\eta]$ (1 and 2c in Proposition~\ref{prop:partition_summary}) or oscillates in $[-\tsigma^2_1\eta, \tsigma^2_1\eta]$ (3c in Proposition~\ref{prop:partition_summary}).
%
Since $A(t)$ and $B(t)$ denote $\frac{\sqrt3}{3}\tsigma_0^2\te_0(t)$ and $\frac{\sqrt2}{2}\tsigma_1^2\te_1(t)$, respectively, this means that 
$\limsup_{t \ra \infty}\abs{\te_0(t)}=\frac{2\sqrt{3}}{3}\eta$, 
and $\limsup_{t \ra \infty}\abs{\te_1(t)}=\sqrt{2}\eta$, 
\end{rem}


From the dynamics of $(A, B)$ in the transient subpartitions described in Proposition~\ref{prop:partition_summary}, we have the following corollary.
\begin{cor}\label{cor:sign_diff}
Suppose that $\te_0$ enters $[-\sqrt{3}\eta, \sqrt{3}\eta]$ at iteration $t$, then $A$ starts exhibiting a 2-periodic oscillation at most 3 iterations after $t$. This means that the maximum difference between the number of positive and negative $A$'s after iteration $t$ is 2: $ \abs{\sum_{i=t+1}^{\infty} \mathbb{I}\set{\sign[A(i)] =1} - \mathbb{I}\set{\sign[A(i)] = -1}}  \leq 2$.
\end{cor}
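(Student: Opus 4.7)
The plan is to combine the hypothesis with Proposition~\ref{prop:partition_summary} and then bound the sign imbalance by a case analysis on which subpartition $(A(t), B(t))$ initially occupies. The onset-time claim and the imbalance claim will follow from the same trajectory analysis.

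First I would reformulate the hypothesis. Since $A(t) = \frac{\sqrt{3}}{3}\tsigma_0^2 \te_0(t)$, the condition $\te_0(t) \in [-\sqrt{3}\eta, \sqrt{3}\eta]$ is equivalent to $|A(t)| \leq \tsigma_0^2 \eta$, placing $(A(t), B(t))$ in one of the nine subpartitions of Figure~\ref{fig:partiotion_anlaysis}. By Proposition~\ref{prop:partition_summary}, either $(A(t), B(t))$ already lies in a stationary subpartition $R_s$, in which case the 2-periodic sign alternation of $A$ begins at iteration $t$, or it lies in one of the transient subpartitions $R_t = \{R_{11}, R_{12}, R_{21}, R_{23}\}$ and is absorbed into some $R_s$ within at most three updates, after which the oscillation starts. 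This settles the first claim about onset within $3$ iterations. Let $t_0 \leq t+3$ denote the iteration at which 2-periodic behavior takes over.

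For the imbalance claim, introduce $I(T) \eqdef \sum_{i = t+1}^{T} \bigl(\mathbb{I}\{\sign[A(i)]=1\} - \mathbb{I}\{\sign[A(i)]=-1\}\bigr)$. The strict alternation $+, -, +, -, \dotsc$ from iteration $t_0$ onward contributes at most $1$ in absolute value to $I(T)$ at any finite truncation $T \geq t_0$, so it suffices to show that the contribution from iterations $t+1, \dotsc, t_0$ can combine with the first oscillatory term to produce $|I(T)| \leq 2$. I would do this by a four-case enumeration of the transient starting subpartition (assuming $A(t)>0$ WLOG), using the updates in Table~\ref{table:signGD_update_synthetic} and the magnitude ranges defining each subpartition: (i) from $R_{12}$, a full-size update of $\tsigma_0^2\eta$ flips $A$ negative and lands in $R_3$, giving $t_0 = t+1$; (ii) from $R_{21}$, the smaller $\frac{\tsigma_0^2\eta}{3}$-update preserves the sign and lands in $R_3$, giving $t_0=t+1$; (iii) from $R_{23}$, a full-size update flips the sign into $R_{21}$ and a subsequent small update lands in $R_3$ with the sign preserved, giving $t_0 = t+2$ with two consecutive $-$ signs in the transient; (iv) from $R_{11}$, a sign-preserving small update lands in $R_{23}$, after which the $R_{23}$ analysis applies, yielding $t_0 = t+3$ with transient signs $+, -, -$. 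Tracking the resulting sequences, case~(iii) is the worst: the tail oscillation starts at $t_0=t+2$ with $\sign[A(t+2)]=-$, producing the pattern $-, -, +, -, +, \dotsc$ and $I(T)$ alternating between $-1$ and $-2$, which attains but does not exceed the bound of $2$.

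The main obstacle is the magnitude bookkeeping across the transient chain: one must confirm in each case that after every step the new value of $|A|$ falls in the sub-range defining the claimed next subpartition, so that the Table~\ref{table:signGD_update_synthetic} updates applied at subsequent steps are indeed the correct ones and the chain identified in Proposition~\ref{prop:partition_summary} actually occurs. Once these inclusions are verified, the sign accounting above is immediate and the bound $|I(T)|\leq 2$, with tightness in case~(iii), follows.
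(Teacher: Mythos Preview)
Your proposal is correct and takes essentially the same approach as the paper: the corollary is presented there as an immediate consequence of Proposition~\ref{prop:partition_summary}, with the worst-case transient chain $R_{11}\to R_{23}\to R_{21}\to R_3$ already identified in that proposition. You supply the explicit sign bookkeeping (and correctly isolate case~(iii) as the one saturating the bound~$2$), which the paper leaves implicit.
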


\subsubsection{Dynamics of $\te_2$ under signGD}\label{sec:appendix:dynamics_of_e2}
We are now ready to analyze the dynamics of $\te_2$ through the behavior of $\te_0$ and $\te_1$.
Particularly, we demonstrate that the final value of $\abs{\te_2}$ is affected by the magnitude of $\abs{\te_0(0)}$ and $\abs{\te_1(0)}$.
%
%
First, notice that the update direction along $\te_2$ follows the opposite of the sign of $\te_0$, and
%
at every iteration when $\abs{A(t)} \leq \abs{B(t)}$, there is a non-zero weight adaptation for $\te_2$.
%
%
Once the oscillation begins for $\te_0$, the dynamics of $\te_0$ and $\te_2$ become similar.
Consider $T$ as the first iteration when $\abs{\te_0}$ drops below $\sqrt{3}\eta$.
We then have
\begin{linenomath*}\begin{align}\label{eq:linear_signGD_e2_after_T}
    \limsup_{t \ra \infty} \abs{\te_2(t)}  &= \abs{\te_2(T) + \eta\sum_{t=T+1}^{\infty} 
    \left\{ 
    \mathbb{I}\left\{ \abs{A(t)} < \abs{B(t)} \right\}\frac{-\sqrt6}{3} +
    \mathbb{I}\left\{ \abs{A(t)} = \abs{B(t)} \right\}\frac{-\sqrt6}{6}
    \right\} \sign[\te_0(t)]} \nonumber \\
    & \leq \abs{\te_2(T)} + \frac{\sqrt6}{3} \eta \abs{\sum_{t=T+1}^{\infty} \sign[\te_0(t)]}\nonumber \\
    & \leq \abs{\te_2(T)} + \frac{2\sqrt6}{3} \eta,
\end{align}\end{linenomath*}
where we use Corollary~\ref{cor:sign_diff} to upper bound the absolute value of the summation of the sign of $\te_0$ after the $T$-th iteration in the last inequality.
This means that after $T$ iterations, $\te_2$ stays in an $O(\eta)$ neighborhood of $\te_2(T)$; in other words, $\tw_2$ stays in an $O(\eta)$ neighborhood of $\tw_2(T)$.
Also, notice that (\ref{eq:linear_signGD_e2_after_T}) does not include $\mathbb{I}\left\{ \abs{A(t)} > \abs{B(t)}\right\}$ since $\te_2$ is updated only when $\abs{A(t)} \leq \abs{B(t)}$, as shown in Table~\ref{table:signGD_update_synthetic}.

Define $\Delta \tw_2$ as the sum of all the updates in $\tw_2$ up to the $T$-th iteration:
\begin{linenomath*}\begin{equation}
    \Delta \tw_2 \eqdef \eta \sum_{t=0}^{T-1} \left\{\mathbb{I}\left\{\abs{A(t)} < \abs{B(t)} \right\} \frac{-\sqrt6}{3}+ \mathbb{I}\left\{\abs{A(t)} = \abs{B(t)} \right\} \frac{-\sqrt6}{6}\right \}\sign[\te_0(t)],
\end{equation}\end{linenomath*}
which leads to 
\begin{linenomath*}\begin{equation}\label{eq:linear_signGD_w2_limsup}
    \limsup_{t \ra \infty} \abs{\tw_2(t)} = \abs{\tw_2(T) + O(\eta)} = \abs{\tw_2(0) + \Delta \tw_2  + O(\eta)},
\end{equation}\end{linenomath*}
where $\tw_2(0)$ is the weight at initialization.

Putting (\ref{eq:linear_signGD_w2_limsup}) together with Remark~\ref{rem:linear_signGD_e0_e1_limsup}, the asymptotic solution found by signGD is
\begin{linenomath*}\begin{equation}
    \boldsymbol{\tw}^\text{signGD} = 
    \begin{bmatrix}
    \tw_0^*, 
    \tw_1^*, 
    \tw_2(0) + \Delta \tw_2
    \end{bmatrix}^\top + O(\eta).
\end{equation}\end{linenomath*}

From the perspective of training under the standard risk, the signGD solution is close to the optimum.
Specifically, its standard risk is
\begin{linenomath*}\begin{align}
\mathcal{R}_\text{s}(\boldsymbol{\tw}^\text{signGD}) &= \EE{\ell(\tilde{X},Y;\boldsymbol{\tw}^\text{signGD})}\nonumber \\
                                                     &= \frac{1}{2}\EE{\ip{\tilde{X}}{\boldsymbol{\tw}^\text{signGD}-\tw^*}^2} \nonumber \\
                                                     &= \frac{1}{2}\left(\EE{\tilde{X}_0^2}(\boldsymbol{\tw}_0^\text{signGD} -\tw_0^* )^2 +\EE{\tilde{X}_1^2}(\boldsymbol{\tw}_1^\text{signGD} -\tw_1^* )^2\right) \label{eq:linear_signGD_standard_risk_cross_term}\\ 
                                                     &= \frac{1}{2}\left(\tsigma_0^2 O(\eta^2) +\tsigma_1^2O(\eta^2)\right) \nonumber \\ 
                                                     &= O((\tsigma_0^2 +\tsigma_1^2)\eta^2), \nonumber 
\end{align}
\end{linenomath*}
where $\EE{\tilde{X}_0\tilde{X}_1} = 0$ in (\ref{eq:linear_signGD_standard_risk_cross_term}) due to the diagonality of $\tSigma$.
Note that the standard risk of the GD solution is exactly zero; and by choosing a small learning rate $\eta$, the standard risk of the signGD solution can be close to zero as well.
However, their adversarial risks are very different. Specifically, the adversarial risk of the asymptotic signGD solution is 
\begin{linenomath*}\begin{equation}\label{eq:linear_signGD_adv_risk_appendix_with_eta}
\mathcal{R}_\text{a}(\boldsymbol{\tw}^\text{signGD}) = \frac{\eps^2}{2}  ||\boldsymbol{\tw}^\text{signGD}||_2^2 = \frac{\eps^2}{2}  \left\{\tw_0^{*2}+ \tw_1^{*2}  +(\tw_2(0) + \Delta \tw_2)^2 + O(\eta^2)\right\}.
\end{equation}
\end{linenomath*}

Consider a sufficiently small learning rate: $\eta \ll \min\{\tw^*_0, \tw^*_1, \tw_2(0) + \Delta \tw_2\}$.
This means that the contribution from $O(\eta^2)$ in $\mathcal{R}_\text{a}(\boldsymbol{\tw}^\text{signGD})$ is negligible. Then the adversarial risk of the signGD solution becomes 
\begin{linenomath*}\begin{equation}\label{eq:linear_signGD_adv_risk_appendix}
\mathcal{R}_\text{a}(\boldsymbol{\tw}^\text{signGD}) = \frac{\eps^2}{2}  \left\{\tw^{*2}_{0} + \tw^{*2}_{1} + (\tw_2(0) + \Delta \tw_2)^{2}\right\}.
\end{equation}
\end{linenomath*}

We can compare it with the adversarial risk of the asymptotic solution found by GD under the same setup:
\begin{linenomath*}\begin{equation}\label{eq:linear_GD_adv_risk_3dimensional_appendix}
\mathcal{R}_\text{a}(\boldsymbol{\tw}^\text{GD}) 
= \frac{\eps^2}{2}  \left\{\tw^{*2}_{0} + \tw^{*2}_{1} + \tw_2^{2}(0)\right\}.
\end{equation}\end{linenomath*}

The main difference between the two adversarial risks in (\ref{eq:linear_signGD_adv_risk_appendix}) and (\ref{eq:linear_GD_adv_risk_3dimensional_appendix}) is the difference in weights learned at the irrelevant frequency.
Since their use of irrelevant frequency in the data is under-constrained, neither algorithm can reduce $\tw_2$ to zero, thereby neither solution is the most robust standard risk minimizer.
The GD solution is sensitive to weight initialization.
To understand the $\Delta \tw_2$ term in the signGD solution, first recall that $T$ denotes the first iteration when $\abs{\te_0}$ drops below $\sqrt3\eta$ (or $\abs{A}$ drops below $\tsigma_0^2\eta$), and from Corollary~\ref{cor:sign_diff} we know that $\tw_0$ starts oscillation at most 3 iterations after $T$.
Recall in (\ref{eq:linear_signGD_w2_limsup}) that $O(\eta)$ has been utilized to account for the maximum sign variations, this means that we can consider oscillations which begin immediately after the $T$-th update.
Suppose that $\eta$ is small so the sign of $\te_0$ would not change before the oscillation starts, then we have 
\begin{linenomath*}
\begin{align*}
    \abs{\Delta \tw_2} &= \abs{\eta \sum_{t=0}^{T-1} \left\{\mathbb{I}\left\{\abs{A(t)} < \abs{B(t)} \right\} \frac{-\sqrt6}{3}+ \mathbb{I}\left\{\abs{A(t)} = \abs{B(t)} \right\} \frac{-\sqrt6}{6}\right \}\sign[\te_0(t)]} \nonumber \\ 
    &= \abs{\eta \sum_{t=0}^{T-1} \left\{\mathbb{I}\left\{\abs{A(t)} < \abs{B(t)} \right\} \frac{-\sqrt6}{3}+ \mathbb{I}\left\{\abs{A(t)} = \abs{B(t)} \right\} \frac{-\sqrt6}{6}\right \}},
\end{align*}
\end{linenomath*}
which leads to
%
\begin{linenomath*}
\begin{align}\label{eq:linear_signGD_delta_w2_appendix}
    \abs{\Delta \tw_2} = C \eta \sum_{t=0}^{T-1} \mathbb{I} \left\{ \abs{A(t)} \leq \abs{B(t)} \right \},
\end{align}
\end{linenomath*}
where $C$ denotes some value between $\frac{\sqrt6}{6}$ and $\frac{\sqrt6}{3}$, which correspond to always using the smaller and the larger updates, respectively.

\subsubsection{Dynamics of $\abs{\Delta \tw_2}$ under signGD}\label{sec:appendix:dynamics_of_delta_w2}
There are two factors that can affect the magnitude of $\sum_{t=0}^{T-1} \mathbb{I} \left\{ \abs{A(t)} \leq \abs{B(t)} \right \}$ in (\ref{eq:linear_signGD_delta_w2_appendix}): 1) the relative magnitudes between $\tsigma^2_0$ and $\tsigma^2_1$, and 2) the initial values of $\abs{\te_0}$ and $\abs{\te_1}$, or equivalently, the initial values of $\abs{A}$ and $\abs{B}$. 
To analyze this, we divide the set of values of $(\abs{A(t)},\abs{B(t)})$ into several partitions: the set of $[0, \tsigma^2_0\eta] \times \Real$ and $\Real \times  [0, \tsigma^2_0\eta]$ is partitioned into $P_1$ and $P_2$,
and the set of $[\tsigma^2_0\eta, \infty) \times [\tsigma^2_0\eta, \infty)$ is partitioned differently based on the value of $\frac{\tsigma^2_1}{\tsigma^2_0}$.
Consider a line that travels through the point of $(\tsigma^2_0\eta, \tsigma^2_0\eta)$ and has a slope of $3\frac{\tsigma^2_1}{\tsigma^2_0}$.
The ratio between $\tsigma^2_0$ and $\tsigma^2_1$ is particularly useful in analyzing $\abs{\Delta \tw_2}$ because understanding the position of $(\abs{A(0)}, \abs{B(0})$ relative to such a line can lead to the value of $\abs{B(T-1)}$, that is, the value of $\abs{B}$ before the oscillation of $\abs{A}$ begins.
Since $\abs{B}$ is updated only when $\abs{A}\leq\abs{B}$, we have $\sum_{t=0}^{T-1} \mathbb{I} \left\{ \abs{A(t)} \leq \abs{B(t)} \right \} = \frac{\abs{B(0)} - \abs{B(T-1)}}{\tsigma^2_1\eta}$.
The definitions of partitions are
\begin{itemize}
    \item $ P_1 = \cset{(A,B)}{\abs{A} < \tsigma^2_0\eta}$,
    \item $ P_2 = \cset{(A,B)}{\abs{A} > \tsigma^2_0\eta\;\text{and}\; \abs{B} < \tsigma^2_0\eta}$,
    
    \item When $\frac{\tsigma^2_1}{\tsigma^2_0}>\frac{1}{3}$,
        \begin{itemize}
            \item $ P_3 = \cset{(A,B)}{\tsigma^2_0\eta < \abs{A} < \frac{\tsigma^2_0}{3\tsigma^2_1} (\abs{B} + (3\tsigma^2_1 - \tsigma^2_0)\eta )}$, 
            \begin{itemize}
                \item $ P_{31} = \cset{(A,B)}{(A,B) \in P_3 \;\text{and}\; \abs{A}+\tsigma^2_0\eta > \abs{B}}$, 
            \end{itemize}
        \item $ P_4 = \cset{(A,B)}{\tsigma^2_0\eta < \abs{B} < \frac{3\tsigma^2_1}{\tsigma^2_0}\abs{A} - (3\tsigma^2_1 - \tsigma^2_0)\eta}$,
            \begin{itemize}
                \item $ P_{41} = \cset{(A,B)}{(A,B) \in P_4 \;\text{and}\;\abs{B}<\abs{A}<2\tsigma^2_0\eta}$, 
                \item $ P_{42} = \cset{(A,B)}{(A,B) \in P_4 \;\text{and}\; \abs{A}>\abs{B} \;\text{and}\; 2\tsigma^2_0\eta <\abs{A}< \frac{\abs{B} + (3\tsigma^2_1 - \tsigma^2_0)\eta}{3\frac{\tsigma^2_1}{\tsigma^2_0}}+\tsigma^2_0\eta }$, 
            \end{itemize} 
        \end{itemize}
    
    \item When $\frac{\tsigma^2_1}{\tsigma^2_0}<\frac{1}{3}$, 
        \begin{itemize}
            \item $ P_5 = \cset{(A,B)}{\tsigma^2_0\eta < \abs{A} < \abs{B}}$,
            \item $ P_6 = \cset{(A,B)}{\tsigma^2_0\eta < \abs{B} < \abs{A}}$.
        \end{itemize}
    
\end{itemize}
%
\begin{figure}[t]
    \captionsetup[subfigure]{labelformat=simple, labelsep=period}
    \centering 
    {\includegraphics[width=0.8\linewidth]{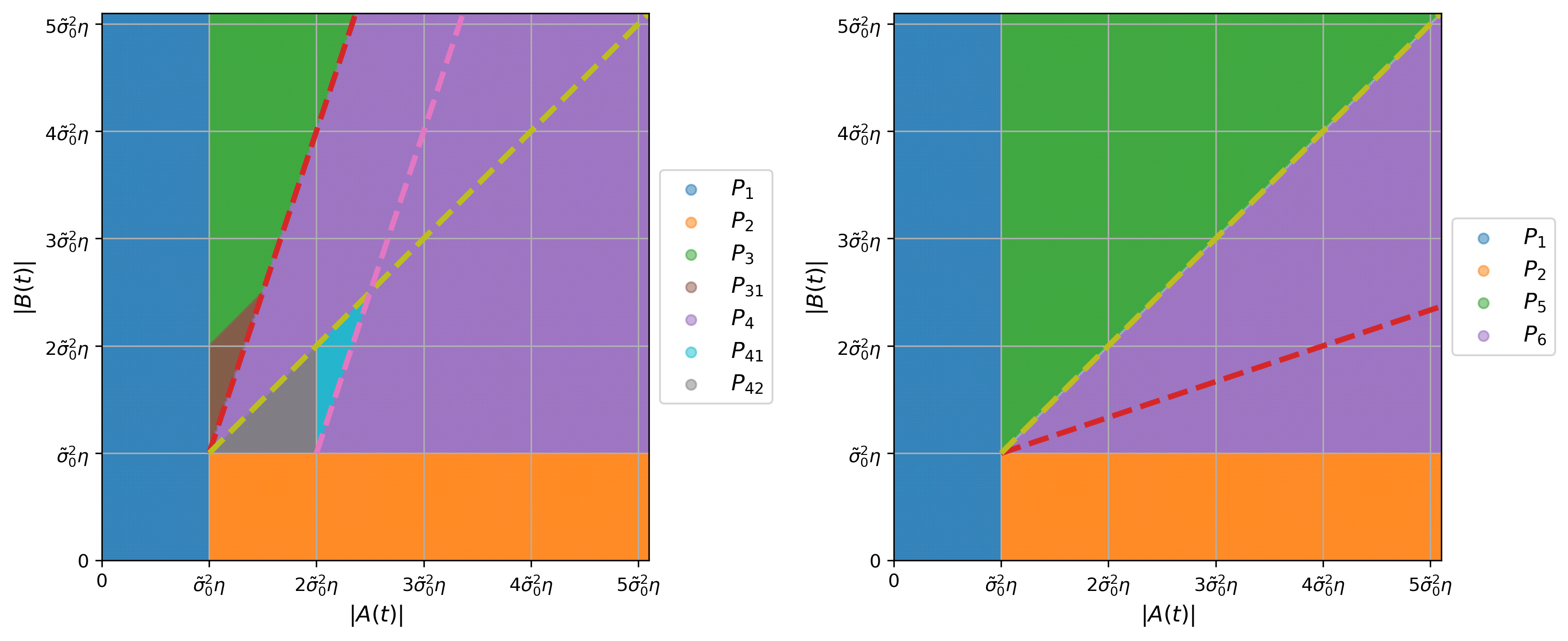}}
    \caption{\textbf{Analyzing the value of $\sum_{t=0}^{T-1} \mathbb{I} \left\{ \abs{A(t)} \leq \abs{B(t)} \right \}$ in (\ref{eq:linear_signGD_delta_w2_appendix}) by partitioning the set of values of $(\abs{A(t)}, \abs{B(t)})$, and the relative magnitude between $\tsigma^2_0$ and $\tsigma^2_1$ determines the partitions on which the analysis is based. }
    Specifically, the analysis is based on partitions $P_1$, $P_2$, $P_3$ and $P_4$, when \mbox{$\frac{\tsigma^2_1}{\tsigma^2_0}>\frac{1}{3}$} (left), and on $P_1$, $P_2$, $P_5$ and $P_6$, when \mbox{$\frac{\tsigma^2_1}{\tsigma^2_0}<\frac{1}{3}$} (right).
    The three smaller subpartitions are subsets of the main partition, i.e., $P_{31} \subset P_3$ and $P_{41}, P_{42} \subset P_4$, and they are used in the analysis of $P_4$.
    The value of $\sum_{t=0}^{T-1} \mathbb{I} \left\{ \abs{A(t)} \leq \abs{B(t)} \right \}$ when $(\abs{A(0)}, \abs{B(0)})$ is initialized in each partition is summarized in Proposition~\ref{prop:delta_w2_summary}.
    %
    %
    The two plots are created with $\tsigma^2_0 = \tsigma^2_1$ (left) and $\tsigma^2_0 = 9\tsigma^2_1$ (right), respectively. Note that those values are chosen for illustration purposes and do not affect the generality of the result.
    %
    In both plots, the red dashed line corresponds to \mbox{$\abs{B(t)} = 3\frac{\tsigma^2_1}{\tsigma^2_0}\abs{A(t)} - (3\tsigma^2_1 - \tsigma^2_0)\eta$} for $\abs{A(t)} \in (\tsigma^2_0\eta, \infty)$, and the yellow dashed line corresponds to \mbox{$\abs{B(t)} = \abs{A(t)}$}. 
    The pink dashed line is parallel to the red dashed line with a horizontal gap of $\tsigma^2_0\eta$.
    }
    \label{fig:delta_w2_anlaysis}
\end{figure}
An illustration of partitions is provided in Figure~\ref{fig:delta_w2_anlaysis}, where the two plots demonstrate the two different ways of dividing the set of $[\tsigma^2_0\eta, \infty) \times [\tsigma^2_0\eta, \infty)$ based on the value of $\frac{\tsigma^2_1}{\tsigma^2_0}$.
The connection between the values of $(\abs{A(0)}, \abs{B(0)})$ and the size of $\sum_{t=0}^{T-1} \mathbb{I} \left\{ \abs{A(t)} \leq \abs{B(t)} \right \}$ is summarized in the next proposition.

\begin{prop}\label{prop:delta_w2_summary} Denote $T$ as the iteration when $\abs{\te_0}$ drops below $\sqrt3 \eta$. The value of $\sum_{t=0}^{T-1} \mathbb{I} \left\{ \abs{A(t)} \leq \abs{B(t)} \right \}$ depends on the relative magnitude between $\tsigma^2_0$ and $\tsigma^2_1$, and the initial values of $\abs{A}$ and $\abs{B}$. Specifically, we have

\begin{linenomath*}\begin{equation*}
    \sum_{t=0}^{T-1} \mathbb{I} \left\{ \abs{A(t)} \leq \abs{B(t)} \right \} =
    \left\{\begin{matrix*}[l]
    0 & \text{if} \quad (\abs{A(0)}, \abs{B(0)}) \in (P_1 \bigcup P_2)
    \\
    T & \text{if} \quad \frac{\tsigma^2_1}{\tsigma^2_0}>\frac{1}{3} \quad \text{and}\quad (\abs{A(0)}, \abs{B(0)}) \in P_3
    \\
    \frac{\abs{B(0)}}{\tsigma^2_1 \eta} + [\frac{-2\tsigma^2_0}{\tsigma^2_1}, \frac{-\tsigma^2_0}{\tsigma^2_1}] & \text{if}\quad \frac{\tsigma^2_1}{\tsigma^2_0}>\frac{1}{3} \quad \text{and}\quad  (\abs{A(0)}, \abs{B(0)}) \in P_4
    \\
    T & \text{if}\quad \frac{\tsigma^2_1}{\tsigma^2_0}<\frac{1}{3} \quad \text{and}\quad  (\abs{A(0)}, \abs{B(0)}) \in P_5
    \\
    \frac{\abs{B(0)} - \tsigma^2_0\eta}{\frac{1}{3}\tsigma^2_0\eta}  & \text{if}\quad \frac{\tsigma^2_1}{\tsigma^2_0}<\frac{1}{3} \quad \text{and}\quad  (\abs{A(0)}, \abs{B(0)}) \in P_6.
    \end{matrix*}\right.
\end{equation*}\end{linenomath*}
\end{prop}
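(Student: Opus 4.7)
The plan is to reduce the count $\sum_{t=0}^{T-1}\mathbb{I}\set{\abs{A(t)}\leq\abs{B(t)}}$ to the number of iterations before $T$ in which $\abs{B}$ is actually modified. Inspection of Table~\ref{table:signGD_update_synthetic} shows that the only rows with a nonzero second coordinate in the update vector are those whose sign pattern forces $\abs{B(t)}\geq\abs{A(t)}$, so the indicator in the sum coincides with the event ``$\abs{B}$ changes at step $t$''. I will then trace the orbit of $(\abs{A(t)},\abs{B(t)})$ in the first quadrant using essentially two primitive moves: a \emph{horizontal} move $(-\tsigma_0^2\eta,0)$ whenever $\abs{A}>\abs{B}$ (entries 1 and 8), and a \emph{diagonal} move $(-\tfrac{1}{3}\tsigma_0^2\eta,-\tsigma_1^2\eta)$ whenever $\abs{A}<\abs{B}$ (entries 2, 4, 6, 7), together with boundary variants on the line $\abs{A}=\abs{B}$.

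For the four clean cases $P_1$, $P_2$, $P_3$, $P_5$, I plan to show that a single invariant ($\abs{A}>\abs{B}$ or $\abs{A}<\abs{B}$) is preserved throughout $[0,T-1]$. In $P_1$ we already have $\abs{A(0)}<\tsigma_0^2\eta$, so $T=0$ and the sum is vacuously $0$; in $P_2$ the pair $\abs{A}>\abs{B}$ is preserved by horizontal moves until $\abs{A}$ drops below $\tsigma_0^2\eta$, during which $\abs{B}$ is never touched, giving $0$ again. In $P_3$ the starting region lies above the line $\abs{B}=\tfrac{3\tsigma_1^2}{\tsigma_0^2}\abs{A}-(3\tsigma_1^2-\tsigma_0^2)\eta$, and the diagonal move has exactly the same slope $3\tsigma_1^2/\tsigma_0^2$, so the trajectory slides parallel to this line and the invariant $\abs{A}<\abs{B}$ persists; every iteration therefore contributes and the sum equals $T$. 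In $P_5$, $\tsigma_1^2<\tfrac{1}{3}\tsigma_0^2$ makes the diagonal move shrink $\abs{A}$ faster than $\abs{B}$, so starting from $\abs{A}<\abs{B}$ this inequality is trivially preserved and the sum again equals $T$.

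The cases $P_4$ and $P_6$ require a two-phase decomposition. In $P_6$ ($\tsigma_1^2<\tfrac{1}{3}\tsigma_0^2$), the trajectory first performs a sequence of horizontal moves that contribute nothing, landing near the diagonal at approximately $(\abs{B(0)},\abs{B(0)})$; then a sequence of diagonal moves takes $\abs{A}$ from $\abs{B(0)}$ down to $\tsigma_0^2\eta$ at rate $\tfrac{1}{3}\tsigma_0^2\eta$ per step, yielding exactly $(\abs{B(0)}-\tsigma_0^2\eta)/(\tfrac{1}{3}\tsigma_0^2\eta)$ counting iterations. In $P_4$ the situation is more delicate because $\tsigma_1^2>\tfrac{1}{3}\tsigma_0^2$ makes the diagonal move shrink $\abs{B}$ faster than $\abs{A}$: after the initial horizontal descent the trajectory zig-zags about the diagonal, and over each cycle $\abs{B}$ decreases by $k\tsigma_1^2\eta$ while $k$ diagonal (counting) moves and a single horizontal (non-counting) move are executed, where $k\approx 3\tsigma_0^2/(3\tsigma_1^2-\tsigma_0^2)$. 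Summing over cycles yields the main term $\abs{B(0)}/(\tsigma_1^2\eta)$, with the announced interval $[-2\tsigma_0^2/\tsigma_1^2,-\tsigma_0^2/\tsigma_1^2]$ absorbing the truncation of the first and last cycles.

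The main obstacle is the zig-zag bookkeeping in $P_4$ near the diagonal. When $\abs{A}$ and $\abs{B}$ differ by less than $\tsigma_0^2\eta$, the neutral updates 3, 5, 9, 10 in Table~\ref{table:signGD_update_synthetic} can activate and redirect the orbit into any of the subpartitions $P_{31}$, $P_{41}$, or $P_{42}$; these subpartitions are precisely tailored so that from each entry point one can verify by a finite case analysis that at most $O(1)$ extra horizontal and diagonal moves occur before the periodic zig-zag resumes, which is what produces the two-sided slack of width $\tsigma_0^2/\tsigma_1^2$ rather than an exact identity. I expect this boundary accounting, not the overall trajectory picture, to consume the bulk of the proof, and I would organize it as a case split over which of $P_{31}$, $P_{41}$, $P_{42}$ the trajectory enters first.
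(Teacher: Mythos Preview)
Your plan is correct and, for $P_1$, $P_2$, $P_3$, $P_5$, $P_6$, essentially identical to the paper's argument: the same two primitive moves, the same invariant line of slope $3\tsigma_1^2/\tsigma_0^2$ for $P_3$, and the same two-phase (horizontal then diagonal) decomposition for $P_6$.

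The only noteworthy divergence is $P_4$. You propose to track the zig-zag cycle-by-cycle and account for the boundary via a case split on $P_{31},P_{41},P_{42}$. The paper instead pushes your own opening observation to its conclusion: since every step with $\abs{A(t)}\leq\abs{B(t)}$ decreases $\abs{B}$ by exactly $\tsigma_1^2\eta$ and no other step touches $\abs{B}$, one has the global identity
\[
\sum_{t=0}^{T-1}\mathbb{I}\{\abs{A(t)}\leq\abs{B(t)}\}=\frac{\abs{B(0)}-\abs{B(T-1)}}{\tsigma_1^2\eta},
\]
so the entire problem collapses to bounding $\abs{B(T-1)}$. The paper then does a three-way split, not on where the trajectory \emph{enters} $P_{31},P_{41},P_{42}$, but on how it \emph{exits} $P_4$ into $P_1$: via $P_2$, directly (through $P_{41}$), or via $P_3$ (through $P_{42}\to P_{31}$). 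In each route $\abs{B(T-1)}\in[\tsigma_0^2\eta,2\tsigma_0^2\eta]$, which immediately gives the stated interval. This sidesteps all mid-trajectory bookkeeping: the zig-zag geometry never needs to be analysed, because the identity above is path-independent. Your approach would work, but you are doing more labour than necessary; the shortcut is to apply your first sentence globally rather than cycle-by-cycle.
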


\begin{proof}
$ $\newline
We divide the analysis into two main parts: when $\frac{\tsigma^2_1}{\tsigma^2_0} >\frac{1}{3}$ and $\frac{\tsigma^2_1}{\tsigma^2_0} < \frac{1}{3}$, corresponding to the left and right figures in Figure~\ref{fig:delta_w2_anlaysis}. For each case, we analyze the behavior of $(A, B)$ within the partition.

\textbf{Analysis of $P_1$:}
    For any $(\abs{A(0)}, \abs{B(0)})$ in $P_{1}$, since $\abs{A(0)}$ is already below $\tsigma_0^2 \eta$, we have $T=1$ because $A$ remains in $P_{1}$. This means that $\sum_{t=0}^{T-1} \mathbb{I} \left\{ \abs{A(t)} \leq \abs{B(t)} \right \} = 0$.

\textbf{Analysis of $P_2$:}
    For any $(\abs{A(0)}, \abs{B(0)})$ in $P_{2}$, $\abs{A}$ decreases until it drops below $\tsigma^2_0\eta$, while $\abs{B}$ remains the same. This means that $\abs{A}$ remains smaller than $\abs{B}$, so we have $\mathbb{I} \left\{ \abs{A(t)} \leq \abs{B(t)} \right \} = 0$ for all $t \in \set{0,\dotsc, T-1}$. Therefore, we have $\sum_{t=0}^{T-1} \mathbb{I} \left\{ \abs{A(t)} \leq \abs{B(t)} \right \} = 0$.

Next, the partitions of the set of $[\tsigma^2_0\eta, \infty] \times [\tsigma^2_0\eta, \infty]$ are defined differently based on the values of $\frac{\tsigma^2_1}{\tsigma^2_0}$ compared to $\frac{1}{3}$. 
This is because when $\frac{\tsigma^2_1}{\tsigma^2_0} > \frac{1}{3}$, it is possible for any $(\abs{A(t)}, \abs{B(t)})$ satisfying $\abs{A(t)} < \abs{B(t)}$, there exists $t'>t$ such that $\abs{A(t')} > \abs{B(t')}$. 
In other words, $(\abs{A}, \abs{B})$ can oscillate above and below the line defined by $\abs{A} = \abs{B}$, and this makes analyzing (\ref{eq:linear_signGD_delta_w2_appendix}) difficult. 
However, when $\frac{\tsigma^2_1}{\tsigma^2_0} < \frac{1}{3}$, any $(\abs{A(t)}, \abs{B(t)})$ that satisfies $\abs{A(t)} < \abs{B(t)}$ will stay above the line defined by $\abs{A} = \abs{B}$, and this means that $\abs{A}$ will always get updated by $\frac{\tsigma^2_0\eta}{3}$ and $\abs{B}$ will always get updated by $\tsigma^2_1\eta$.
Because of this different behavior, we analyze these two cases separately by defining different partitions. This corresponds to the left and right figures in Figure~\ref{fig:delta_w2_anlaysis}.
When $\frac{\tsigma^2_1}{\tsigma^2_0} > \frac{1}{3}$, the set of $[\tsigma^2_0\eta, \infty] \times [\tsigma^2_0\eta, \infty]$ is partitioned into $P_3$ and $P_4$.
\vspace{5pt}\\
\textbf{Analysis of $P_3$:}
    By definition, any $(\abs{A(t)}, \abs{B(t)})$ in $P_{3}$ satisfies $3\frac{\tsigma^2_1}{\tsigma^2_0}\abs{A(t)} < \abs{B(t)} + (3\tsigma^2_1 - \tsigma^2_0)\eta$.
    Starting from any $(\abs{A(0)}, \abs{B(0)})$ in $P_{3}$, the values of $\abs{A}$ and $\abs{B}$ decrease at a rate of $\frac{1}{3}\tsigma^2_0\eta$ and $\tsigma^2_1\eta$, respectively, and this means that two sides of the inequality decrease at the same rate.
    Hence, the sequence $(\abs{A(t)}, \abs{B(t)})$ remains in $P_{3}$ for all $0\leq t<T-1$. 
    This means that $\mathbb{I} \left\{ \abs{A(t)} \leq \abs{B(t)} \right \} = 1$ for all $t \in \set{0,\dotsc, T-1}$. 
    Therefore, we have $\sum_{t=0}^{T-1} \mathbb{I} \left\{ \abs{A(t)} \leq \abs{B(t)} \right \} = T$.

\textbf{Analysis of $P_4$:}
    Since $(\abs{A(T-1)}, \abs{B(T-1)})$ must be in $P_{1}$, we can understand the value of \mbox{$\sum_{t=0}^{T-1} \mathbb{I} \left\{ \abs{A(t)} \leq \abs{B(t)} \right \}$} by considering how any $(\abs{A(0)}, \abs{B(0)})$ in $P_{4}$ is transitioned to $(\abs{A(T-1)}, \abs{B(T-1)})$ in $P_1$. Also, starting from any $(\abs{A(t)}, \abs{B(t)})$ in $P_4$, we know that $\abs{A(t)} - \abs{A(t+1)}>0$ and $\abs{B(t)} - \abs{B(t+1)}\geq0$; hence, the transition from $P_{4}$ to $P_{1}$ must be described in one of the following scenarios.
    \vspace{5pt}\\
    \textbf{Transition to $P_2$ then to $P_1$:}
    In this case, the value of $\abs{B}$ must first drop below $\tsigma^2_0\eta$. Since $\abs{B}$ decreases only when $\abs{A}\leq\abs{B}$, this means that, regardless of the initial value of $\abs{A}$, the same number of updates is required to reduce $\abs{B(0)}$ to $\tsigma^2_0\eta$, which is $\frac{\abs{B(0)}-\tsigma^2_0\eta}{\tsigma^2_1\eta}$, and in each update, the condition $ \mathbb{I} \left\{ \abs{A(t)} \leq \abs{B(t)} \right \}$ is satisfied.
    %
    \vspace{5pt}\\
    \textbf{Transition to $P_1$ directly:} 
    For any $(\abs{A(t)}, \abs{B(t)})$ in $P_4$ that satisfies $\abs{B(t)}>\abs{A(t)}$ (above the yellow dashed line in Figure~\ref{fig:delta_w2_anlaysis}), since the values of $\abs{A}$ and $\abs{B}$ decrease at a rate of $\frac{1}{3}\tsigma^2_0\eta$ and $\tsigma^2_1\eta$, respectively, $(\abs{A(t+1)}, \abs{B(t+1)})$ cannot cross the red dashed line which has a slope of $3\frac{\tsigma^2_1}{\tsigma^2_0}$. 
    Now let us consider any $(\abs{A(t)}, \abs{B(t)})$ in $P_4$ that satisfies $\abs{B(t)}<\abs{A(t)}$ (below the yellow dashed line). 
    In this case, $\abs{A}$ decreases by $\tsigma^2_0\eta$, and the only scenario where $(\abs{A(T-1)}, \abs{B(T-1)})$ ends up in $P_1$ is when $\tsigma^2_0\eta<\abs{A(T-2)}<2\tsigma^2_0\eta$. 
    That is, $(\abs{A(T-2)}, \abs{B(T-2)})$ is in $P_{42}$. 
    When this happens, we have $\tsigma^2_0\eta<\abs{B(T-2)}<2\tsigma^2_0\eta$; and because there is no update in $\abs{B(T-2)}$, we have $\tsigma^2_0\eta<\abs{B(T-1)}<2\tsigma^2_0\eta$.
    Therefore, we have $\sum_{t=0}^{T-1} \mathbb{I} \left\{ \abs{A(t)} \leq \abs{B(t)} \right \} \in  [\frac{\abs{B(0)}-2\tsigma^2_0\eta}{\tsigma^2_1\eta},\frac{\abs{B(0)}-\tsigma^2_0\eta}{\tsigma^2_1\eta}]$.
    \vspace{5pt}\\
    \textbf{Transition to $P_3$ then to $P_1$:} 
    Let us first consider the transition from $P_4$ to $P_3$. 
    Consider $t'$ such that $(\abs{A(t)}, \abs{B(t)})$ is in $P_4$ for $0\leq t < t'$ and $(\abs{A(t')}, \abs{B(t')})$ is in $P_3$. 
    Following the above analysis (direct transition to $P_1$), we know that $(\abs{A(t'-1)}, \abs{B(t'-1)})$ must satisfy $\abs{A(t'-1)} > \abs{B(t'-1)}$, where $t'-1$ is the iteration before transitioning to $P_3$.
    Also, we know that $\abs{A(t'-1)}>2\tsigma^2_0\eta$ otherwise $(\abs{A(t')}, \abs{B(t')})$ would be in $P_1$.
    The last condition for such a transition to happen is that the horizontal distance from $\abs{B(t'-1)}$ to the line of \mbox{$\abs{B} = 3\frac{\tsigma^2_1}{\tsigma^2_0}\abs{A} - (3\tsigma^2_1 - \tsigma^2_0)\eta$} (the red dashed line) must be smaller than $\tsigma^2_0\eta$.
    That is, $(\abs{A(t'-1)}, \abs{B(t'-1)})$ is in $P_{41}$, and $(\abs{A(t')}, \abs{B(t')})$ is in $P_{31}$.
    After the transition to $P_3$, the values of $\abs{A}$ and $\abs{B}$ decrease at a rate of $\frac{1}{3}\tsigma^2_0\eta$ and $\tsigma^2_1\eta$, respectively, and $\abs{B(T-1)}$ has a range of $[\tsigma_0\eta, 2\tsigma_0\eta]$.
    Therefore, we have $\sum_{t=0}^{T-1} \mathbb{I} \left\{ \abs{A(t)} \leq \abs{B(t)} \right \} \in  [\frac{\abs{B(0)}-2\tsigma^2_0\eta}{\tsigma^2_1\eta},\frac{\abs{B(0)}-\tsigma^2_0\eta}{\tsigma^2_1\eta}]$.
    
When $\frac{\tsigma^2_1}{\tsigma^2_0} < \frac{1}{3}$, the set of $[\tsigma^2_0\eta, \infty] \times [\tsigma^2_0\eta, \infty]$ is partitioned into $P_5$ and $P_6$, as shown in the right figure of Figure~\ref{fig:delta_w2_anlaysis}.
\vspace{5pt}\\
\textbf{Analysis of $P_5$:}
    Starting from any $(\abs{A(0)}, \abs{B(0)})$ in $P_{5}$, the values of $\abs{A}$ and $\abs{B}$ decrease at a rate of $\frac{1}{3}\tsigma^2_0\eta$ and $\tsigma^2_1\eta$, respectively. However, since $\frac{\tsigma^2_1}{\tsigma^2_0} < \frac{1}{3}$, there will not be any $0\leq t \leq T-1$ where $\abs{A(t)}>\abs{B(t)}$. This means that $\mathbb{I} \left\{ \abs{A(t)} \leq \abs{B(t)} \right \} = 1$ for all $t \in \set{0,\dotsc, T-1}$. 
    Therefore, we have $\sum_{t=0}^{T-1} \mathbb{I} \left\{ \abs{A(t)} \leq \abs{B(t)} \right \} = T$.
    
\textbf{Analysis of $P_6$:}
    Starting from any $(\abs{A(0)}, \abs{B(0)})$ in $P_{6}$, the values of $\abs{A}$ decreases until it becomes smaller than $\abs{B(0)}$. Suppose that this happens at iteration $t'$, that is, $\abs{A(t')} < \abs{B(0)}$. Starting from $(\abs{A(t')}, \abs{B(t')})$ in $P_{5}$, $\abs{A}$ starts to decrease by $\frac{\tsigma^2_0\eta}{3}$ and $\abs{B}$ starts to decrease by $\tsigma^2_1\eta$. Since $\frac{\tsigma^2_1}{\tsigma^2} < \frac{1}{3}$, this means that $(\abs{A(t)}, \abs{B(t)})$ stays in $P_5$ for $t \in \set{t',\dotsc, T-2}$, until it goes to $P_1$ when $\abs{A(T-1)} < \tsigma^2_0\eta$. Therefore, we know that the total change in $\abs{A}$ since $t'$-th iteration is $\abs{A(t')} - \tsigma^2_0\eta = \abs{B(0)} - \tsigma^2_0\eta$. Since $\abs{A}$ can only be updated by the amount of $\frac{\tsigma^2_0\eta}{3}$ in $P_5$, we have $\sum_{t=0}^{T-1} \mathbb{I} \left\{ \abs{A(t)} \leq \abs{B(t)} \right \} = \frac{\abs{B(0)} - \tsigma^2_0\eta}{\frac{1}{3}\tsigma^2_0\eta}$.
\end{proof}
    
We now use this analysis on the behavior of $\sum_{t=0}^{T-1} \mathbb{I} \left\{ \abs{A(t)} \leq \abs{B(t)} \right \}$ to compute $\abs{\Delta \tw_2}$, which plays a role in the adversarial risk of signGD, as shown in (\ref{eq:linear_signGD_adv_risk_appendix}).
For the initial values of $(\abs{A}, \abs{B})$ to be in $P_1$ and $P_2$, the initial errors must be small.
However, consider a dataset with a strong task-relevant correlation between the relevant frequency component of the data and the target, a realistic scenario as we discussed in Sec.~\ref{sec:observation_on_model_robustness}.
In this case, $\abs{\tw_0^*}$ and $\abs{\tw_1^*}$ can be large.
Additionally, with a weight initialization around zero, such as in methods by ~\cite{he2015delving} and~\cite{glorot2010understanding}, the initial error $\abs{\te_0(0)}$ and $\abs{\te_1(0)}$ can be large and close to $\abs{\tw_0^*}$ and $\abs{\tw_1^*}$ when $\abs{\tw_0^*} \gg \abs{\tw_0(0)}$ and $\abs{\tw_1^*} \gg \abs{\tw_1(0)}$. 
Because of this, it is less likely for the initial values of $\abs{A(0)}$ and $\abs{B(0)}$ to be in the $P_1$ partition in Proposition~\ref{prop:delta_w2_summary}. 

Moreover, it is discussed in Sec.~\ref{sec:observation_on_irrelevant_frequency} and later supported empirically in Figure~\ref{fig:spectral_energy_distribution} of Appendix~\ref{sec:appendix_additional_figure} that the distribution of spectral energy heavily concentrates at the low end of the frequency spectrum and decays quickly towards higher frequencies. 
Since $\tsigma^2_i$ is interpreted as the expected energy of a random variable at the $i$-th frequency, it is reasonable to expect that $\frac{\tsigma^2_1}{\tsigma^2_0}<\frac{1}{3}$ and this allows us to further narrow down to initialization of $(\abs{A}, \abs{B})$ in $P_5$ and $P_6$.

The proportional relationship between the size of (\ref{eq:linear_signGD_delta_w2_appendix}) and the magnitude of $\abs{\te_0}$ and $\abs{\te_1}$ when $\frac{\tsigma^2_1}{\tsigma^2_0}<\frac{1}{3}$ and $(\abs{A}, \abs{B})$ is initialized in $P_5$ or $P_6$ can be described in the following proposition.

\begin{prop}\label{cor:delta_w2}
Suppose that the ratio between $\tsigma^2_0$ and $\tsigma^2_1$ satisfies $\frac{\tsigma^2_1}{\tsigma^2_0}<\frac{1}{3}$. The magnitude of $\abs{\Delta \tw_2}$ depends on the initial values of $\abs{\te_0}$ and $\abs{\te_1}$, and the resulting $\abs{A(0)}$ and $\abs{B(0)}$. Specifically, we have 
\begin{linenomath*}\begin{equation}\label{eq:linear_signGD_delta_w2_propto_appendix}
    \abs{\Delta \tw_2} = 
    \left\{\begin{matrix*}[l]
    \sqrt3 C\abs{\te_0(0)} & \text{if}\quad \abs{A(0)} < \abs{B(0)}
    \\
    \frac{3\sqrt2 \tsigma^2_1}{2 \tsigma^2_0} C \abs{\te_1(0)} & \text{if}\quad \abs{A(0)} > \abs{B(0)},
    \end{matrix*}\right.
\end{equation}\end{linenomath*}
where $C \in [\frac{\sqrt6}{6}, \frac{\sqrt6}{3}]$ and we neglect the contribution from $\eta$.
\end{prop}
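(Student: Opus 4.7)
The plan is to derive the two cases directly from the building blocks already assembled: the expression $\abs{\Delta \tw_2} = C \eta \sum_{t=0}^{T-1} \mathbb{I}\{\abs{A(t)} \leq \abs{B(t)}\}$ from \eqref{eq:linear_signGD_delta_w2_appendix}, the case-by-case count of that indicator sum given in Proposition~\ref{prop:delta_w2_summary}, and the definitions $A(t) = \tfrac{\sqrt{3}}{3}\tsigma_0^2 \te_0(t)$ and $B(t) = \tfrac{\sqrt{2}}{2}\tsigma_1^2 \te_1(t)$. The assumption $\tsigma_1^2/\tsigma_0^2 < 1/3$ restricts us to the right-hand panel of Figure~\ref{fig:delta_w2_anlaysis}, so the only partitions in which $(\abs{A(0)},\abs{B(0)})$ can lie (aside from the small $P_1, P_2$ excluded by the large-initial-error argument) are $P_5$ (where $\abs{A(0)} < \abs{B(0)}$) and $P_6$ (where $\abs{A(0)} > \abs{B(0)}$).

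First I would handle the $P_5$ case. Proposition~\ref{prop:delta_w2_summary} gives $\sum_{t=0}^{T-1} \mathbb{I}\{\abs{A(t)} \leq \abs{B(t)}\} = T$, so the remaining task is to compute $T$. Inside $P_5$ every iterate satisfies $\abs{A(t)} < \abs{B(t)}$, which by Table~\ref{table:signGD_update_synthetic} decreases $\abs{A}$ by exactly $\tsigma_0^2 \eta/3$ each step. Hence $T$ is the number of such $\tsigma_0^2\eta/3$-steps required to bring $\abs{A}$ from $\abs{A(0)}$ down to below $\tsigma_0^2 \eta$, which (neglecting the $O(\eta)$ boundary term, as the statement explicitly permits) equals $3\abs{A(0)}/(\tsigma_0^2 \eta)$. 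Substituting $\abs{A(0)} = \tfrac{\sqrt{3}}{3}\tsigma_0^2 \abs{\te_0(0)}$ yields
\begin{equation*}
\abs{\Delta \tw_2} = C\eta \cdot \frac{3\abs{A(0)}}{\tsigma_0^2 \eta} = \sqrt{3}\,C\,\abs{\te_0(0)},
\end{equation*}
matching the first branch of the proposition.

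For the $P_6$ case, Proposition~\ref{prop:delta_w2_summary} directly yields $\sum_{t=0}^{T-1} \mathbb{I}\{\abs{A(t)} \leq \abs{B(t)}\} = (\abs{B(0)} - \tsigma_0^2\eta)/(\tfrac{1}{3}\tsigma_0^2\eta)$. Neglecting the $\tsigma_0^2 \eta$ term relative to $\abs{B(0)}$ gives $3\abs{B(0)}/(\tsigma_0^2 \eta)$. Plugging in $\abs{B(0)} = \tfrac{\sqrt{2}}{2}\tsigma_1^2 \abs{\te_1(0)}$ and multiplying by $C\eta$ produces
\begin{equation*}
\abs{\Delta \tw_2} = C\eta \cdot \frac{3\abs{B(0)}}{\tsigma_0^2 \eta} = \frac{3\sqrt{2}\,\tsigma_1^2}{2\,\tsigma_0^2}\,C\,\abs{\te_1(0)},
\end{equation*}
which is the second branch. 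The result $C \in [\tfrac{\sqrt{6}}{6}, \tfrac{\sqrt{6}}{3}]$ is inherited directly from \eqref{eq:linear_signGD_delta_w2_appendix}.

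This proof is essentially bookkeeping once Proposition~\ref{prop:delta_w2_summary} is in hand; the main subtlety is justifying that the $O(\eta)$ boundary corrections (the final $\tsigma_0^2\eta$ residue on $\abs{A}$ in the $P_5$ case, and the analogous residue on $\abs{B}$ in the $P_6$ case) can indeed be absorbed into the $O(\eta)$ term already present in the downstream expression for $\boldsymbol{\tw}^{\text{signGD}}$, which is what the phrase ``we neglect the contribution from $\eta$'' formalizes. There is nothing deep beyond that substitution and the unit conversions between $(A,B)$ and $(\te_0,\te_1)$.
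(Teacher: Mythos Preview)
Your proposal is correct and follows essentially the same approach as the paper: invoke Proposition~\ref{prop:delta_w2_summary} to obtain the indicator-sum count in the $P_5$ and $P_6$ cases, compute $T$ explicitly in the $P_5$ case via the $\tsigma_0^2\eta/3$ decrement of $\abs{A}$, substitute the definitions of $A(0)$ and $B(0)$ in terms of $\te_0(0)$ and $\te_1(0)$, and drop the $O(\eta)$ boundary terms. The paper carries the $-3\eta$ correction through to the final displayed expressions before discarding it, but otherwise the arguments are identical.
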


\begin{proof} From Proposition~\ref{prop:delta_w2_summary}, under the assumption that $\frac{\tsigma^2_1}{\tsigma^2_0}<\frac{1}{3}$, we have $\sum_{t=0}^{T-1} \mathbb{I} \left\{ \abs{A(t)} \leq \abs{B(t)} \right \} =T$ when $(\abs{A(0)}, \abs{B(0)}) \in P_5$, and this means that $\abs{\Delta \tw_2} = C\eta T$ from (\ref{eq:linear_signGD_delta_w2_appendix}). 
This also implies that for $t \in \set{0,\dotsc, T-1}$, we have $\abs{A(t)}<\abs{B(t)}$ and $\abs{A(t)} = \abs{A(0)} - \frac{t}{3}\tsigma^2_0\eta$.

Since $T$ is defined as the number of iteration required to reduce $\abs{A(0)}$ to $\tsigma^2_0\eta$, $T$ is $\frac{\abs{A(0)} - \tsigma^2_0\eta}{\frac{1}{3}\tsigma^2_0\eta}$, and we have
\begin{linenomath*}\begin{align*}
    \abs{\Delta \tw_2} 
    = C \eta T 
    = C \eta \frac{\abs{A(0)} - \tsigma^2_0\eta}{\frac{1}{3}\tsigma^2_0\eta} 
    = 3 C \frac{\frac{\sqrt3}{3}\tsigma^2_0\abs{\te_0(0)} - \tsigma^2_0\eta}{\tsigma^2_0} 
    = C (\sqrt3\abs{\te_0(0)}-3\eta).
\end{align*}\end{linenomath*}

From Proposition~\ref{prop:delta_w2_summary}, when $\frac{\tsigma^2_1}{\tsigma^2_0}<\frac{1}{3}$ and $(\abs{A(0)}, \abs{B(0)})$ is in $P_6$, we have
\begin{linenomath*}\begin{equation*}
    \abs{\Delta \tw_2} 
    = C\eta \frac{\abs{B(0)} - \tsigma^2_0\eta}{\frac{1}{3}\tsigma^2_0\eta} 
    = 3C\frac{\frac{\sqrt2}{2}\tsigma^2_1\abs{\te_1(0)} - \tsigma^2_0\eta}{\tsigma^2_0} 
    = C(\frac{3\sqrt{2}\tsigma^2_1}{2\tsigma^2_0}\abs{\te_1(0)}- 3\eta).
\end{equation*}\end{linenomath*}
\end{proof}
Since the initial error $\abs{\te_0(0)}$ and $\abs{\te_1(0)}$ are close to $\abs{\tw_0^*}$ and $\abs{\tw_1^*}$, (\ref{eq:linear_signGD_delta_w2_propto_appendix}) can be written as
\begin{linenomath*}\begin{equation}
    \abs{\Delta \tw_2} \approx
    \left\{\begin{matrix*}[l]
    \sqrt3 C \abs{\tw_0^*} & \text{if}\quad \abs{A(0)} < \abs{B(0)}
    \\
    \frac{3\sqrt2 \tsigma^2_1}{2\tsigma^2_0} C \abs{\tw_1^*} & \text{if} \quad \abs{A(0)} > \abs{B(0)}
    \end{matrix*}\right.
\end{equation}\end{linenomath*}
Now we can consider the ratio between the adversarial risk of the standard risk minimizers found by GD (\ref{eq:linear_GD_adv_risk_3dimensional_appendix}) and signGD (\ref{eq:linear_signGD_adv_risk_appendix}) with a three-dimensional input space. We observe that the solution found by signGD is more sensitive to perturbations compared to the GD solution: 
\begin{linenomath*}\begin{equation*}
\frac{\mathcal{R}_\text{a}(\boldsymbol{\tw}^\text{signGD})}{\mathcal{R}_\text{a}(\boldsymbol{\tw}^\text{GD})} 
    = \frac{\tw^{*2}_{0} + \tw^{*2}_{1} + (\tw_2(0) + \Delta \tw_2)^{2}}{\tw^{*2}_{0} + \tw^{*2}_{1} + \tw_2^{2}(0)}\\
    \approx 1 + \frac{\Delta \tw_2^{2}}{\tw^{*2}_{0} + \tw^{*2}_{1}},
\end{equation*}\end{linenomath*}
where we neglect the contribution from $\tw_2(0)$ in the approximation since we have assumed that the values of $\abs{\tw^*_0}$ and $\abs{\tw^*_1}$ are large compared to the initialized weight $\abs{\tw(0)_2}$. This leads to
\begin{linenomath*}\begin{align*}
\frac{\mathcal{R}_\text{a}(\boldsymbol{\tw}^\text{signGD})}{\mathcal{R}_\text{a}(\boldsymbol{\tw}^\text{GD})} 
    &\approx \left\{\begin{matrix*}[l]
    1 + C_3 \frac{\tw^{*2}_{0}}{\tw^{*2}_{0} + \tw^{*2}_{1}} & \text{if}\quad \abs{A(0)} < \abs{B(0)}
    \\
    1 + C_4 \frac{\tw^{*2}_{1}}{\tw^{*2}_{0} + \tw^{*2}_{1}} & \text{if}\quad \abs{A(0)} > \abs{B(0)},
    \end{matrix*}\right.
\end{align*}\end{linenomath*}
where $\frac{1}{2} \leq C_3 \leq 2$ and $\frac{3}{4} \frac{\tsigma^4_1}{\tsigma^4_0} \leq C_4 \leq 3 \frac{\tsigma^4_1}{\tsigma^4_0}$. 


\subsection{From Irrelevant Frequencies to Spatially Redundant Dimensions}\label{sec:appendix_spatial_redundat_dimensions}
We have demonstrated that when the use of irrelevant frequency is under-constrained, optimizing the standard training objective can lead to solutions with zero standard risk but are sensitive to perturbations.
This section offers a spatial interpretation of the findings, where we illustrate that signals with irrelevant frequencies contain spatially redundant dimensions when transformed into the spatial domain. 
Both interpretations can be used to explain the vulnerability of the solutions.

To illustrate the concept of redundancy in the spatial domain, consider the synthetic dataset with the distribution defined in Sec.~\ref{sec:signGD_dynamics} and the data has a structure of $\set{ (\tilde{X}_0, \tilde{X}_1, 0) }$ in the frequency domain.
Taking the DCT transformation of $\tilde{X}$, we see that the spatial representation of the same dataset is 
\begin{linenomath*}\begin{equation}
    \set{ (\sqrt\frac{1}{3}\tilde{X}_0 + \sqrt\frac{1}{2} \tilde{X}_1, \sqrt\frac{1}{3}\tilde{X}_0, \sqrt\frac{1}{3}\tilde{X}_0 - \sqrt\frac{1}{2} \tilde{X}_1) } , \nonumber
\end{equation}\end{linenomath*}
where $\tilde{X}_0$ and $\tilde{X}_1$ are random variables with frequency interpretations.
In the spatial domain, redundancy refers to the existence of dimensions that are highly correlated with each other.
%
The example mentioned above illustrates that the presence of a single irrelevant frequency in the data distribution corresponds to the existence of one redundant dimension in the spatial domain.
%
Specifically, within this three-dimensional dataset, it is possible to express any dimension as a linear combination of the values at the other two dimensions.
%

This translation between spectral irrelevance and spatial redundancy can also be observed in the learned weight.
Consider a standard risk minimizer $\tw^* = (\tw_0^*, \tw_1^*, 0)$, whose frequency-domain representation is
\begin{linenomath*}
\begin{equation}
    w^* = (\sqrt\frac{1}{3}w_0^* + \sqrt\frac{1}{2} w_1^*, \sqrt\frac{1}{3}w_0^*, \sqrt\frac{1}{3}w_0^* - \sqrt\frac{1}{2} w_1^*). \nonumber
\end{equation}
\end{linenomath*}
%
Because of the irrelevance from $\tw_2$, there are multiple other standard risk minimizers.
In the spatial domain, this means $w^* + \tw_2\vec{w}_2$ with 
$
\vec{w}_2 = (\sqrt\frac{1}{6}, -\sqrt\frac{2}{3}, \sqrt\frac{1}{6})
$
and any choice of $\tw_2 \in \Real$ is still a valid standard risk minimizer.\footnote{The $(\sqrt\frac{1}{6}, -\sqrt\frac{2}{3}, \sqrt\frac{1}{6})$ vector is the DCT basis for the $\tw_2$ term, i.e., $C^\top(0,0,\tw_2) = \tw_2(\sqrt\frac{1}{6}, -\sqrt\frac{2}{3}, \sqrt\frac{1}{6})$.}
When the model trained by signGD has a large weight at $\tw_2$, this implies a large $\tw_2$ for the weight in the spatial domain.
Because $\vec{w}_2$ and $w^*$ are orthogonal, we have $\norm{w^* + \tw_2\vec{w}_2}_2 = \norm{w^*}_2 + \abs{\tw_2}$, therefore, the weight norm increases as $\tw_2$ gets large, and from (\ref{eq:linear_adv_risk_at_std_minimizer}), models are more vulnerable.
%

It is important to realize that having irrelevant frequencies is merely a sufficient condition for having spatially redundant features, but is not a necessary condition.
For example, rearranging the dimensions of $x$ and $w^*$ in the above example still preserves the spatial redundancy in the dataset, and there are still infinitely many standard risk minimizers.
However, it no longer guarantees zero entries in $\tx$ and $\tw^*$.

\Rebuttal{\section{Future Direction: Studying Model Robustness under Different Optimization Objectives }\label{sec:appendix_SAM}}
The Sharpness-Aware Minimization (SAM) objective, proposed by \cite{foret2021sharpness}, has demonstrated improvements in model robustness both in settings with noisy training labels and against adversarial perturbations \citep{wei2023sharpness}.

Understanding the dynamics of the sharpness-aware loss, especially under different optimization algorithms, can be more involved. Without doing so, notice that the SAM objective in \cite{foret2021sharpness} includes an $\ell_2$ regularization term on the weight norm.
That is, training with the SAM objective penalizes models for having large weight norms.
This is in line with our findings presented in Sec.~\ref{sec:linear_analysis}, where we demonstrate that a minimum norm standard risk minimizer achieves the most robust standard risk minimizer.

Recent work by \cite{wei2023sharpness} focused on linear models with classification and demonstrated that minimizing $\ell^{S A M}$ alone can lead to adversarially robust models.
They designed a synthetic dataset based on the hypothesis of the robust and non-robust features \citep{ilyas2019adversarial}, and theoretically demonstrated on the linear classification that minimizing the sharpness-aware loss alone can result in models with larger weight on the robust features.

An important distinction to highlight between our analysis and that of \citet{wei2023sharpness} is that, while both work theoretically analyze the adversarial robustness of linear models, our work focuses on models obtained via different optimization \textbf{algorithms}, while \cite{wei2023sharpness} focuses on models under different optimization \textbf{objectives}. 
In our setting, under the same optimization objective, there exist multiple optimal solutions where their standard risks are identical, but their adversarial risks are different.
On the other hand, in the setting of \citet{wei2023sharpness}, each objective has its own optimal solution. These solutions differ not just in adversarial robustness but also in their standard generalization performance.
The two directions --optimization objectives and algorithms-- are orthogonal, and the choice of an objective is independent of the choice of optimization algorithm.
Understanding how models, trained under robustification objectives, behave when paired with various optimization algorithms is a promising avenue for future directions.

\section{Additional figures}\label{sec:appendix_additional_figure}

In Figure~\ref{fig:spectral_energy_distribution}, we visualize the energy distribution of various datasets containing natural images.
Each dataset contains four plots.
The $(i,j)$ coordinate in the first plot represents $\frac{1}{N}\sum_{n=1}^N|\tx_{n;(i,j)}|$, where $N$ is the number of training images, $\tx_n$ is the DCT transformation of $x_n$, and $\tx_{n;(i,j)}$ denotes the amplitude of the $(i,j)$-th basis in the $n$-th sample.
In the second plot, we visualize the diagonal values from the first plot: $\left\{\frac{1}{N}\sum_{n=1}^N|\tx_{n;(i,i)}|\right\}_{i=0,\dotsc,d-1}$.
We observe across all datasets that there is a high concentration of energy in the low-frequency harmonics and the amplitude of the higher-frequency harmonics becomes almost negligible. 
Therefore, we repeat the first two plots in the natural log scale ($\log_{e}$).
The $(i,j)$ coordinate in the third plot represents $\frac{1}{N}\sum_{n=1}^N\log|\tx_{n;(i,j)}|$.
In the fourth plot, we visualize $\left\{\frac{1}{N}\sum_{n=1}^N\log|\tx_{n;(i,i)}|\right\}_{i=0,\dotsc,d-1}$.

\begin{figure}[t]
    \captionsetup[subfigure]{labelformat=simple, labelsep=period}
    \centering 
    \subfloat[MNIST]{\includegraphics[width=0.8\linewidth]{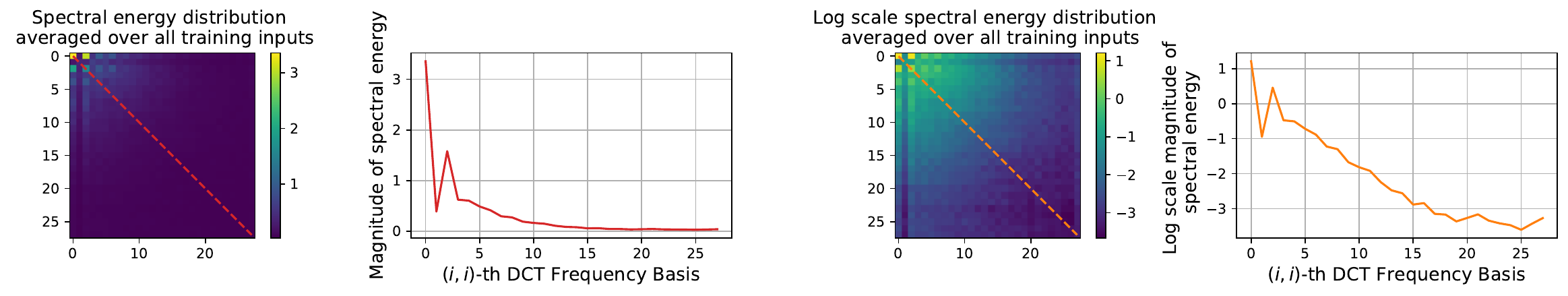}}\\
    \vspace{-0.4cm}
    \subfloat[FashionMNIST]{\includegraphics[width=0.8\linewidth]{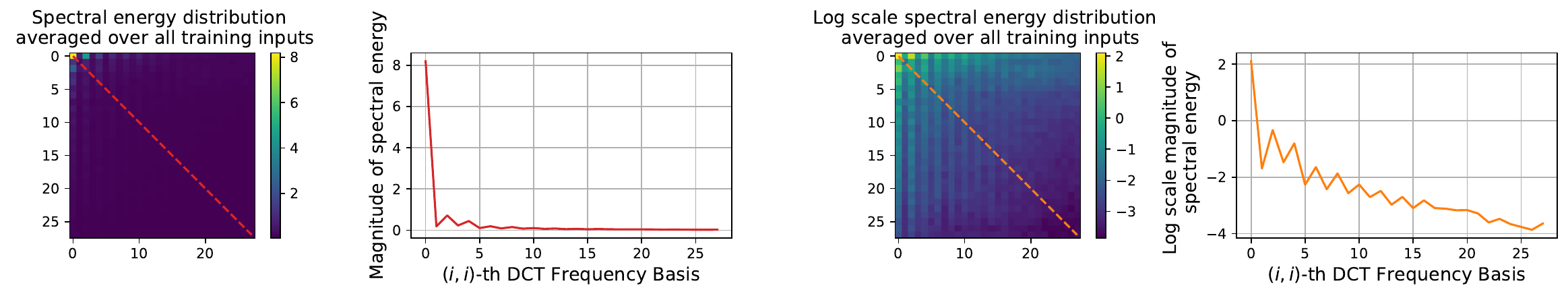}}\\
    \vspace{-0.4cm}
    \subfloat[CIFAR10]{\includegraphics[width=0.8\linewidth]{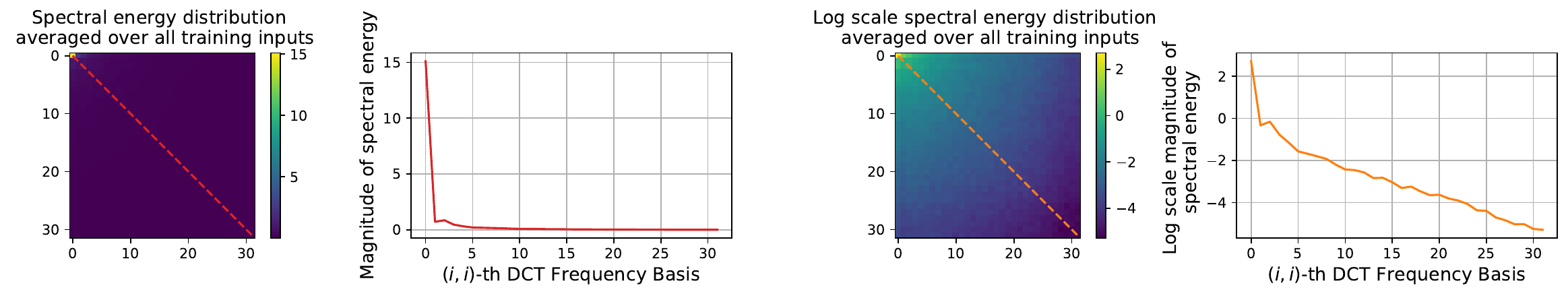}}\\
    \vspace{-0.4cm}
    \subfloat[CIFAR100]{\includegraphics[width=0.8\linewidth]{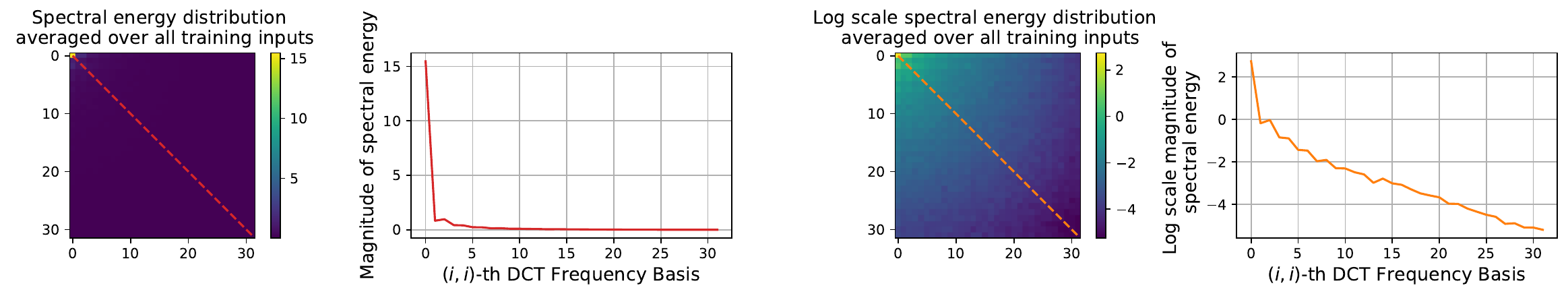}}\\
    \vspace{-0.4cm}
    \subfloat[SVHN]{\includegraphics[width=0.8\linewidth]{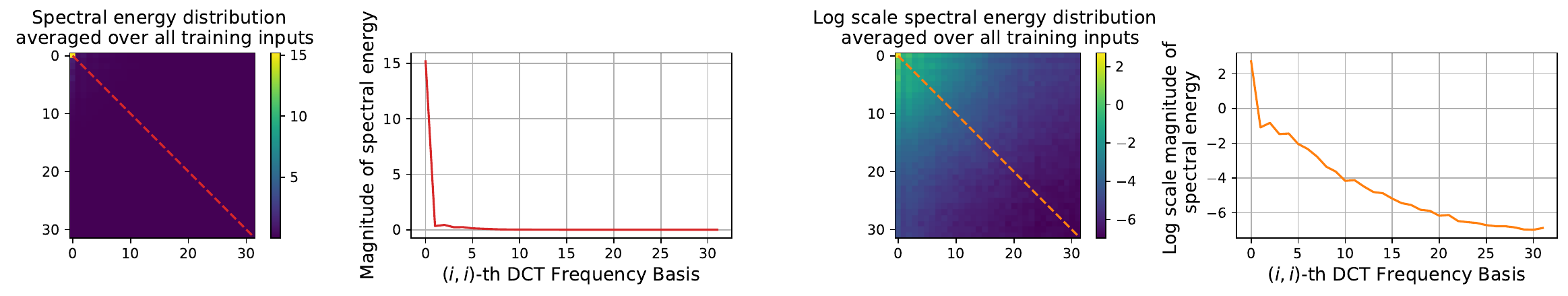}}\\
    \vspace{-0.4cm}
    \subfloat[Caltech101]{\includegraphics[width=0.8\linewidth]{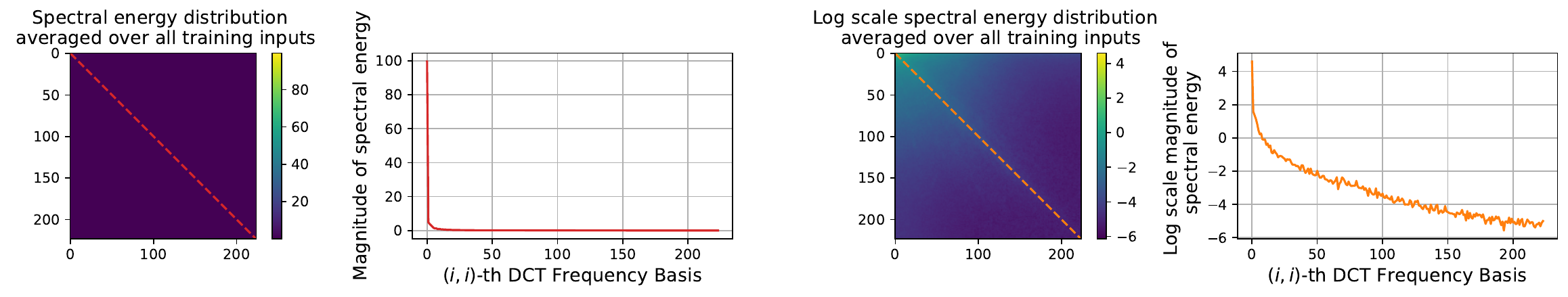}}\\
    \vspace{-0.4cm}
    \subfloat[Imagenette]{\includegraphics[width=0.8\linewidth]{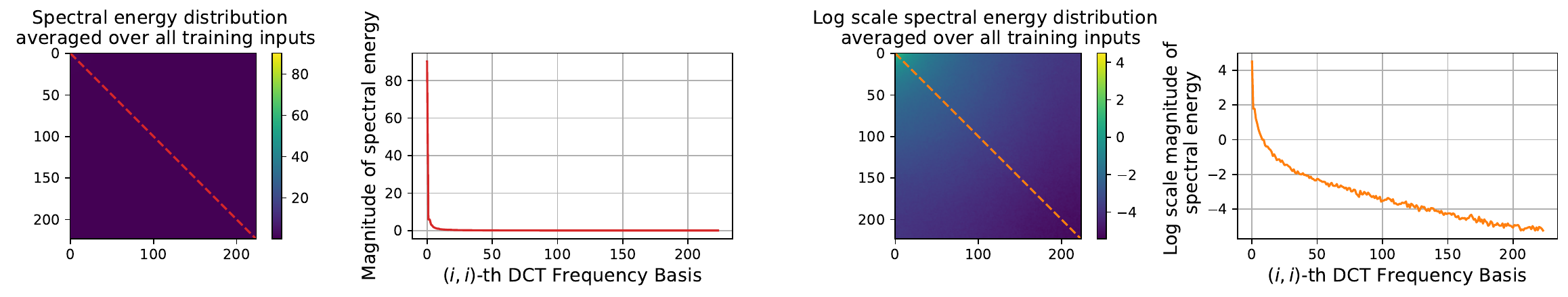}}

    \caption{\textbf{Illustration of the spectral energy distribution in natural data.} Distribution of the spectral energy heavily concentrates at low frequencies and decays exponentially towards higher frequencies.
    }
    \label{fig:spectral_energy_distribution}
\end{figure}

\begin{figure}[t]
    \captionsetup[subfigure]{labelformat=simple, labelsep=period}
    \centering 
    \subfloat[Original Image.]{\includegraphics[width=0.5\linewidth]{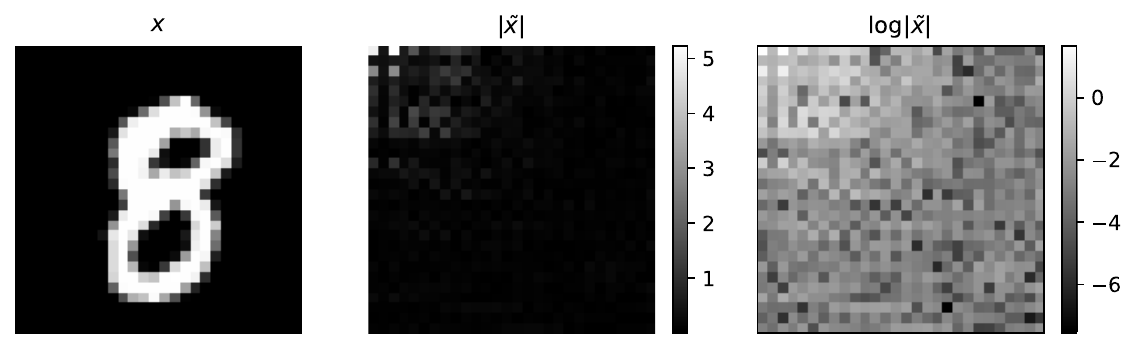}}
    \hfill
    \subfloat[Modified Image.]{\includegraphics[width=0.85\linewidth]{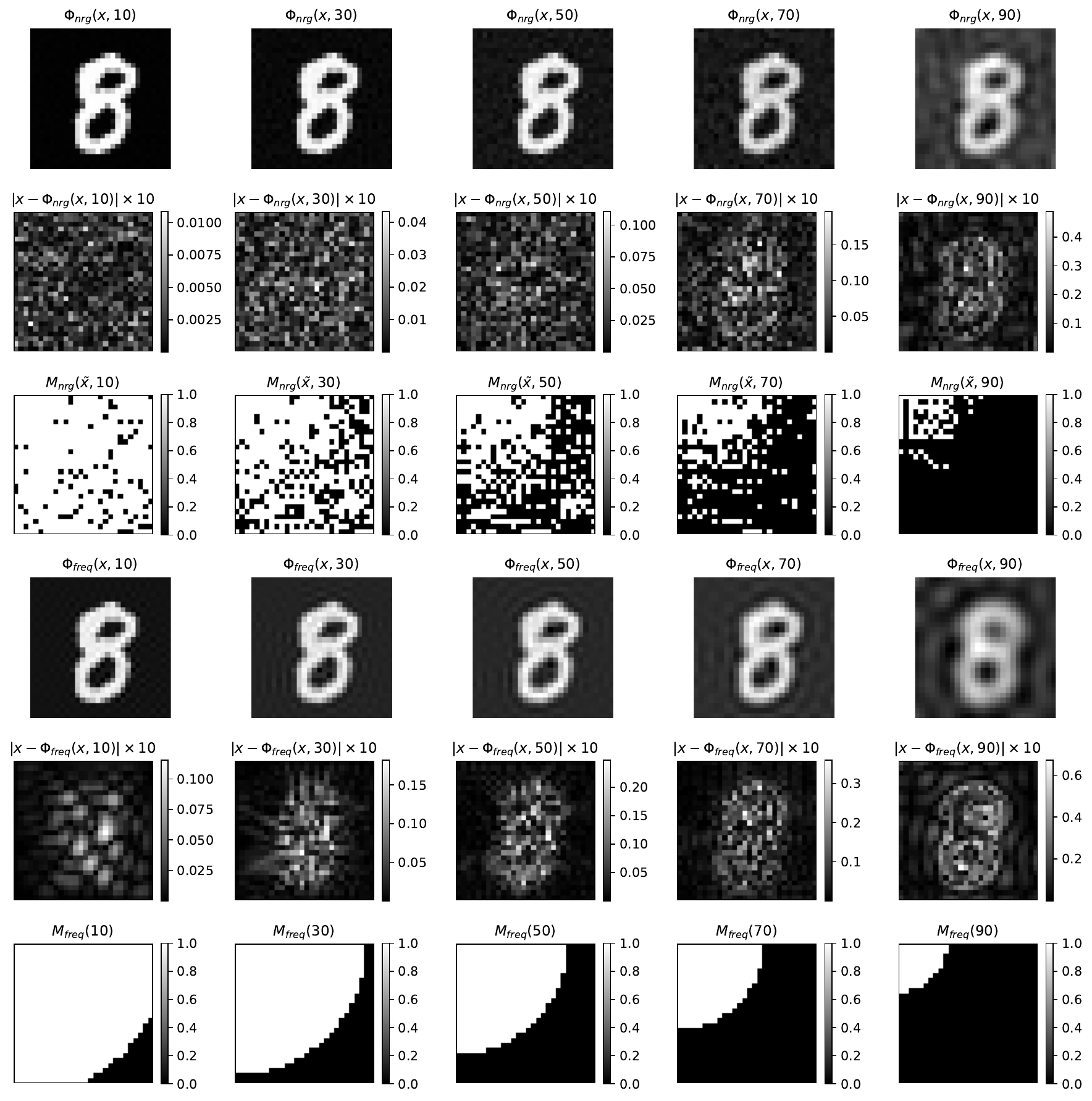}}
    \caption{\textbf{Examples of modified images used in Observation I. (MNIST)}
    We use a threshold value of \code{$\text{threshold}=\{10, 30, 50, 70, 90\}$} to modify images based on its magnitude of DCT basis and their freqequency basis. In a), we show the original image $x$ and the magnitude of its DCT basis $\abs{\tx}$ in both linear and log scale.
    In b), we show images modified by removing DCT basis vectors whose magnitudes are in the bottom \code{threshold} percentage (row 1), the differences between the modified images and the original image (row 2), the binary mask used to remove the DCT basis: black means removed (row 3), images modified by removing high-frequency DCT basis vectors (row 4), the differences between the modified images and the original image (row 5) and the binary mask used to remove the DCT basis: black means removed (row 6).
    Notice that the masks in row 6 only depends on the dimension of the images, whereas the masks in row 3 differs from images to images.
    }
    \label{fig:modified_images_mnist}
\end{figure}

\begin{figure}[t]
    \captionsetup[subfigure]{labelformat=simple, labelsep=period}
    \centering 
    \subfloat[Original Image.]{\includegraphics[width=0.5\linewidth]{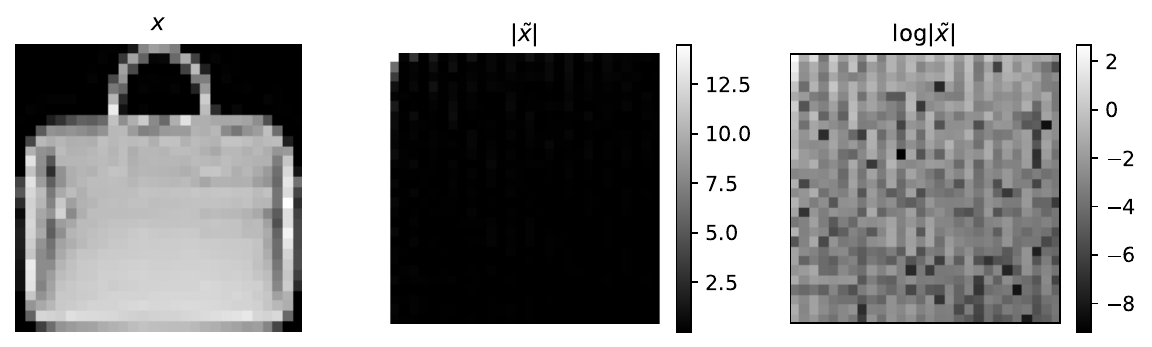}}
    \hfill
    \subfloat[Modified Image.]{\includegraphics[width=0.85\linewidth]{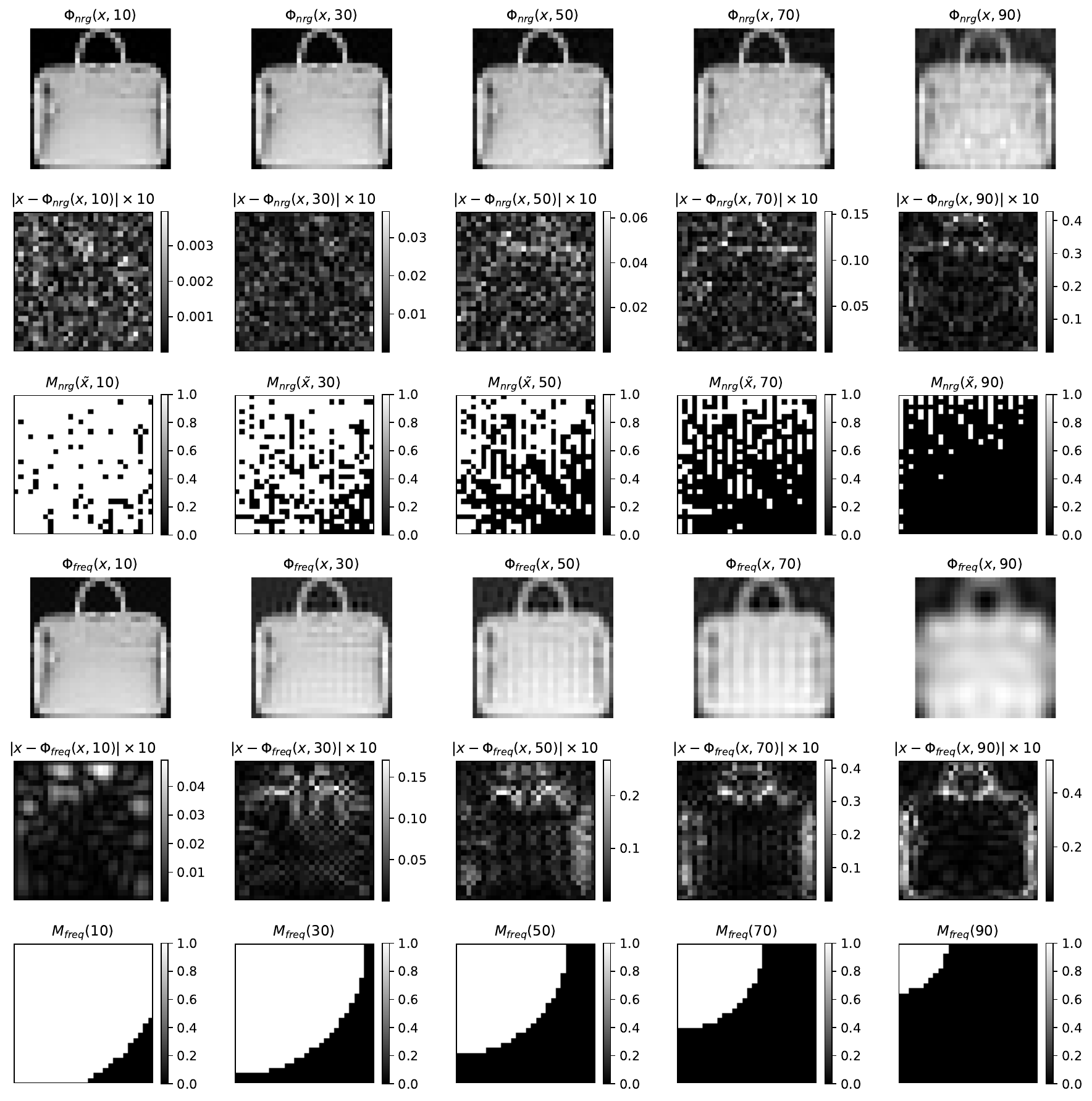}}
    \caption{\textbf{Examples of modified images used in Observation I. (FashionMNIST)}
    We use a threshold value of \code{$\text{threshold}=\{10, 30, 50, 70, 90\}$} to modify images based on its magnitude of DCT basis and their freqequency basis. In a), we show the original image $x$ and the magnitude of its DCT basis $\abs{\tx}$ in both linear and log scale.
    In b), we show images modified by removing DCT basis vectors whose magnitudes are in the bottom \code{threshold} percentage (row 1), the differences between the modified images and the original image (row 2), the binary mask used to remove the DCT basis: black means removed (row 3), images modified by removing high-frequency DCT basis vectors (row 4), the differences between the modified images and the original image (row 5) and the binary mask used to remove the DCT basis: black means removed (row 6).
    Notice that the masks in row 6 only depends on the dimension of the images, whereas the masks in row 3 differs from images to images.
    }
    \label{fig:modified_images_fashionmnist}
\end{figure}

\begin{figure}[t]
    \captionsetup[subfigure]{labelformat=simple, labelsep=period}
    \centering 
    \subfloat[Original Image.]{\includegraphics[width=0.5\linewidth]{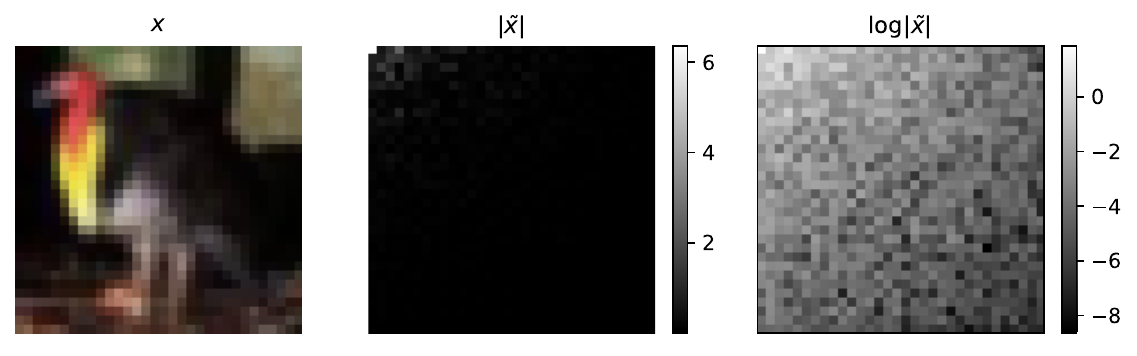}}
    \hfill
    \subfloat[Modified Image.]{\includegraphics[width=0.85\linewidth]{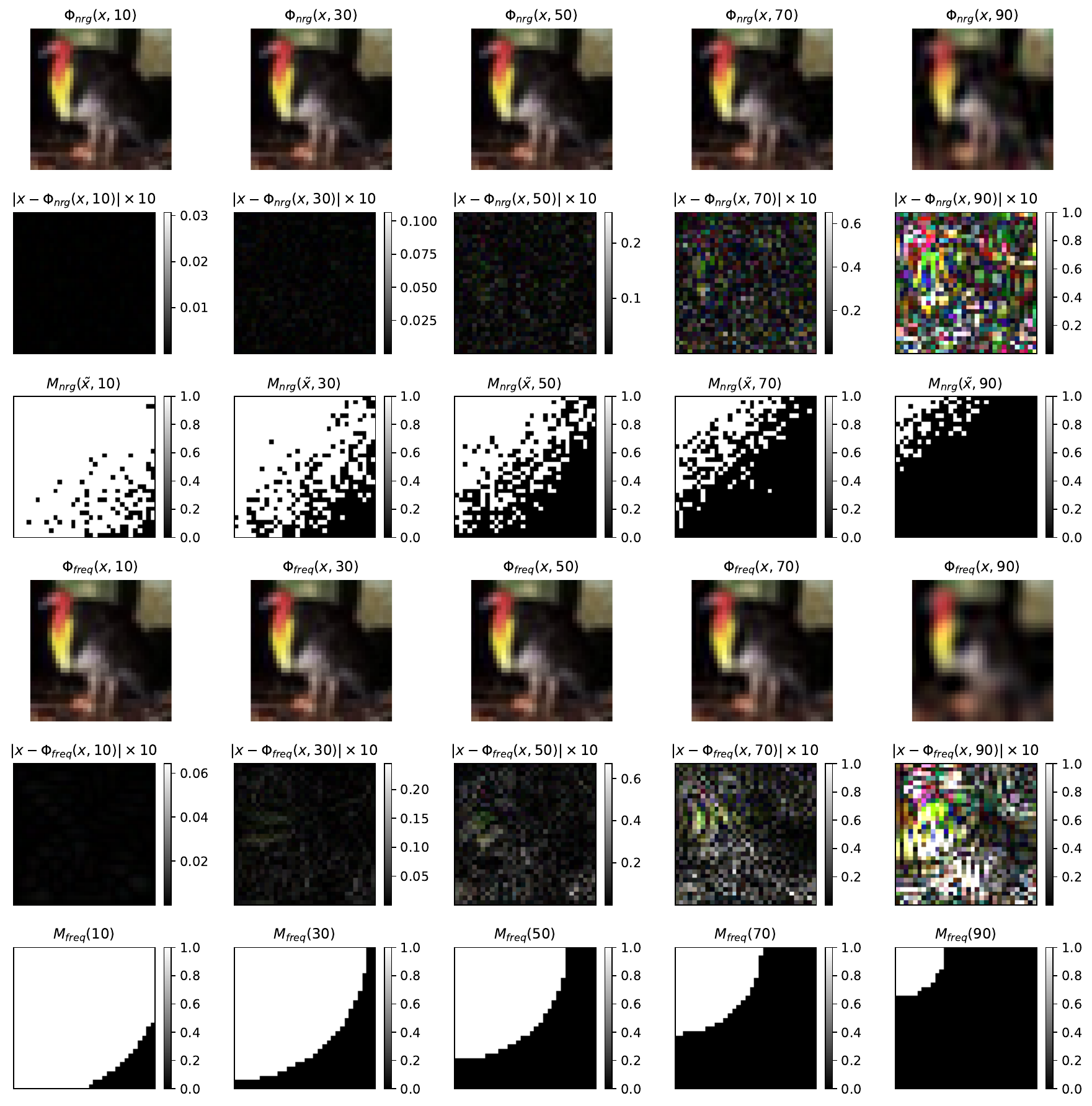}}
    \caption{\textbf{Examples of modified images used in Observation I. (CIFAR10)}
    We use a threshold value of \code{$\text{threshold}=\{10, 30, 50, 70, 90\}$} to modify images based on its magnitude of DCT basis and their freqequency basis. In a), we show the original image $x$ and the magnitude of its DCT basis $\abs{\tx}$ in both linear and log scale.
    In b), we show images modified by removing DCT basis vectors whose magnitudes are in the bottom \code{threshold} percentage (row 1), the differences between the modified images and the original image (row 2), the binary mask used to remove the DCT basis: black means removed (row 3), images modified by removing high-frequency DCT basis vectors (row 4), the differences between the modified images and the original image (row 5) and the binary mask used to remove the DCT basis: black means removed (row 6).
    Notice that the masks in row 6 only depends on the dimension of the images, whereas the masks in row 3 differs from images to images.
    }
    \label{fig:modified_images_cifar10}
\end{figure}

\begin{figure}[t]
    \captionsetup[subfigure]{labelformat=simple, labelsep=period}
    \centering 
    \subfloat[Original Image.]{\includegraphics[width=0.5\linewidth]{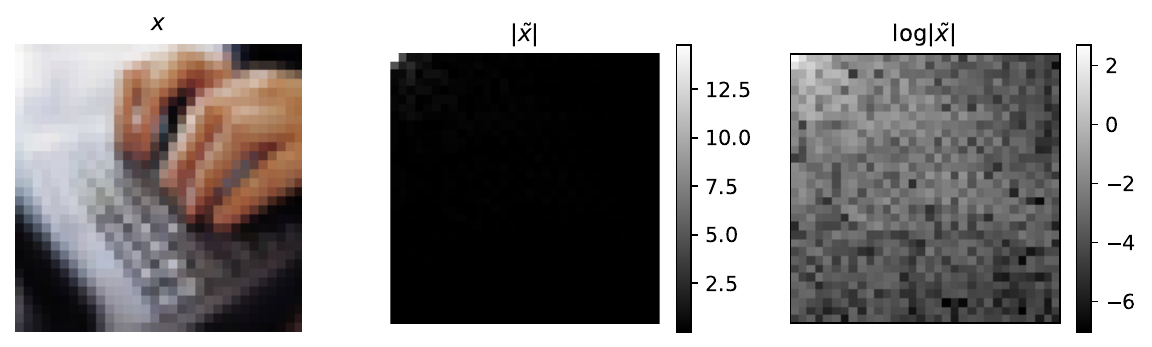}}
    \hfill
    \subfloat[Modified Image.]{\includegraphics[width=0.85\linewidth]{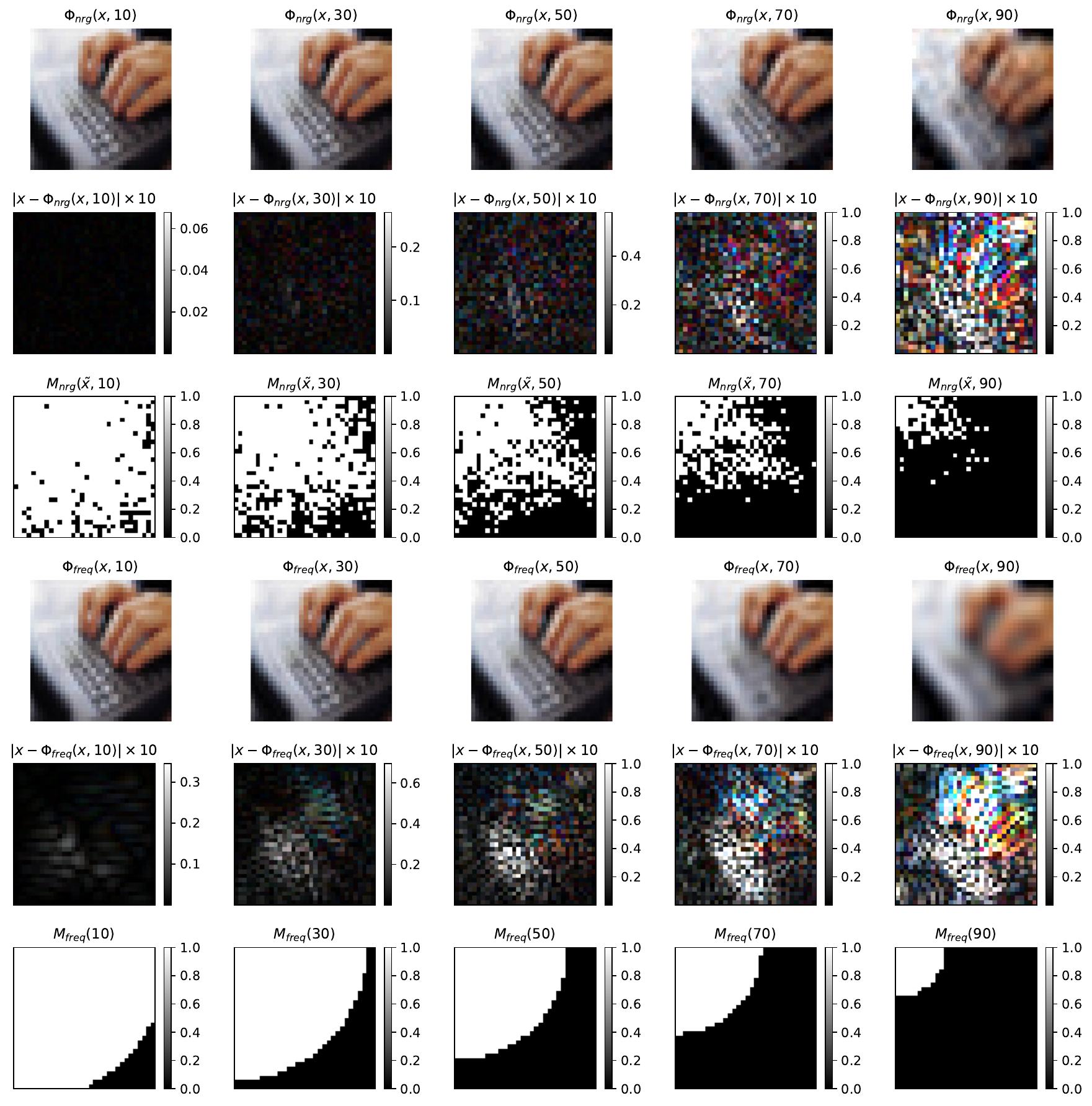}}
    \caption{\textbf{Examples of modified images used in Observation I. (CIFAR100)}
    We use a threshold value of \code{$\text{threshold}=\{10, 30, 50, 70, 90\}$} to modify images based on its magnitude of DCT basis and their freqequency basis. In a), we show the original image $x$ and the magnitude of its DCT basis $\abs{\tx}$ in both linear and log scale.
    In b), we show images modified by removing DCT basis vectors whose magnitudes are in the bottom \code{threshold} percentage (row 1), the differences between the modified images and the original image (row 2), the binary mask used to remove the DCT basis: black means removed (row 3), images modified by removing high-frequency DCT basis vectors (row 4), the differences between the modified images and the original image (row 5) and the binary mask used to remove the DCT basis: black means removed (row 6).
    Notice that the masks in row 6 only depends on the dimension of the images, whereas the masks in row 3 differs from images to images.
    }
    \label{fig:modified_images_cifar100}
\end{figure}

\begin{figure}[t]
    \captionsetup[subfigure]{labelformat=simple, labelsep=period}
    \centering 
    \subfloat[Original Image.]{\includegraphics[width=0.5\linewidth]{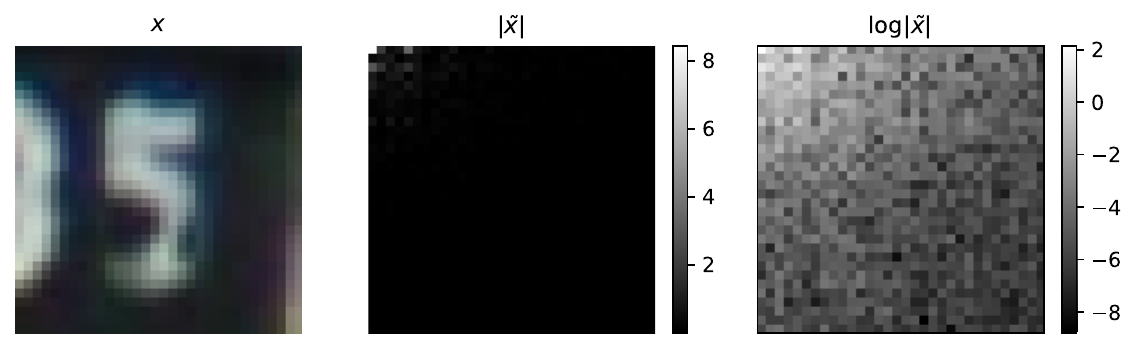}}
    \hfill
    \subfloat[Modified Image.]{\includegraphics[width=0.85\linewidth]{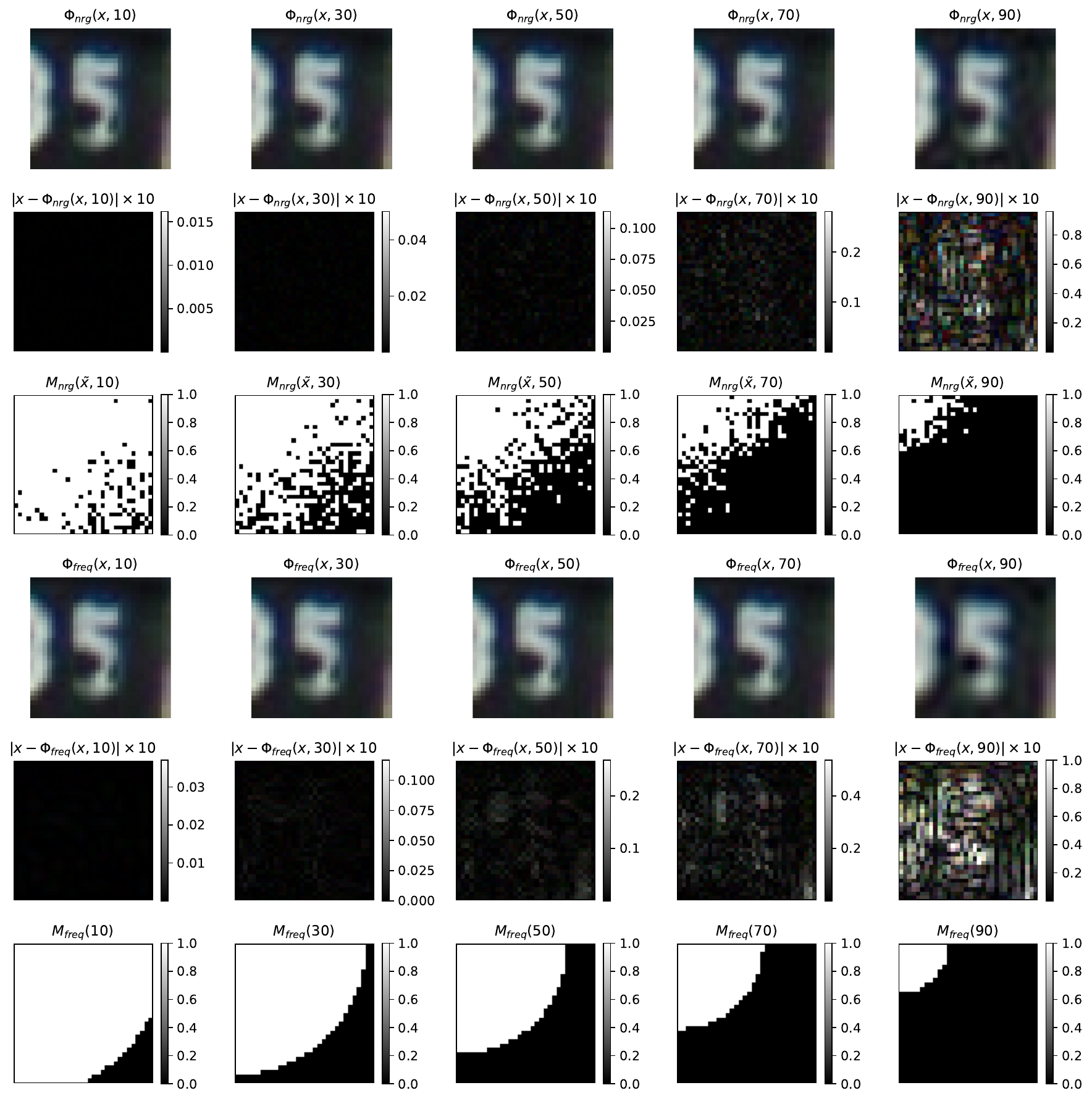}}
    \caption{\textbf{Examples of modified images used in Observation I. (SVHN)}
    We use a threshold value of \code{$\text{threshold}=\{10, 30, 50, 70, 90\}$} to modify images based on its magnitude of DCT basis and their freqequency basis. In a), we show the original image $x$ and the magnitude of its DCT basis $\abs{\tx}$ in both linear and log scale.
    In b), we show images modified by removing DCT basis vectors whose magnitudes are in the bottom \code{threshold} percentage (row 1), the differences between the modified images and the original image (row 2), the binary mask used to remove the DCT basis: black means removed (row 3), images modified by removing high-frequency DCT basis vectors (row 4), the differences between the modified images and the original image (row 5) and the binary mask used to remove the DCT basis: black means removed (row 6).
    Notice that the masks in row 6 only depends on the dimension of the images, whereas the masks in row 3 differs from images to images.
    }
    \label{fig:modified_images_svhn}
\end{figure}

\begin{figure}[t]
    \captionsetup[subfigure]{labelformat=simple, labelsep=period}
    \centering 
    \subfloat[Original Image.]{\includegraphics[width=0.5\linewidth]{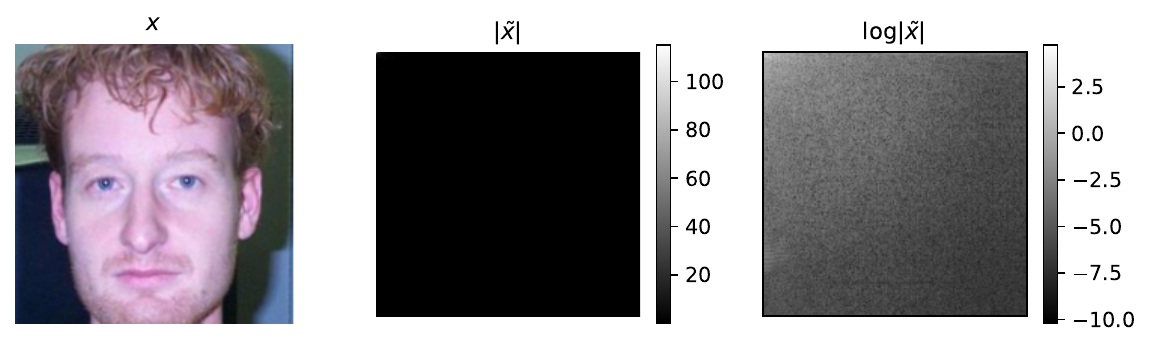}}
    \hfill
    \subfloat[Modified Image.]{\includegraphics[width=0.85\linewidth]{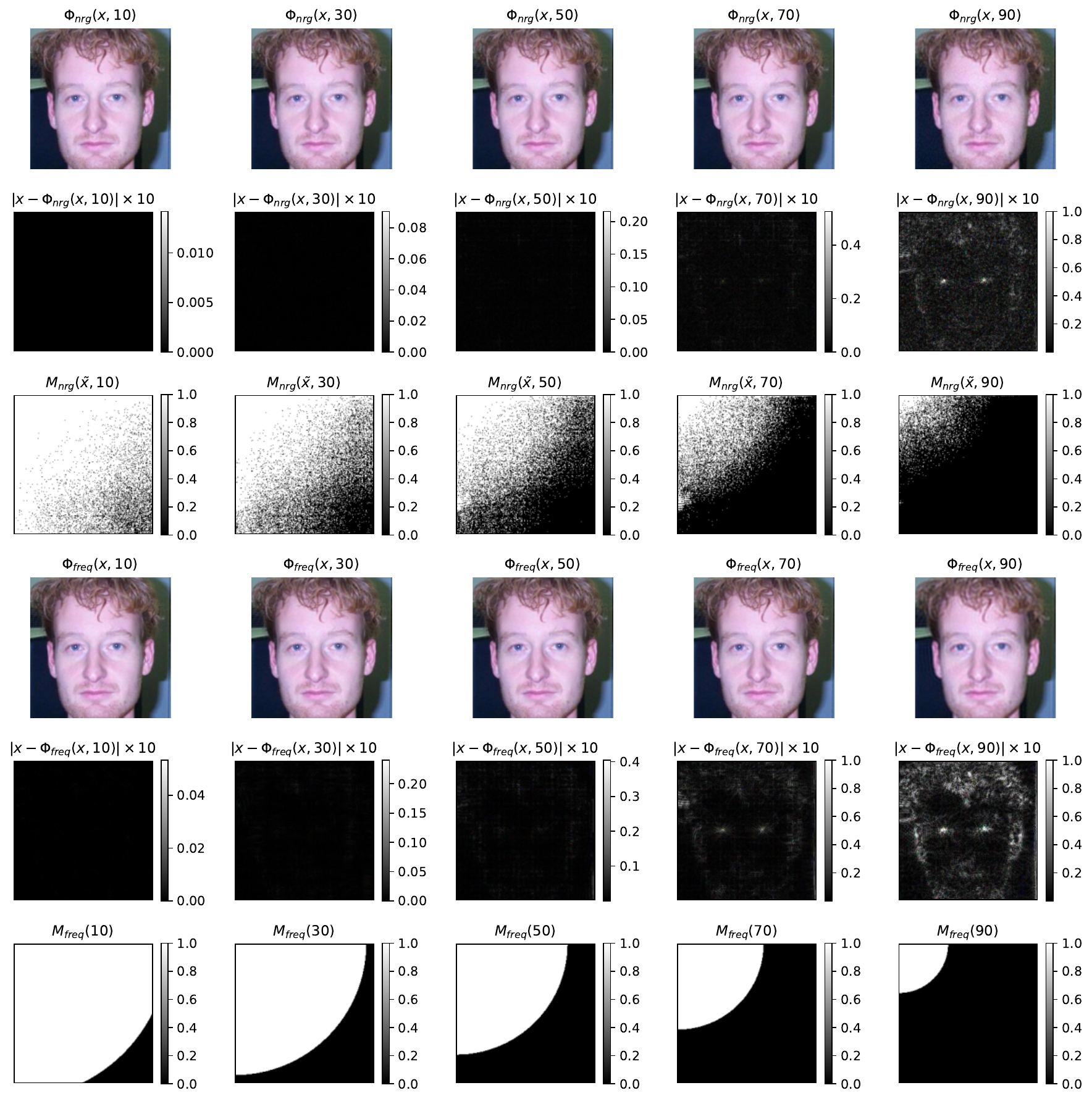}}
    \caption{\textbf{Examples of modified images used in Observation I. (Caltech101)}
    We use a threshold value of \code{$\text{threshold}=\{10, 30, 50, 70, 90\}$} to modify images based on its magnitude of DCT basis and their freqequency basis. In a), we show the original image $x$ and the magnitude of its DCT basis $\abs{\tx}$ in both linear and log scale.
    In b), we show images modified by removing DCT basis vectors whose magnitudes are in the bottom \code{threshold} percentage (row 1), the differences between the modified images and the original image (row 2), the binary mask used to remove the DCT basis: black means removed (row 3), images modified by removing high-frequency DCT basis vectors (row 4), the differences between the modified images and the original image (row 5) and the binary mask used to remove the DCT basis: black means removed (row 6).
    Notice that the masks in row 6 only depends on the dimension of the images, whereas the masks in row 3 differs from images to images.
    }
    \label{fig:modified_images_caltech}
\end{figure}

\begin{figure}[t]
    \captionsetup[subfigure]{labelformat=simple, labelsep=period}
    \centering 
    \subfloat[Original Image.]{\includegraphics[width=0.5\linewidth]{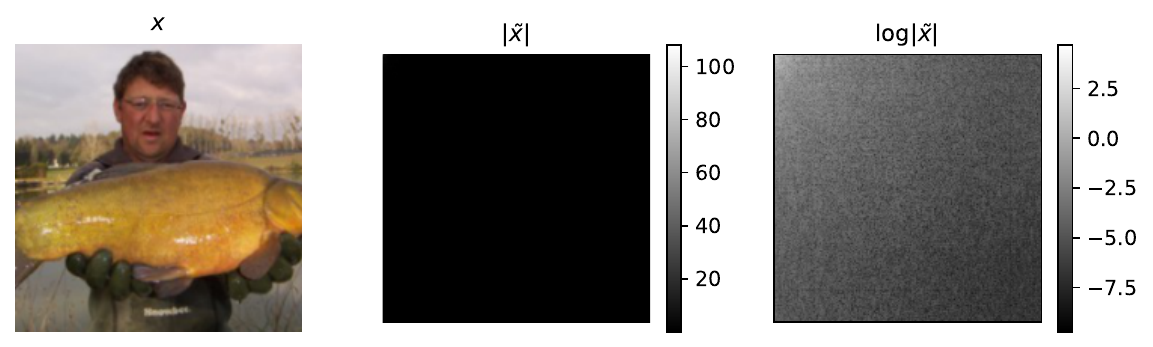}}
    \hfill
    \subfloat[Modified Image.]{\includegraphics[width=0.85\linewidth]{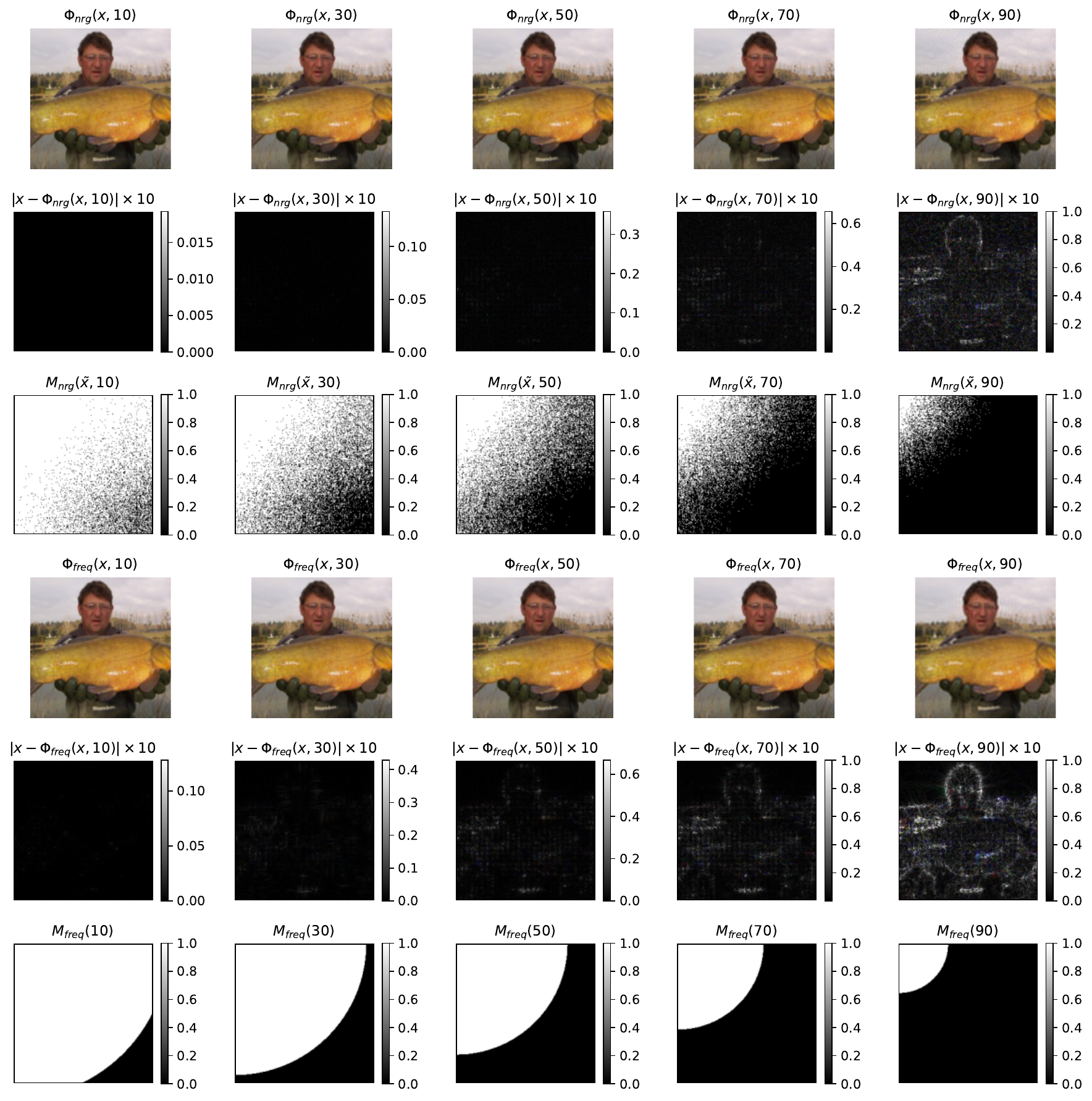}}
    \caption{\textbf{Examples of modified images used in Observation I. (Imagenette)}
    We use a threshold value of \code{$\text{threshold}=\{10, 30, 50, 70, 90\}$} to modify images based on its magnitude of DCT basis and their freqequency basis. In a), we show the original image $x$ and the magnitude of its DCT basis $\abs{\tx}$ in both linear and log scale.
    In b), we show images modified by removing DCT basis vectors whose magnitudes are in the bottom \code{threshold} percentage (row 1), the differences between the modified images and the original image (row 2), the binary mask used to remove the DCT basis: black means removed (row 3), images modified by removing high-frequency DCT basis vectors (row 4), the differences between the modified images and the original image (row 5) and the binary mask used to remove the DCT basis: black means removed (row 6).
    Notice that the masks in row 6 only depends on the dimension of the images, whereas the masks in row 3 differs from images to images.
    }
    \label{fig:modified_images_imagenette}
\end{figure}

\begin{figure}[t]
\captionsetup[subfigure]{labelformat=empty}
\centering 
\subfloat[]{\includegraphics[trim=0 0 18.9cm 0,clip, width=.45\linewidth]{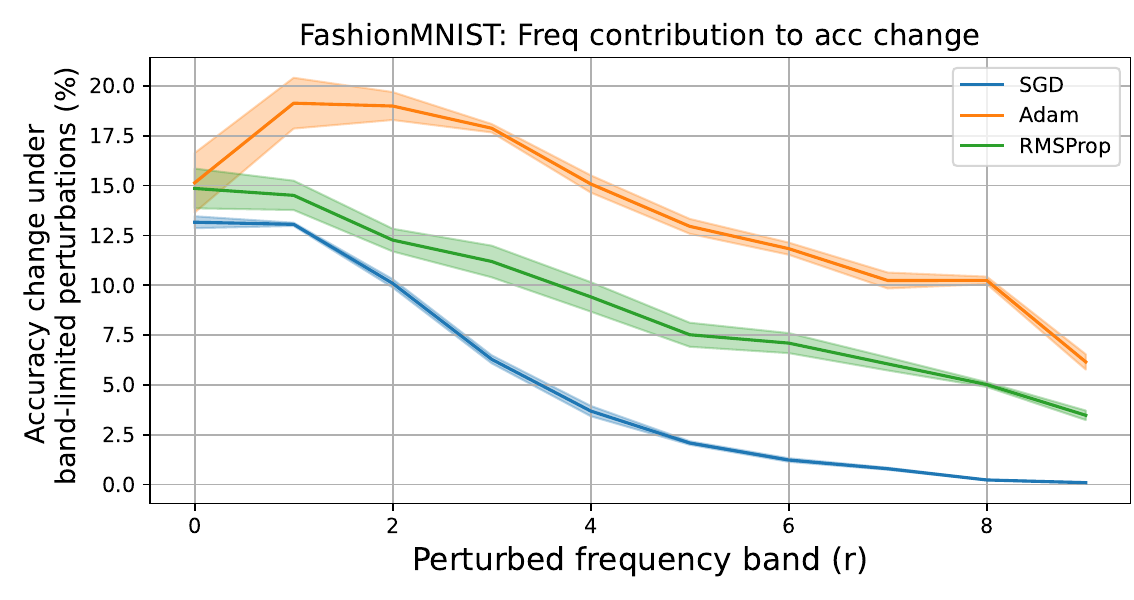}}
\subfloat[]{\includegraphics[trim=0 0 19cm 0,clip, width=.45\linewidth]{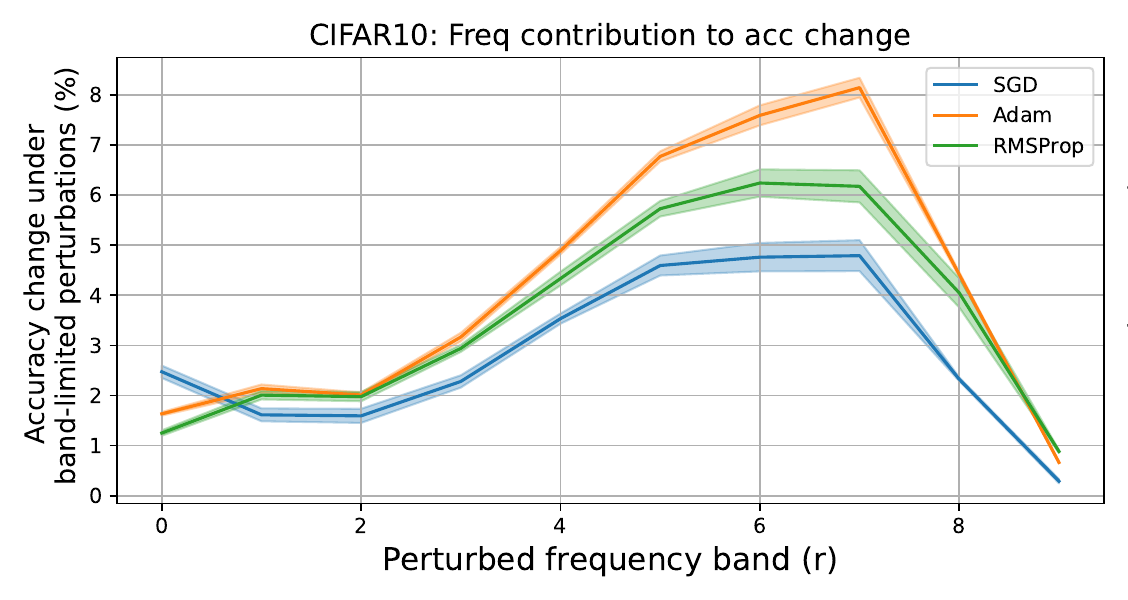}}
\hfill
\subfloat[]{\includegraphics[trim=0 0 19cm 0,clip, width=.45\linewidth]{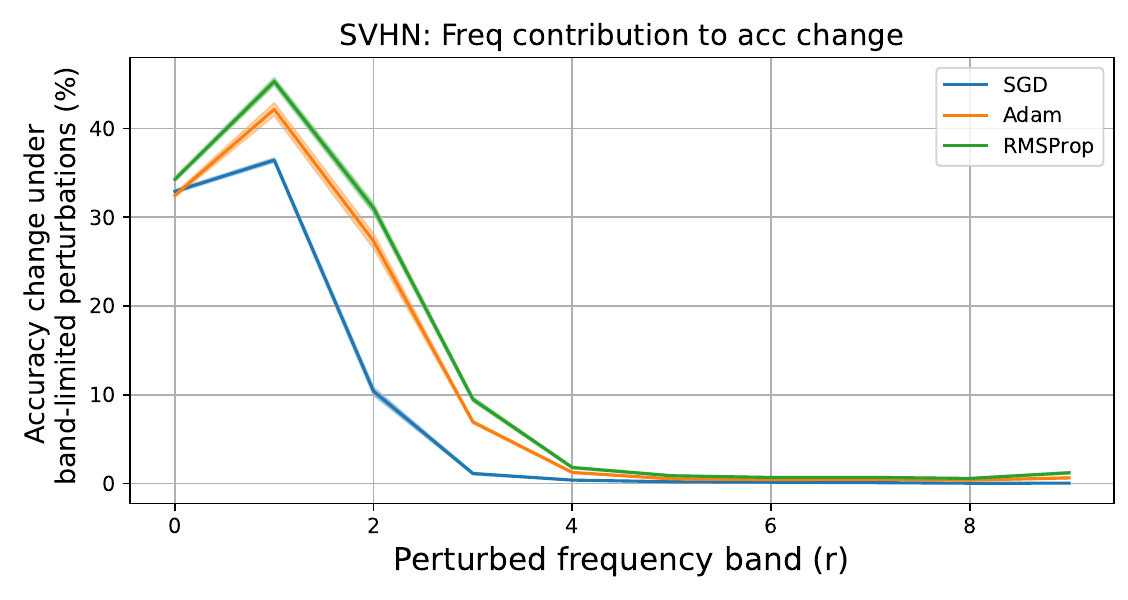}}
\subfloat[]{\includegraphics[trim=0 0 19cm 0,clip, width=.45\linewidth]{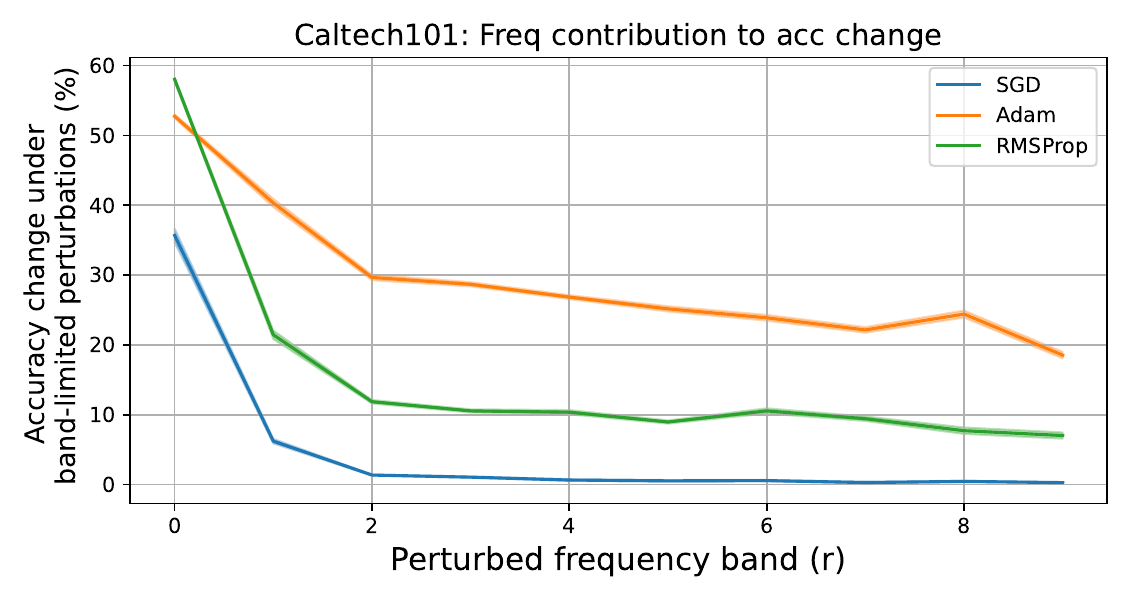}}
\hfill

\caption{\textbf{The effect of band-limited Gaussian perturbations on the model (additional figures).}
Perturbations from the lowest band, i.e., $\Delta x^{(0)}$,  have a similar effect on all the models, despite being trained by different algorithms and exhibiting different robustness properties.
On the other hand, models' responses vary significantly when the perturbation focuses on higher frequency bands.
}
\label{fig:exp_bandlimited_gaussian_additional}
\end{figure}

\begin{figure}[t]
\captionsetup[subfigure]{labelformat=empty}
\centering 
\subfloat[MNIST]{\includegraphics[width=.49\linewidth]{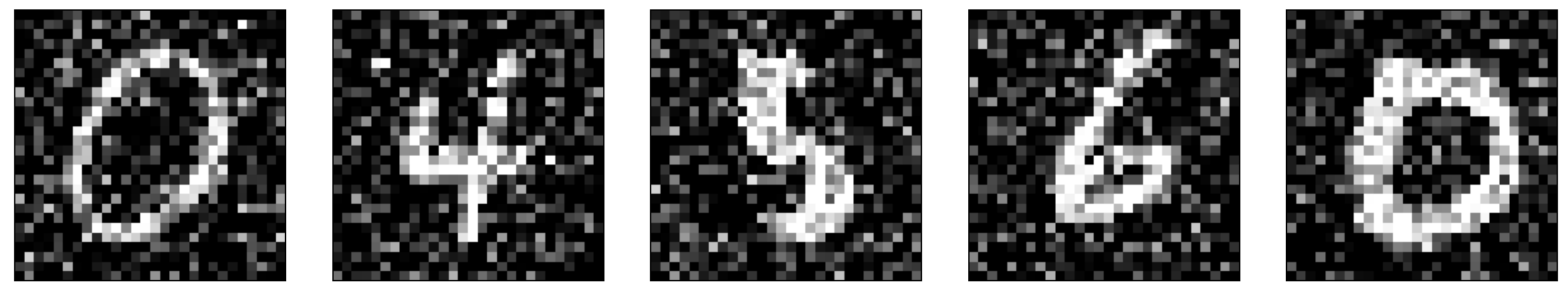}}
\hfill
\subfloat[FashionMNIST]{\includegraphics[width=.49\linewidth]{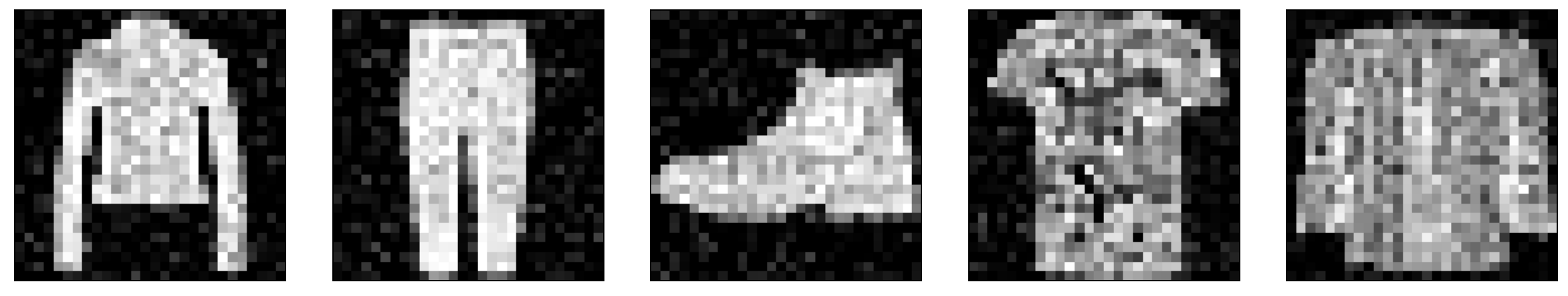}}
\hfill
\subfloat[CIFAR10]{\includegraphics[width=.49\linewidth]{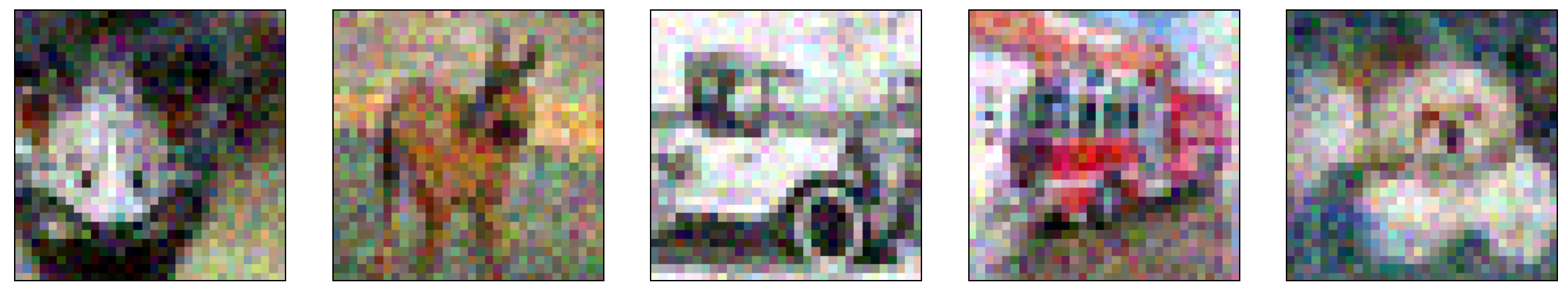}}
\hfill
\subfloat[CIFAR100]{\includegraphics[width=.49\linewidth]{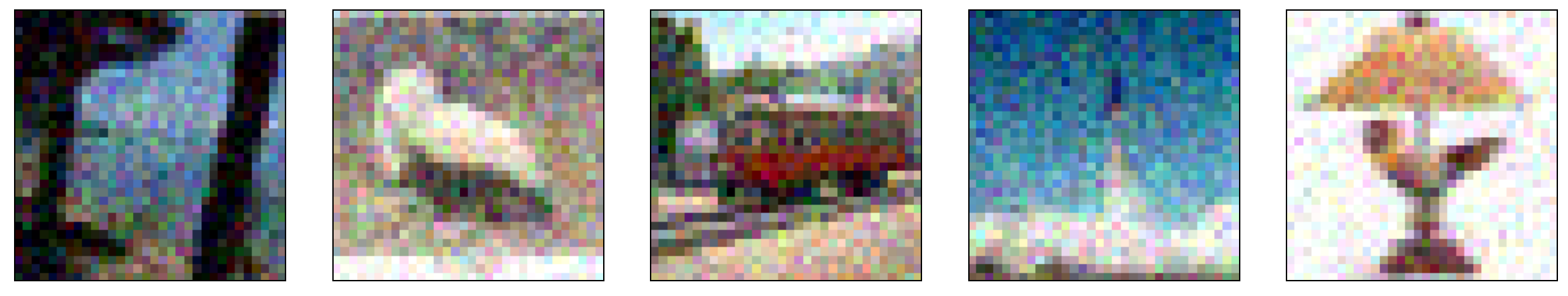}}
\hfill
\subfloat[SVHN]{\includegraphics[width=.49\linewidth]{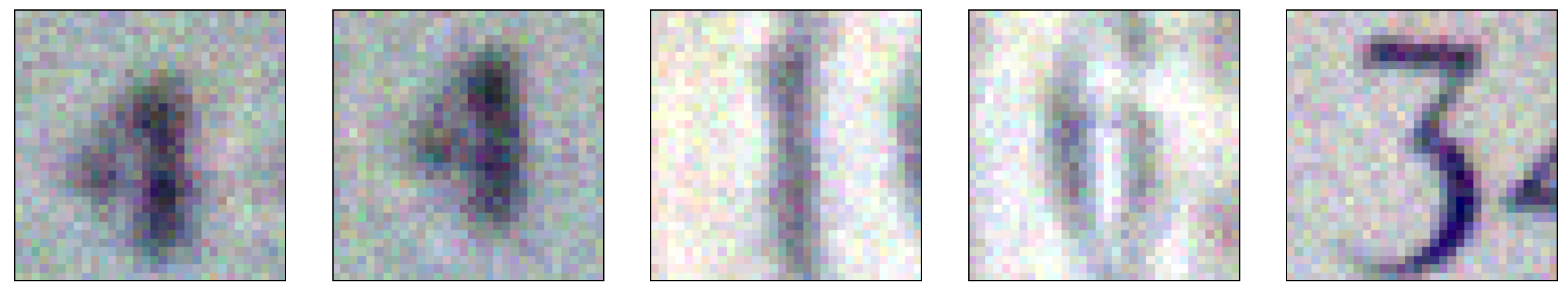}}
\hfill
\subfloat[Caltech101]{\includegraphics[width=.49\linewidth]{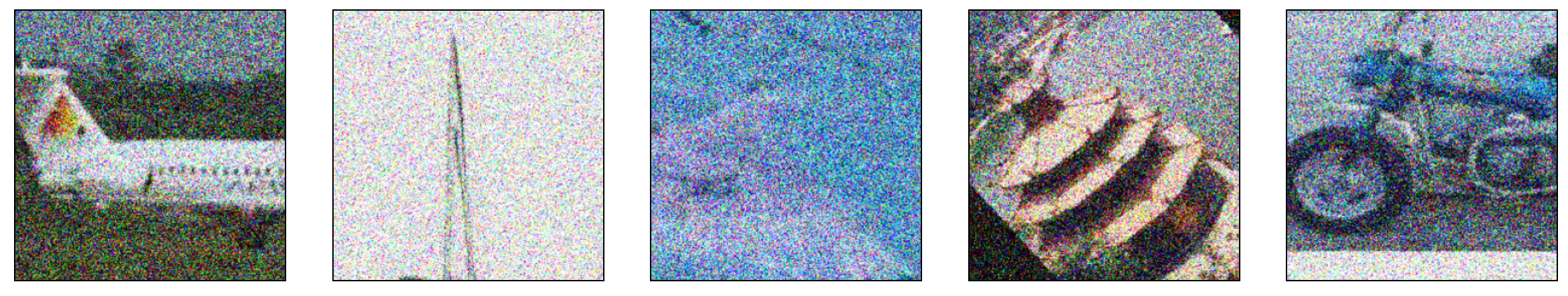}}
\hfill
\subfloat[Imagenette]{\includegraphics[width=.49\linewidth]{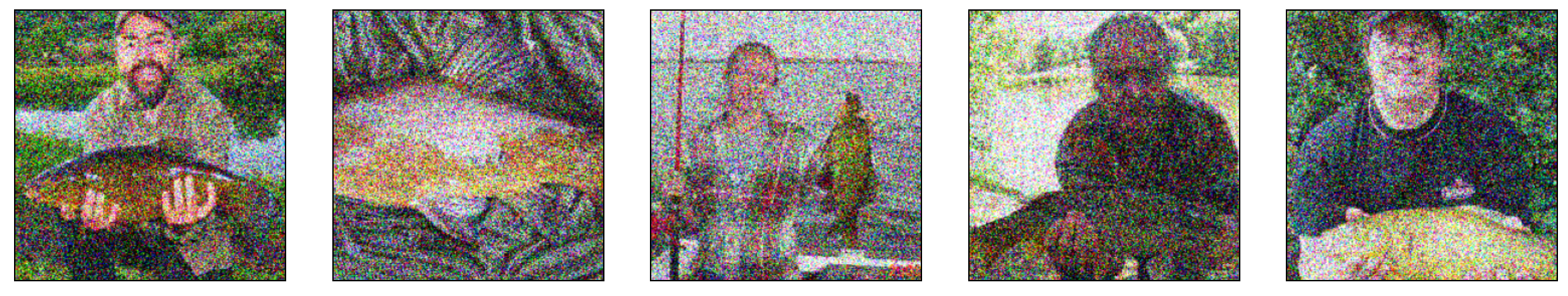}}

\caption{\textbf{Images perturbed by additive Gaussian white noise with different variance.} For each dataset, we select the largest variance value from Table~\ref{table:complete_result_standard_generalization_and_robustness}.
}
\label{fig:gaussian_perturbed_images}
\end{figure}

\begin{figure}[t]
\captionsetup[subfigure]{labelformat=empty}
\centering 
\subfloat[MNIST]{\includegraphics[width=.49\linewidth]{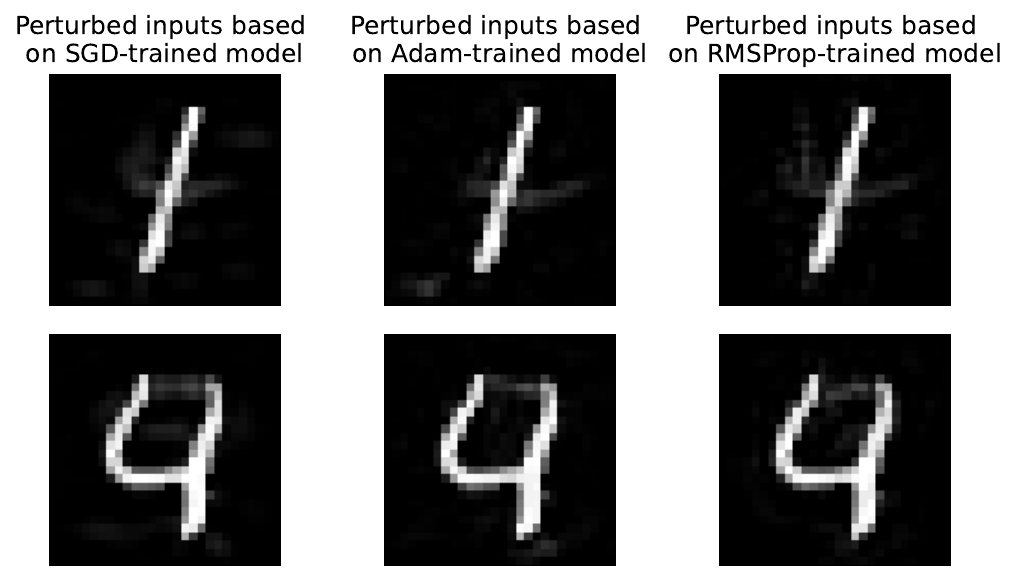}}
\hfill
\subfloat[FashionMNIST]{\includegraphics[width=.49\linewidth]{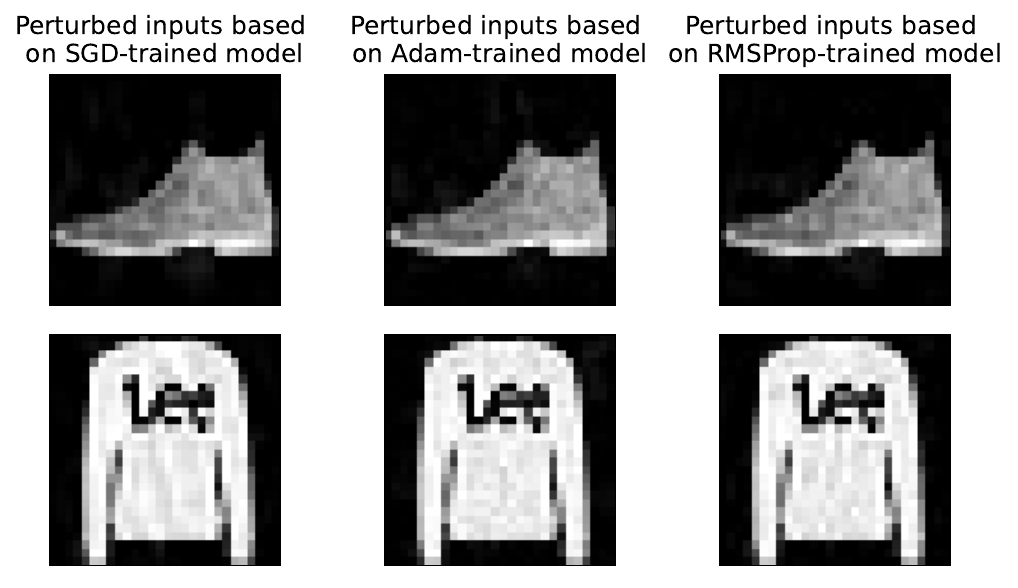}}
\hfill
\subfloat[CIFAR10]{\includegraphics[width=.49\linewidth]{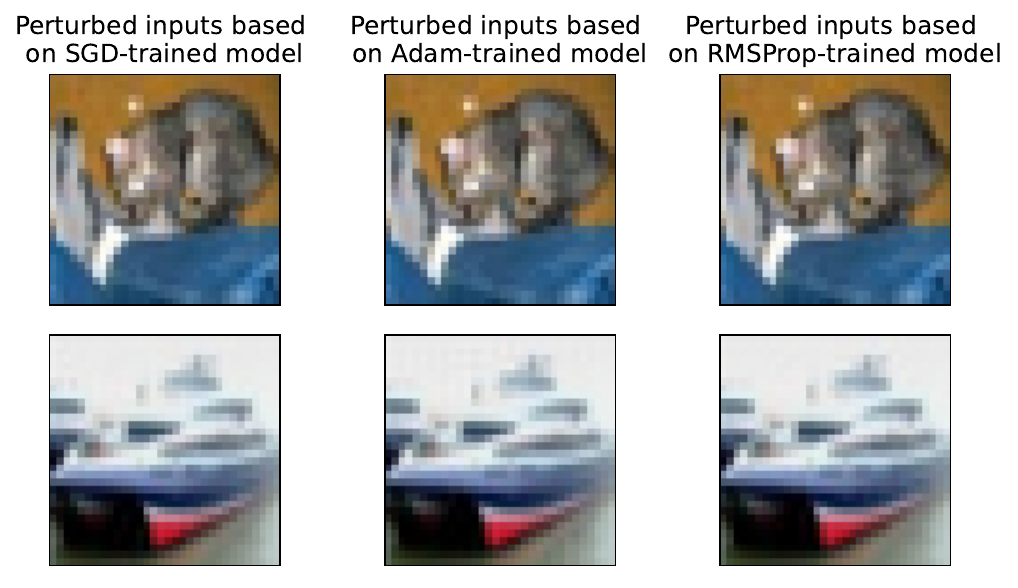}}
\hfill
\subfloat[CIFAR100]{\includegraphics[width=.49\linewidth]{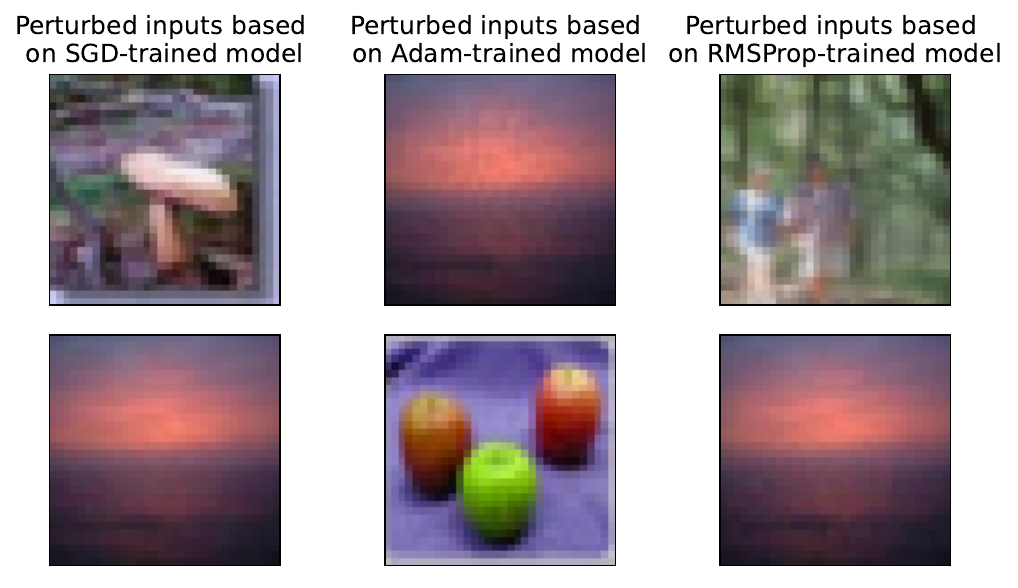}}
\hfill
\subfloat[SVHN]{\includegraphics[width=.49\linewidth]{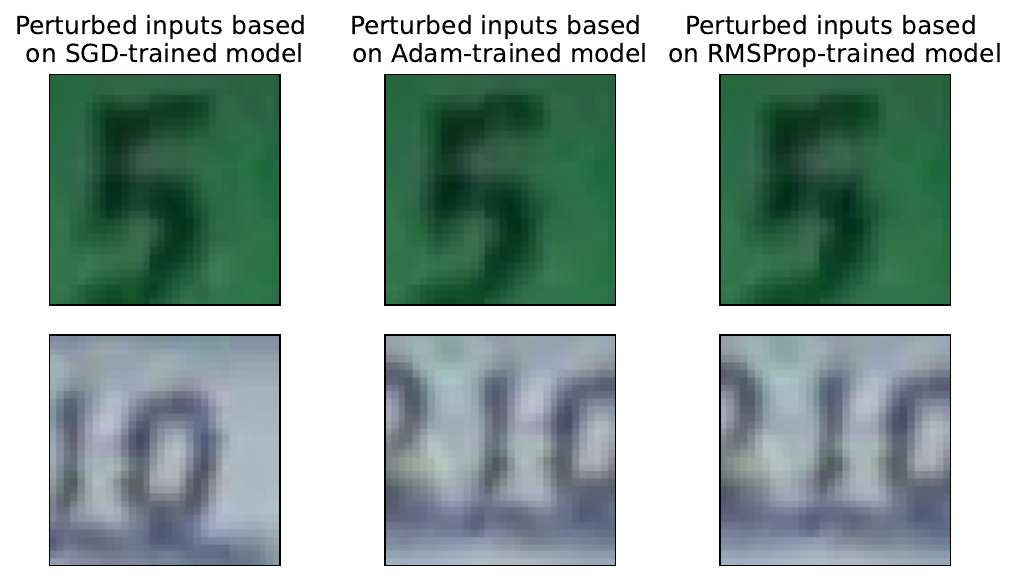}}
\hfill
\subfloat[Caltech101]{\includegraphics[width=.49\linewidth]{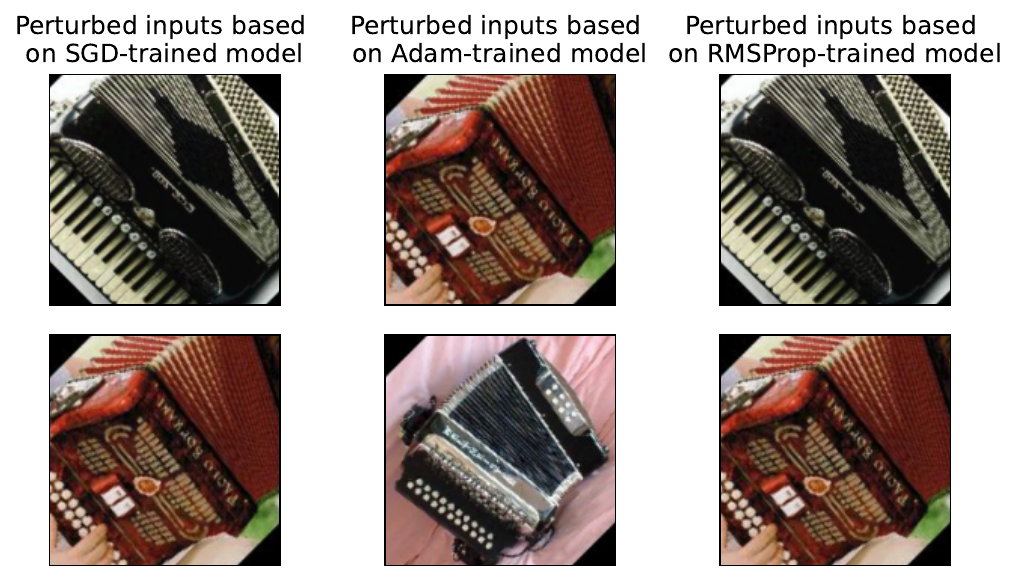}}
\hfill
\subfloat[Imagenette]{\includegraphics[width=.49\linewidth]{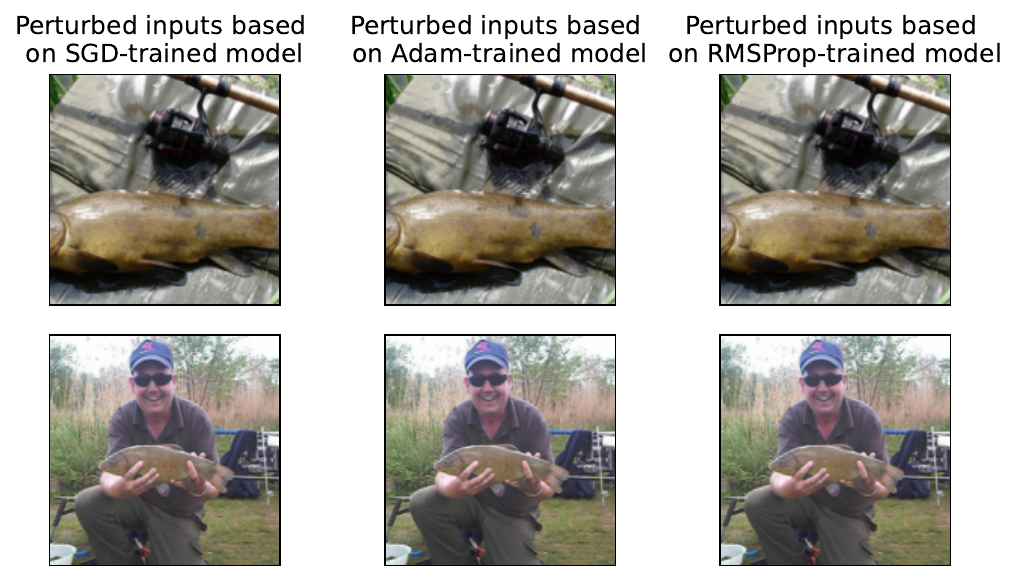}}

\caption{\textbf{Images perturbed by $\ell_2$-norm bounded adversarial perturbation~\citep{croce2020reliable}.} We select the largest $\eps$ value from Table~\ref{table:complete_result_standard_generalization_and_robustness} to generate $\ell_2$ bounded perturbations for images in each dataset. We also compare perturbations generated using models trained by different algorithms.
}
\label{fig:l2_perturbed_images}
\end{figure}

\begin{figure}[t]
\captionsetup[subfigure]{labelformat=empty}
\centering 
\subfloat[MNIST]{\includegraphics[width=.49\linewidth]{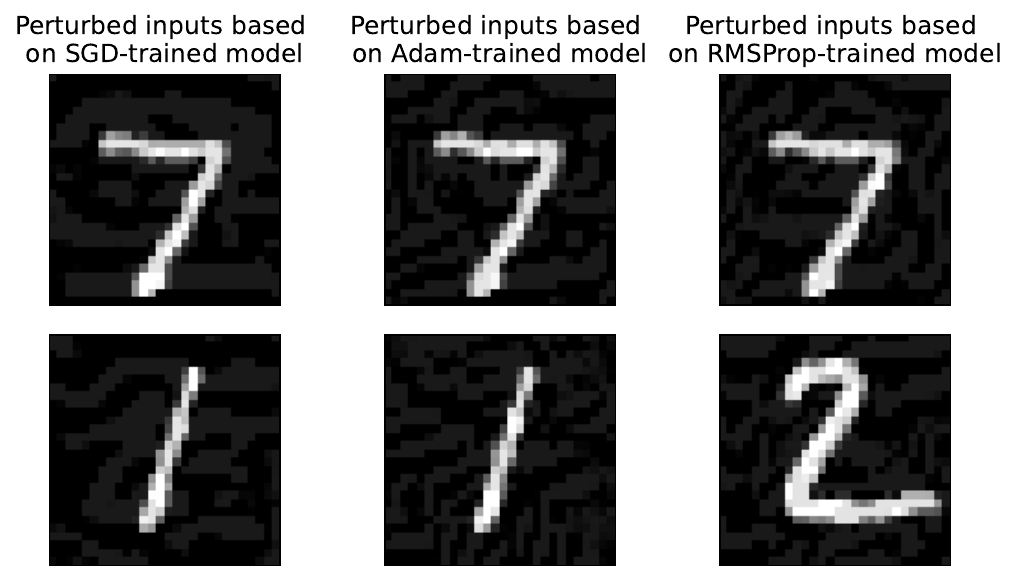}}
\hfill
\subfloat[FashionMNIST]{\includegraphics[width=.49\linewidth]{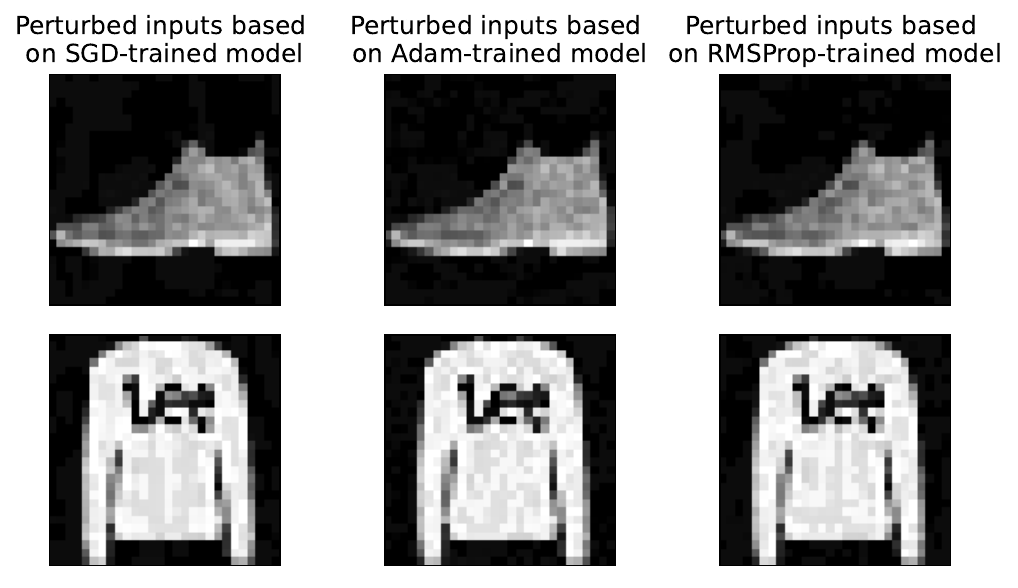}}
\hfill
\subfloat[CIFAR10]{\includegraphics[width=.49\linewidth]{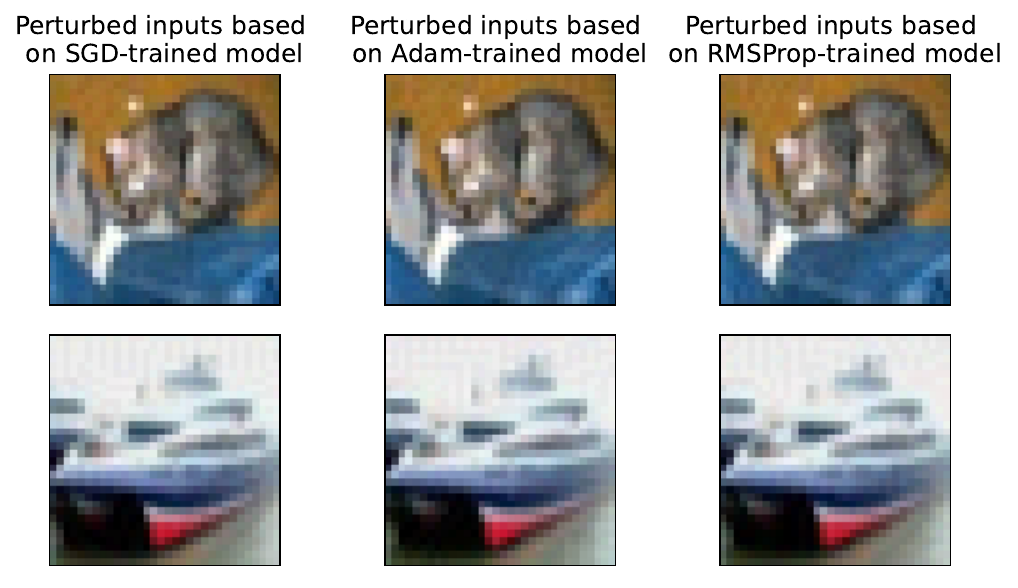}}
\hfill
\subfloat[CIFAR100]{\includegraphics[width=.49\linewidth]{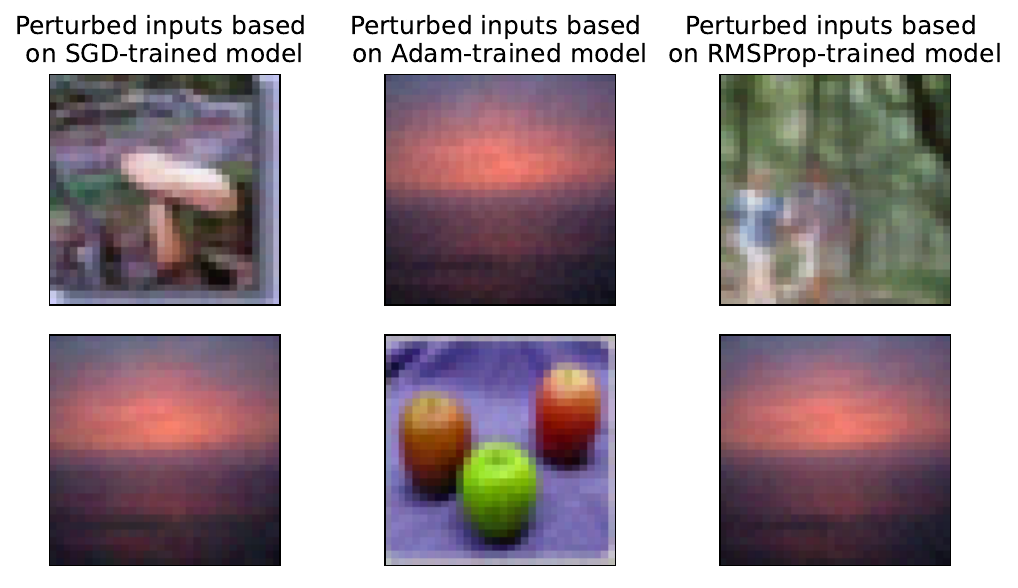}}
\hfill
\subfloat[SVHN]{\includegraphics[width=.49\linewidth]{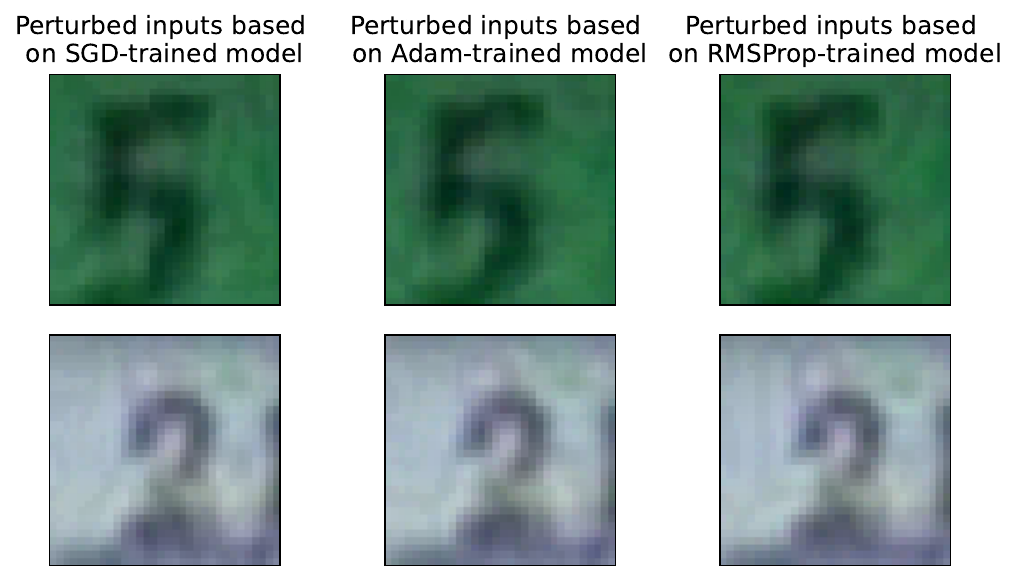}}
\hfill
\subfloat[Caltech101]{\includegraphics[width=.49\linewidth]{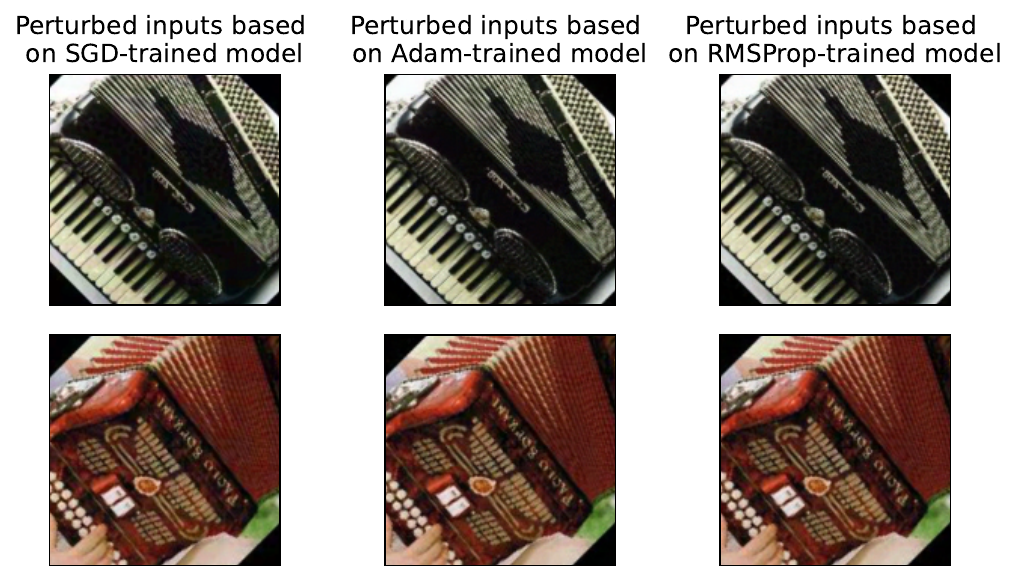}}
\hfill
\subfloat[Imagenette]{\includegraphics[width=.49\linewidth]{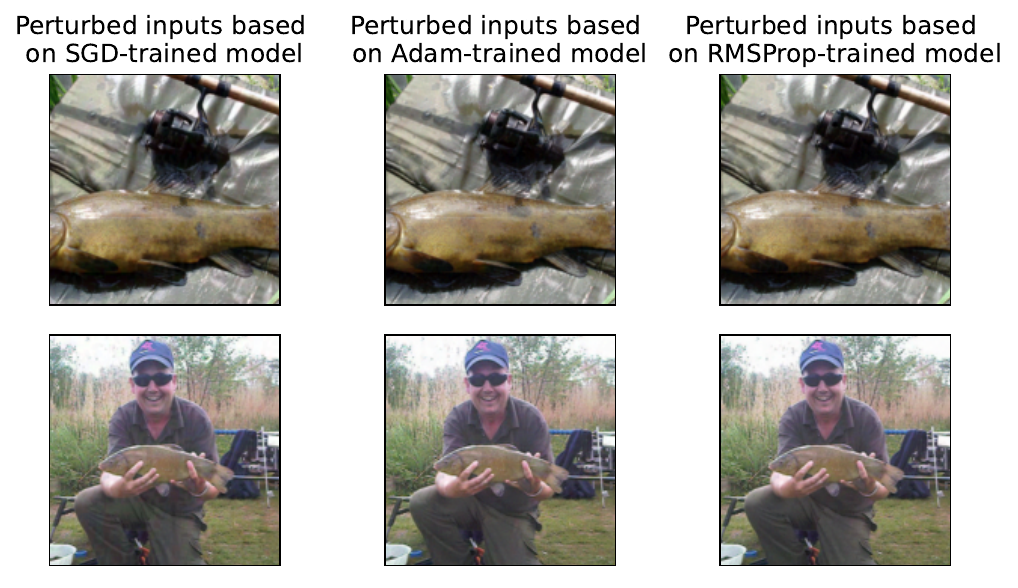}}

\caption{\textbf{Images perturbed by $\ell_\infty$-norm bounded adversarial perturbation~\citep{croce2020reliable}.} We select the largest $\eps$ value from Table~\ref{table:complete_result_standard_generalization_and_robustness} to generate $\ell_\infty$ bounded perturbations for images in each dataset. We also compare perturbations generated using models trained by different algorithms.
}
\label{fig:linf_perturbed_images}
\end{figure}


\end{document}